\def\tr{\mathop{\text{tr}}\kern.2ex}
\long\def\comment#1{}
\def\tr{\mathop{\text{Tr}}}
\def\cS{{\mathcal{S}}}
\def\cD{{\mathcal{D}}}
\def\cP{{\mathcal{P}}}
\def\cN{{\mathcal{N}}}
\def\cO{{\mathcal{O}}}
\def\tr{{\text{Tr}}}
\newcommand{\bel}{\begin{eqnarray}\label}
\newcommand{\eel}{\end{eqnarray}}
\newcommand{\bes}{\begin{eqnarray*}}
\newcommand{\ees}{\end{eqnarray*}}
\newcommand{\red}{\color{red}}
\newcommand{\la}{\langle}
\newcommand{\ra}{\rangle}
\let\emptyset\varnothing
\def \algo1 {OVI-SE }
\def\##1\#{\begin{align}#1\end{align}}
\def\$#1\${\begin{align*}#1\end{align*}}
\title{\LARGE Can Reinforcement Learning Find Stackelberg-Nash Equilibria in General-Sum Markov Games with Myopic Followers?}
\author{Han Zhong\thanks{Peking University. Email: \texttt{hanzhong@stu.pku.edu.cn}} \qquad  Zhuoran Yang\thanks{Princeton University. Email: \texttt{zy6@princeton.edu}} \qquad Zhaoran Wang\thanks{Northwestern University. Email: \texttt{zhaoranwang@gmail.com}}\qquad Michael I. Jordan\thanks{UC Berkeley. Email: \texttt{jordan@cs.berkeley.edu}}}
\date{}
\begin{document}
\maketitle

%{%\red{TODO: add a conclusion
%\begin{itemize}
  %\item compare with Chi's paper (e.g., how to circumvent the gap)
  %\item give more explanations on leader-controlled Markov games (maybe some examples?)
  %\item noise reward (tabular + reward free) 
  %\item add a conclusion
%\end{itemize}
%}}

\begin{abstract}
    We study multi-player general-sum Markov games with one of the players designated as the leader and the other players regarded as followers. 
    In particular, we focus on the class of games where the followers are myopic, i.e., they aim to maximize their instantaneous rewards. 
    %where the state transitions are only determined by the leader's action while the actions of all the players determine their immediate rewards. 
    For such a game, our goal is to find a Stackelberg-Nash equilibrium (SNE), which is a policy  pair $(\pi^*, \nu^*)$ such that (i) $\pi^*$ is the optimal policy for the leader when the followers always play their best response, and (ii) $\nu^*$ is the best response policy of the followers, which is a Nash equilibrium of the followers' game induced by $\pi^*$. We develop sample-efficient reinforcement learning (RL) algorithms for solving for an SNE in both online and offline settings. Our algorithms are optimistic and pessimistic variants of least-squares value iteration, and they are readily able to incorporate function approximation tools in the setting of large state spaces. Furthermore, for the case with linear function approximation,  we prove that our algorithms achieve  sublinear regret and suboptimality under online and offline setups respectively. 
    To the best of our knowledge, we establish the first provably efficient RL algorithms for solving for SNEs in general-sum Markov games with myopic followers. %with leader-controlled state transitions. 
\end{abstract}

\section{Introduction}

Reinforcement learning (RL) has achieved striking empirical successes in solving real-world sequential decision-making problems \citep{ mnih2015human,duan2016benchmarking,silver2016mastering, silver2017mastering, silver2018general, agostinelli2019solving, akkaya2019solving}. Motivated by these successes, multi-agent extensions of RL algorithms have recently become popular in  decision-making problems involving multiple interacting agents \citep{busoniu2008comprehensive, hernandez2018multiagent,  hernandez2019survey, oroojlooyjadid2019review, zhang2019multi}.
Multi-agent RL is often modeled as a Markov game \citep{littman1994markov} where, at each time step, given the state of the environment, each player (agent) takes an action simultaneously, observes her own immediate reward, and the environment evolves into a next state. 
Here both the reward of each player and the state transition depend  on the  actions of all players.  
From the perspective of each player, her goal is to find a policy that maximizes her expected total reward in the presence of other agents. 

In Markov games, depending on the structure of the reward functions, the relationship among the players can be either  collaborative, where each player has the same reward function, or   competitive, where the sum of the reward function is  equal to zero, or mixed, which corresponds to a general-sum game. 
While most of the existing theoretical results focus on the  collaborative or two-player competitive settings, 
the mixed setting is oftentimes more pertinent to real-world multi-agent applications. 

%{\blue{
Moreover, in addition to having diverse reward functions, the players might also have asymmetric roles in the Markov game---the players might be divided into  leaders and followers, where 
the leaders' joint policy determines a general-sum game for the followers.  
Games with such a leader-follower structure is popular in applications such as mechanism design \citep{conitzer2002complexity,roughgarden2004stackelberg, garg2005design,  kang2014incentive}, security games \citep{tambe2011security, korzhyk2011stackelberg}, incentive design \citep{zheng1984stackelberg, ratliff2014social, chen2016caching,  ratliff2020adaptive}, and 
model-based RL \citep{rajeswaran2020game}.
Consider a simplified economic system that consists of a government and a group of companies, where the companies purchase or sell goods, and the government collects taxes from transactions. 
Such a problem can be viewed as a multi-player general-sum game, where the government serves as the leader 
and the companies are followers \citep{zheng2020ai}.
In particular, 
when the government sets a tax rate, the companies form a general-sum game among themselves with a reward function that depends on the tax rate. 
Each company aims to maximize its own revenue, and ideally they jointly achieve a Nash equilibrium (NE) of the induced game. 
The goal of the government might be to achieve maximum social welfare, which can  be measured via certain fairness metrics associated with the revenues of the companies. 
%}}

In multi-player Markov games with such a leader-follower structure, 
the desired solution concept is the Stackelberg-Nash equilibrium (SNE)
\citep{bacsar1998dynamic}.
In the setting where there is a single leader,  
SNE corresponds to a pair of  leader's  policy $\pi^*$  and followers' joint policy $\nu^*$ that satisfies the following two properties: (i) when the leader  adopts $\pi^*$, $\nu^*$ is the best-response policy of the followers, i.e., $\nu^*$ is a Nash equilibrium of the followers' subgame induced by $\pi^*$;  and (ii) $\pi^*$ is the 
optimal policy of the leader assuming the followers always adopt the best response.  

We are interested in finding an SNE in a multi-player Markov game when 
the reward functions and Markov transition kernel are unknown. In particular, we focus on the setting with a single leader and multiple \emph{myopic} followers. That is, the followers at any step of the game do not take into account the future rewards, but only the rewards in the  current step. The formal definition of myopic followers is given in \S \ref{sec:def:myopic:follower}. This setting is a natural formalization of many real-world problems such as marketing and supply chain management. For example, in a market, the leader is an established firm and the followers are entrants. The entrants are not sure whether the firm is going to exist in the future, so they might just want to maximize instantaneous rewards. 
%, which means they are myopic.
 See \citet{li2017review,kanska2021dynamic} and references therein for more examples. 
%In particular, we focus on the setting with a single leader and the state transitions only depend on the leader's actions. That is, the followers' actions only affect the rewards received by the leader and followers. 
For such a game, we are interested in the following question: 
 \begin{center}
 	Can we develop sample-efficient reinforcement learning methods that provably find Stackelberg-Nash equilibria in general-sum Markov games with myopic followers?
 \end{center}
To this end, we consider both online and offline RL settings, where in the former, we learn the SNE in a trial-and-error fashion by interacting with the environment and generating data, and in the latter, we learn the SNE from a  given dataset that is collected a priori. 
For the online setting,  as the transition model is unknown, to achieve sample efficiency %the equilibrium-finding algorithm needs to also take in to consideration the exploration-exploitation tradeoff. 
the equilibrium-finding algorithm needs to take the exploration-exploitation tradeoff into consideration.
Although a similar challenge has been studied in zero-sum Markov game, it seems unclear how to incorporate  popular  exploration mechanisms  such as optimism in the face of uncertainty \citep{sutton2018reinforcement} into the SNE solution concept.
Meanwhile, in the offline setting, as the RL agent has no  control of data collection, 
it is necessary to design an RL algorithm with theoretical guarantees for an arbitrary dataset that might not be sufficiently explorative.  
 
 {\noindent \bf Our contributions} \quad  Our contributions are three-fold. 
 First, for the episodic general-sum Markov game with myopic followers, under  the online and offline  settings respectively, we propose  optimistic and  pessimistic variants of the least-squares value iteration (LSVI) algorithm. 
 In particular, in a version of LSVI, we estimate the optimal action-value function of the leader via least-squares regression  and construct an estimate of the SNE by solving for the SNE of the multi-matrix game for each state, whose payoff matrices are given  by the leader's estimated action-value function and the followers' reward functions. 
 Moreover, we add a UCB exploration bonus to the least-squares solution to achieve optimism in the online setting. 
 In the offline setting, pessimism is achieved by subtracting a penalty function constructed using the offline data which is equal to the negative bonus function. 
 These algorithms are readily able to incorporate function approximators and we showcase a version with linear function approximation.  
 Second, in the online setting, we prove that our optimistic LSVI algorithm achieves a sublinear $\tilde \cO( H^2 \sqrt{d^3   K })$ regret, where $K$ is the number of episodes,  $H$ is the horizon, $d$ is the dimension of the feature mapping, and $\tilde \cO(\cdot )$ omits logarithmic terms. 
 Finally, in the offline setting, we establish an upper bound on the suboptimality of the proposed algorithm for an arbitrary dataset with $K$ trajectories. 
 Our upper bound yields a sublinear $\tilde \cO( H^2 \sqrt{d^3 / K})$ rate  as long as the dataset has sufficient coverage over the trajectory induced by the desired SNE.

 \iffalse 
{\color{red}
\begin{itemize}

\item reinforcement learning is successful in solving complicated single agent decision making problems

\item When it comes to multi-agent problems, the reward functions are heterogeneous, and various solution concepts are considered, Nash equilibrium, Correlated equilibrium

\item We consider leader-follower structure. The goal is to find the stackelberg-Nash equilibrium. 

\item To this end, we focus on the single controller case, and ask the question: can we develop probably efficient RL method for solving stackelberg-Nash equilibrium?

We propose a least squares value iteration algorithms

\item Describe the algorithm

\item The algorithm has certain regret upper bound

\end{itemize}
}

\fi

\section{Related Work}\quad  
In this section, we provide an overview of some of the related work on a number of different topics in game theory and machine learning. 
 
{\noindent \bf RL  for  solving NE in Markov games}  \quad 
Our work adds to the vast body of existing literature on RL for finding Nash equilibria in  Markov games. 
In particular, 
there is a line of work that generalizes single-agent RL algorithms to Markov games under either a generative model \citep{azar2013minimax} or offline settings with well-explored datasets 
 \citep{littman2001friend,greenwald2003correlated,hu2003nash,lagoudakis2012value,hansen2013strategy,perolat2015approximate,jia2019feature,sidford2020solving, cui2020minimax,fan2020theoretical, daskalakis2021independent, zhao2021provably}. 
These works all aim to find the Nash equilibrium and their algorithms are generalizations of single-agent RL algorithms. 
In particular, \citet{littman2001friend,littman1994markov, greenwald2003correlated} and \cite{hu2003nash} generalize   Q-learning \citep{watkins1992q} to Markov games and establish asymptotic convergence guarantees.  
\citet{jia2019feature, sidford2020solving, zhang2020model} and \cite{cui2020minimax} propose variants of  Q-learning or value iteration \citep{shapley1953stochastic}  algorithms under a generative model setting. 
Also, 
\citet{perolat2015approximate,fan2020theoretical} study the sample efficiency of  fitted value iteration \citep{munos2008finite} for zero-sum Markov games in an offline setting. 
They assume the behavior policy is explorative in the sense that the concentrability coefficients \citep{munos2008finite} are uniformly bounded. 
Under similar assumptions, \citet{daskalakis2021independent} and \cite{zhao2021provably} study the sample complexity of policy gradient \citep{sutton1999policy} under the well-explored offline setting. 

In the online setting, 
there is a recent line of research that proposes provably efficient RL algorithms for zero-sum Markov games. 
See, e.g.,  
\citet{wei2017online, bai2020near,bai2020provable,liu2020sharp, tian2020provably, xie2020learning, chen2021almost} and the references therein. 
These works propose optimism-based algorithms and establish sublinear regret guarantees for finding NE. 
Among these works,  our work is particularly related to \citet{xie2020learning} and \citet{chen2021almost}, whose algorithms also incorporate linear function approximation.

Compared with these aforementioned works, we focus on solving the Stackelberg-Nash equilibrium, which involves a bilevel structure and is fundamentally different from the Nash equilibrium. Thus, our work is not directly comparable. 

{\noindent \bf Learning Stackelberg games} \quad 
As for solving Stackelberg-Nash  equilibrium, most of the existing results focus on the normal form game, which is equivalent to our Markov game with $H = 1$. 
\citet{letchford2009learning, blum2014learning} and \citet{peng2019learning} study  learning of Stackelberg equilibria with
a best response oracle. \citet{fiez2019convergence} 
study the local convergence of first-order methods for finding
Stackelberg equilibria in general-sum games with  differentiable reward functions, and \citet{ghadimi2018approximation, chen2021single} and \citet{hong2020two} analyze the global convergence of  first-order methods for achieving global optimality of bilevel optimization.
A more closely related paper is \citet{bai2021sample}, which studies the matrix Stackelberg game with bandit feedback. 
This work also studies an RL extension where the leader has a finite action set and the follower is faced with an MDP specified by the leader's action.   
In comparison, we assume the leader knows the reward functions and the main challenge lies in the unknown transitions. Thus, our setting is different from that in \citet{bai2021sample}.  
A more direclty relevant  work is   \citep{bucarey2019value}, who derive a Bellman equation and a value iteration algorithm for solving for SNE in Markov games. 
In comparison, we establish modifications of least-squares value iteration that are tailored to  online and offline settings. 
 
 \iffalse 
{\bf Online RL for Markov Games} \quad  
In addition, our work is also related to the recent line of research on 
Without such unrealistic assumptions, \citet{bai2020near,bai2020provable,liu2020sharp} propose value-based methods with $\tilde{\cO}(\sqrt{T})$ regret for the tabular episodic zero-sum simultaneous-move Markov games. Moreover, \citet{tian2020provably} studies zero-sum Markov games in an online agnostic setting with policy-based algorithms and achieves an $\tilde{\cO}(T^{3/4})$ regret. Recently, \citet{xie2020learning} studies the zero-sum simultaneous-move Markov games with linear function approximation and proposes a value-based algorithm which achieves $\tilde{\cO}(\sqrt{T})$ regret. We also remark that \citet{chen2021almost} studies another variant of linear Markov games and achieves near-optimal regret. It can be shown that these two variants are incomparable in the sense that one does not imply the other.

{\noindent \bf Stackelberg }
\han{Compare with \citet{bai2021sample}}
%\citet{bai2021sample} focuses on the turn-based games and aims to find the Stackelberg equilibria in general-sum Games. Moreover, they extends to the result to the bandit-RL games where the leader’s action determines an episodic MDP for the follower.

\fi

{\noindent \bf Learning general-sum Markov games} \quad \citet{liu2020sharp} present the first result of finding correlated equilibrium (CE) and coarse correlated equilibrium (CCE) in general-sum Markov games. However, their centralized algorithms suffer from the curse of many agents, that is, their sample complexity scales exponentially in the number of agents. Recently, \citet{song2021can,mao2021provably} and \citet{jin2021v}  escape the curse of many agents via the decentralized structure of the V-learning algorithm \citep{bai2020near}. 

{\noindent \bf Related single-agent RL methods}\quad Our work is also related to recent work that achieves sample efficiency in single-agent RL under an online setting. 
See, e.g.,  \citet{azar2017minimax,jin2018q,yang2019sample,zanette2019tighter,jin2020provably,zhou2020nearly,ayoub2020model,yang2020reinforcement,zanette2020learning,zanette2020frequentist,zhang2020almost,zhang2020reinforcement,agarwal2020model} and the references therein. 
In particular,  following the \emph{optimism in the face of uncertainty} principle, these works achieve near-optimal regret under either tabular or function approximation settings. 
Meanwhile, for offline RL with an arbitrary dataset, various recent works propose to utilize pessimism 
for achieving robustness.  
See, e.g., 
  \citet{yu2020mopo,kidambi2020morel,kumar2020conservative,jin2020pessimism,liu2020provably,buckman2020importance,rashidinejad2021bridging}
  and the references therein. 
 These aforementioned works all focus on the single-agent setting and we prove that optimism and pessimism also play an indispensable role in achieving sample efficiency in finding SNE.

 {\noindent \bf Notation.}
We denote by $\| \cdot \|_2$ the $\ell_2$-norm of a vector or the spectral norm of a matrix. We also let $\|\cdot\|_{\text{op}}$ denote the matrix operator norm. Furthermore, for a positive definite matrix $A$, we denote by $\|x\|_A$ the weighted norm $\sqrt{x^\top Ax}$ of a vector $x$. Also, we denote by $\Delta(\cA)$ the set of probability distributions on a set $\cA$. For some positive integer $K$, $[K]$ denotes the index set $ \{1, 2, \cdots, K\}$. 

\section{Preliminaries}

In this section, we introduce the formulation of the  general-sum simultaneous-move 
Markov games and the notion of a Stackelberg-Nash equilibrium.  We also present the linear Markovian structure that we study in this paper. 

\subsection{General-Sum Simultaneous-Move Markov Games}
In this setting, two levels of hierarchy in decision making are considered: one leader $l$ and $N$ followers $\{f_i\}_{i \in [N]}$. Specifically, we define an episodic version of general-sum simultaneous-moves Markov game by the tuple $(\cS, \cA_l, \cA_f = \{\cA_{f_i}\}_{i \in [N]}, H, r_l, r_f = \{r_{f_i}\}_{i \in [N]}, \cP)$, where $\cS$ is the state space, $\cA_l$ and $\cA_f$ are the sets of actions of the leader and the followers, respectively, $H$ is the number of steps in each episode, $r_l = \{r_{l, h}: \cS \times \cA_l \times \cA_{f} \rightarrow [-1, 1]\}_{h=1}^H$ and  $r_{f_i} = \{r_{f_i, h}: \cS \times \cA_l \times \cA_{f} \rightarrow [-1, 1]\}_{h=1}^H$ are reward functions of the leader and the followers, respectively, and $\cP = \{\cP_h: \cS \times \cA_l \times \cA_{f} \times \cS \rightarrow [0, 1]\}_{h=1}^H$ is a collection of transition kernels. Here $\cA_l \times \cA_f = \cA_l \times \cA_{f_1} \times \cdots \times \cA_{f_N}$. Throughout this paper, we also let $\star$ be some element in $\{l, f_1, \cdots, f_N\}$. Finally, for any $(h, x, a) \in [H] \times \cS \times \cA_l$ and $b = \{b_i \in \cA_{f_i}\}_{i \in [N]}$, we use the shorthand $r_{\star, h}(x, a, b) = r_{\star, h}(x, a, b_{1}, \cdots, b_{N})$ and $\cP_h(\cdot \,|\, x, a, b) = \cP_h(\cdot \,|\, x, a, b_{1}, \cdots, b_{N})$.
%Throughout this paper, for simplicity, we assume that 
%$\cA_l = \cA_{f_1} = \cdots = \cA_{f_N} = \cA$ and that 
%the reward $(r_{l}, \{r_{f_i}\}_{i \in [N]})$ are deterministic functions taking value in $[-1, 1]$.

\vskip 4pt
\noindent{\bf Policy and Value Function.}
%\subsection{Policy and Value Function}
A stochastic policy $\pi = \{\pi_h : \cS \rightarrow \Delta(\cA_l) \}_{h = 1}^H$ of the leader is a set of probability distributions over actions given the state. Meanwhile, a stochastic joint policy of the followers is defined by $\nu = \{\nu_{f_i}\}_{i \in [N]}$, where $\nu_{f_i} = \{\nu_{f_i, h} : \cS \rightarrow \Delta(\cA_{f_i}) \}_{h = 1}^H$. We use the notation $\pi_h(a\,|\,x)$ and $\nu_{f_i, h}(b_i\,|\,x)$ to denote the probability of taking action $a \in \cA_l$ or $b_i \in \cA_{f_i}$ for state $x$ at step $h$ under policy $\pi, \nu_{f_i}$ respectively. Throughout this paper, for any $\nu = \{\nu_{f_i}\}_{i \in [N]}$ and $b = \{b_i\}_{i \in [N]}$, we use the shorthand $\nu_h(b \,|\, x) = \nu_{f_1, h}(b_1 \,|\, x) \times \cdots \times \nu_{f_N, h}(b_N \,|\, x)$.

Given policies $(\pi, \nu = \{\nu_{f_i}\}_{i \in [N]})$, the action-value (Q) and state-value (V)  functions for the leader and followers are defined by 
{\small
\begin{equation}
\begin{aligned} \label{eq:def:value:leader}
Q_{\star, h}^{\pi, \nu}(x, a, b) = \EE_{\pi, \nu, h, x, a, b} \bigg[\sum_{t = h}^H r_{\star, h}(x_t, a_t, b_t)  \biggr], \quad  V_{\star, h}^{\pi, \nu}(x) = \EE_{a \sim \pi_h(\cdot\,|\, x), b \sim \nu_{h}(\cdot\,|\,x)} Q_{\star, h}^{\pi, \nu}(x, a, b),  
\end{aligned}
\end{equation}}
where the expectation $\EE_{\pi, \nu, h, x, a, b}$ is taken over state-action pairs induced by the policies $(\pi, \nu = \{\nu_{f_i}\}_{i \in [N]})$ and the transition probability, when initializing the process with the triplet $(s, a, b = \{b_i\}_{i \in [N]})$ at step $h$. For notational simplicity, when $h, x, a, b$ are clear from the context, we omit $h, x, a, b$ from $\EE_{\pi, \nu, h, x, a, b}$.
By the definition in  \eqref{eq:def:value:leader}, we have the Bellman equation
\begin{equation}
\begin{aligned} \label{eq:bellman}
%&V_{l, h}^{\pi, \nu_{f_1}, \cdots, \nu_{f_N}} = \la Q_{l, h}^{\pi, \nu_{f_1}, \cdots, \nu_{f_N}}, \pi_h \times \nu_{f_1} \times \cdots \times \nu_{f_N} \ra, \, Q_{l, h}^{\pi, \nu_{f_1}, \cdots, \nu_{f_N}} = r_{l, h} + \PP_h V_{l, h+1}^{\pi, \nu_{f_1}, \cdots, \nu_{f_N}}, \\
V_{\star, h}^{\pi, \nu} = \la Q_{\star, h}^{\pi, \nu}, \pi_h \times \nu_h \ra_{\cA_l \times \cA_f}, \quad Q_{\star, h}^{\pi, \nu} = r_{\star, h} + \PP_h V_{\star, h+1}^{\pi, \nu}, \quad \forall \star \in \{l, f_1, \cdots, f_N\} ,
\end{aligned} 
\end{equation}
where $\pi_h \times \nu_h$ represents  $\pi_h \times \nu_{{f_1}, h} \times \cdots \times \nu_{f_N, h}$. 
%Here $\la \cdot, \cdot \ra$ denotes the inner product over $\cA_l \times \cA_{f}$. 
%where the subscript is omitted subsequently if it is clear from the context. 
Here $\PP_h$ is the operator which is defined by 
\#
(\PP_h f)(x, a, b) = \EE[f(x') \,|\, x' \sim \cP_h(x' \,|\, x, a, b)]
\#
for any function $f : \cS \rightarrow \RR$ and $(x, a, b) \in \cS \times \cA_l \times \cA_f$.

\subsection{Stackelberg-Nash Equilibrium} \label{sec:def:stackelberg:nash}
%In general-sum game theory, there are two kinds of noncooperative solution concepts: the Nash equilibrium concept, where the players are symmetric; and the Stackelberg equilibrium concept, where one or more players have different levels of hierarchy with respect to the other participants. In this sense, the Stackelberg–Nash equilibrium concept can be understood as a game where one or more players with higher hierarchy may apply their own policy to other players with lower hierarchy. And, the rest of them (followers) react under the Nash equilibrium concept playing a general-sum game according to the policy established by the leader. 

Given a leader policy $\pi$, a Nash equilibrium \citep{nash20167} of the followers is a joint policy $\nu^* = \{\nu_{f_i}^*\}_{i \in [N]}$, such that for any $x \in \cS$ and $(i, h) \in [N] \times [H]$
\# \label{eq:def:nash}
V_{f_i, h}^{\pi, \nu^*}(x) \ge V_{f_i, h}^{\pi, \nu_{f_i}, \nu_{f_{-i}}^*}(x), \quad \forall \nu_{f_i}.
\#
Here $-i$ represents all indices in $[N]$ except $i$.
	%Markov game $(\cS, \cA_l, \{\cA_{f_i}\}_{i \in N}, H, r_l, \{r_{f_i}\}_{i \in [N]}, \cP)$ 
%When there is more than one followers in the game, we study the Stackelberg-Nash equilibrium. Specifically, 
For each leader policy $\pi$, we denote the set of best-response policies of the followers by BR$(\pi)$, which is defined by
\# \label{eq:def:best:response2}
\text{BR}(\pi) = \{ \nu \,|\, \nu \text{ is the NE of the followers given the leader policy } \pi\}.
\#
Given the best-response set BR$(\pi)$, we denote by $\nu^*(\pi)$ the best-case responses, which break ties in favor of the leader. This is also known as optimistic tie-breaking \citep{breton1988sequential,bucarey2019stationary}. We will discuss   pessimistic tie-breaking  \citep{conitzer2006computing} in \S \ref{sec:pess:tie:breaking}. Specifically, we define $\nu^*(\pi)$ by 
\# \label{eq:def:against2}
\nu^*(\pi) = \{\nu \in \text{BR}(\pi) \,|\, V_{l, h}^{\pi, \nu}(x) \ge V_{l, h}^{\pi, \nu'}(x), \forall x \in \cS, h \in [H], \nu' \in \text{BR}(\pi)\}.
\#
The Stackelberg-Nash equilibrium for the leader is the ``best response to the best response.'' In other words,  we want to find a leader policy $\pi$ that maximizes the  value function under the assumption that the followers always adopt $\nu^*(\pi)$, i.e., 
\# \label{eq:def:se2}
\text{SNE}_l = \{\pi  \,|\, V_{l, h}^{\pi, \nu^*(\pi)}(x) \ge V_{l, h}^{\pi', \nu^*(\pi')}(x), \forall x \in \cS, h \in [H], \pi'  \}
\#
%Then we give a formal definition of Stackelberg Equilibrium.
A Stackelberg-Nash equilibrium of the general-sum game is a policy pair $(\pi^*, \nu^* = \{\nu_{f_i}^*\}_{i \in [N]})$ such that $\nu^* \in \nu^*(\pi^*)$ and $\pi^* \in \text{SNE}_l$. 
 
Our goal is to find the Stackelberg equilibrium---the leader’s optimal strategy under the assumption that  the followers always play their best response (Nash equilibrium) to the leader. Equivalently, we need to solve the following optimization problem:
\# \label{eq:def:bilevel:opt:problem}
\max_{\pi, \nu} V_{l, 1}^{\pi, \nu}(x) \qquad \text{s.t.} \, \nu \in \text{BR}(\pi).
\#
We study this challenging bilevel optimization problem in both the online setting (Section~\ref{sec:online}) and the offline setting (Section \ref{sec:offline}) respectively.

\subsection{Linear Markov Games with Myopic Followers} \label{sec:def:myopic:follower}

\noindent{\bf Linear Markov Games.}
We study linear Markov games \citep{xie2020learning}, where the transition dynamics are linear in a feature map. 
\begin{assumption}[Linear Markov Games] \label{assumption:linear}
	Markov game $(\cS, \cA_l, \cA_f = \{\cA_{f_i}\}_{i \in [N]}, H, r_l, r_f = \{r_{f_i}\}_{i \in [N]}, \cP)$ is a linear Markov game if there exists a feature map, $\phi: \cS \times \cA_l \times \cA_f \rightarrow \mathbb{R}^d$, such that 
	\$
	 \cP_h(\cdot \,|\, x,a,b) = \la \phi(x, a, b), \mu_h(\cdot) \ra
	\$
	for any $(x, a, b) \in \cS \times \cA_l \times \cA_{f}$ and $h \in [H]$. Here $\mu_h = (\mu_h^{(1)}, \mu_h^{(2)}, \cdots, \mu_h^{(d)})$ are $d$ unknown signed measures over $\cS$. Without loss of generality, we assume that $\|\phi(\cdot, \cdot, \cdot)\|_2 \le 1$ and $\|\mu_h(\cS)\| \le \sqrt{d}$ for all $h \in [H]$. 
	\end{assumption}

This linear Markov game is an extension of the linear MDP studied in \citet{jin2020provably} for single-agent RL. Specifically, when the followers play fixed and known policies, the linear Markov game reduces to the linear MDP. We also remark that \citet{chen2021almost} recently  study another variant of linear Markov games. These  two variants are incomparable in the sense that one does not imply the other.

%there exists a feature map $\phi': \cS \times \cA_l \times \cA_{f}  \rightarrow \mathbb{R}^d$ such that 
%\$
%\cP_h(\cdot \,|\, x,a,b) = \la \phi'(x, a, b), \mu_h(\cdot) \ra
%\$ 
%for any $(x, a, b) \in \cS \times \cA_l \times \cA_{f}$ and $h \in [H]$. Here $\mu_h = (\mu_h^{(1)}, \mu_h^{(2)}, \cdots, \mu_h^{(d)})$ are $d$ unknown signed measures over $\cS$. 

\vspace{4pt}
\noindent{\bf{Myopic Followers.}} 
Throughout this paper, we make the following assumption.
\begin{assumption}[Myopic followers] \label{assumption:myopic}
	We assume that the followers are myopic.
Specifically, the followers at any step of the game do not consider the future rewards, but only the instantaneous rewards. Formally, given a leader's policy $\pi$, the NE of the myopic followers is defined by the joint policy $\nu^* = \{\nu_{f_i}^*\}_{i \in [N]}$, such that for any $x \in \cS$ and $(i, h) \in [N] \times [H]$
\# \label{eq:def:nash:myopic}
r_{f_i, h}^{\pi, \nu^*}(x) \ge r_{f_i, h}^{\pi, \nu_{f_i}, \nu_{f_{-i}}^*}(x), \quad \forall \nu_{f_i}.
\#
\end{assumption}

In other words, at each state $x$, 
the followers play a normal form game where the payoff matrices are determined only by the immediate reward functions and the leader's policy. 
Then, with slight abuse of notation, the best response set of the leader's policy $\pi$ is 
\# \label{eq:def:best:response:myopic}
\text{BR}(\pi) = \{\nu \,|\, \nu \text{ is the NE of the myopic followers given the leader policy } \pi\}.
\#
The best-case response $\nu^*(\pi)$ and Stackelberg-Nash equilibria $\text{SNE}_l$ follow the definitions in \eqref{eq:def:against2} and \eqref{eq:def:se2}.

\vspace{4pt}
\noindent{\bf{Leader-Controller Linear Markov Games.}}
A special case of the Markov games with myopic followers is leader-controller Markov games \citep{filar2012competitive,bucarey2019stationary}, where the future state only depends on the current state and the leader's action. Such a setting is also well-motivated; one can consider a game where the leader is the government that dictates prices and the followers are companies. This is a leader-controller Markov game because the future state (price) is determined by the current state (price) and the leader (government). Formally, it holds that  
$\cP_h( \cdot \,|\,  x, a, b) = \cP_h(\cdot \,|\, x, a)$  
for any $(x, a, b) \in \cS \times \cA_l \times \cA_{f}$ and $h \in [H]$. Hence, with slight abuse of notation, it is natural to make the following assumption.
\begin{assumption}[Leader-Controller Linear Markov Games] \label{assumption:leader:controller}
The Markov game $(\cS, \cA_l, \cA_f = \{\cA_{f_i}\}_{i \in [N]}, H, r_l, r_f = \{r_{f_i}\}_{i \in [N]}, \cP)$ is a leader-controller linear Markov games if we assume the existence of a feature map $\phi: \cS \times \cA_l \rightarrow \RR^d$ such that
\# \label{eq:def:leader:controller}
\cP_h( \cdot \,|\,  x, a, b) = \la \phi(s, a), \mu_h(\cdot) \ra,
\#
for any $(s, a, b) \in \cA \times \cA_l \times \cA_f$, where $\|\phi(\cdot, \cdot)\|_2 \le 1$ and $\|\mu_h(\cS)\| \le \sqrt{d}$ for all $h \in [H]$. 
\end{assumption}

Notably, Markov games with myopic followers subsume leader-controller Markov games as a special case. Specifically, since the followers' policies cannot affect the following state, then the NE defined in \eqref{eq:def:nash} is the same as \eqref{eq:def:nash:myopic}, which further implies that the best-response set defined in \eqref{eq:def:best:response2} is the same as \eqref{eq:def:best:response:myopic}.

%In particular, a tabular Markov game corresponds to the linear Markov game with $d = |\cS| \times |\cA|^2$ and the feature map $\phi(x, a, b)$ being the indicator vector for the tuple $(x, a, b) \in \cS \times \cA \times \cA$. 

%We remark that \citet{xie2020learning} study another variant of linear Markov games, where the transition kernel can be written as $\cP_h(x'\,|\, x, a, b) = \varphi(x, a, b)^\top$, 
%We also remark that \citet{chen2021almost} study another variant of linear Markov games. It can be shown that the two variants are incomparable in the sense that one does not imply the other.

\section{Main Results for the Online Setting} \label{sec:online}
%In this section, we consider the online setting, where we only control the leader and play against an arbitrary follower. Our goal is to maximize the reward of the leader. In what follows, we formally define the problem setup and objectives, and then present our algorithm and provide theoretic guarantees.

In this section, we study the online setting, where a central controller controls one leader $l$ and $N$ followers $\{f_i\}_{i \in [N]}$. Our goal is to learn a Stackelberg-Nash equilibrium. In what follows, we formally describe the setup and learning objectives, and then present our algorithm and provide theoretical guarantees.

\subsection{Setup and Learning Objective}
We consider the setting where the reward functions $r_l$ and $r_{f} = \{r_{f_i}\}_{i \in [N]}$ are revealed to the learner before the game. This is reasonable since in practice the reward functions are usually artificially designed. Moreover, we focus on the episodic setting. Specifically, a Markov game is played for $K$ episodes, each of which consists of $H$ timesteps. At the beginning of the $k$-th episode, the leader and followers determine their policies $(\pi^k, \nu^k = \{\nu_{f_i}^k\}_{i \in [N]})$, and a fixed initial state $x_1^k = x_1$ is chosen. Here we assume a fixed initial state for ease of presentation---our subsequent results can be readily generalized to the setting where $x_1^k$ is picked from a fixed distribution. The game proceeds as follows. At each step $h \in [H]$, the leader and the followers observe state $x_h^k \in \cS$ and pick their own actions $a_h^k \sim \pi_h^k(\cdot \,|\, x_h^k)$ and $b_h^k = \{b_{i, h}^k \sim \nu_{f_i, h}^k(\cdot \,|\, x_h^k)\}_{i \in [N]}$. Subsequently, the environment transitions to the next state $x_{h+1}^k \sim \cP_h(\cdot \,|\, x_h^k, a_h^k, b_{h}^k)$. Each episode terminates after $H$ timesteps. 

\vskip 4pt
\noindent{\bf Learning Objective.} 
\iffalse
By the definition in \eqref{eq:def:best:response2}, given a leader's policy, the best response for the followers is the Nash equilibrium of followers' game induced by this leader's policy. Recall the definition of Nash equilibrium in \eqref{eq:def:nash}, for any policies $(\pi, \nu = \{\nu_{f_i}\}_{i \in [N]})$, it is natural to define the following objective to measure the suboptimality of $\nu_{f_i}$: 
\$
\text{SubOpt}_{f_i}(x) = V_{f_i, 1}^{\pi, \nu^*(\pi)}(x) - V_{f_i, 1}^{\pi, \nu_{f_i}, \nu^*_{-i}(\pi)}(x) . 
\$
Meanwhile, we evaluate the performance of the leader's policy $\pi$ by the following suboptimality gap:
\$
\text{SubOpt}_l(x) = V_{l, 1}^{\pi^*, \nu^*}(x) - V_{l, 1}^{\pi, \nu^*(\pi)}(x).
\$
Putting these two suboptimality gaps together, we formally define the regret as follows.
\fi 
%Recall that $(\pi^*, \nu^* = \{\nu_{f_i}^*\}_{i \in [N]})$ is the Stackelberg-Nash equilibrium defined in Section \ref{sec:def:stackelberg:nash}.
Let $(\pi^k, \nu^k = \{\nu_{f_i}^k\}_{i \in [N]})$ denote the policies executed by the algorithm in the $k$-th episode. By the definition of the bilevel optimization problem in \eqref{eq:def:bilevel:opt:problem}, we expect that $\nu^k \in \text{BR}(\pi^k)$ and that $V_{l,1}^{\pi^*, \nu^*}(x_1^k) - V_{l, 1}^{\pi^k, \nu^k}(x_1^k)$ is small for any $k \in [K]$. Hence, we evaluate the suboptimality of our algorithm by the regret, which is defined as    
%\begin{definition}[Regret]  \label{def:regret2}
    %Let $(\pi^k, \nu^k = \{\nu_{f_i}^k\}_{i \in [N]})$ denote the policies executed by the algorithm in the $k$-th episode. After a total of $K$ episodes, the regret is defined as
%\begin{equation} 
%\begin{aligned} 
\# \label{eq:def:regret2}
\text{Regret}(K) = \sum_{k = 1}^K V_{l, 1}^{\pi^*, \nu^*}(x_1^k) - V_{l, 1}^{\pi^k, \nu^k}(x_1^k).
\#
%& \text{Regret}_l(K) = \sum_{k = 1}^K V_{l, 1}^{\pi^*, \nu^*}(x_1^k) - V_{l, 1}^{\pi^k, \nu^k}(x_1^k) ,\\
%& \text{Regret}_{f_i}(K) = \sum_{k = 1}^K V_{f, 1}^{\pi^k, \nu^*(\pi^k)}(x_1^k) - V_{f, 1}^{\pi^k, \nu_{f_i}^k, \nu_{f_{-i}}^*}(x_1^k) , \quad  \forall i \in {N}  
%\end{aligned}
%\end{equation}
%\end{definition}
The goal is to design algorithms with regret that is sublinear in $K$, and polynomial in $d$ and $H$. Here $K$ is the number of episodes, $d$ is the dimension of the feature map $\phi$, and $H$ is the episode horizon.

\subsection{Algorithm}
We now present our algorithm, Optimistic Value Iteration to Find Stackelberg-Nash Equilibria (OVI-SNE), which is given in Algorithm \ref{alg2}.

%At a high level, our algorithm in each episode consists of two passes over all steps. In $k$-th episode, the first pass (line 4-11) updates the parameters $(\Lambda_{h}^k, w_{h}^k)$ that are used to form the Q-functions $Q_{h}^k$, and then constructs the policies $(\pi_h^k, \nu_h^k = \{\nu_{f_i}\}_{i \in N})$ for the leader and the followers based on this estimated Q-function. The second pass (line 12-16) executes the policies obtained in the first pass to play the game.
At a high level, in each episode our algorithm first constructs the policies for all players through backward induction with respect to the timestep $h$ (Line \ref{line:for:begin}-\ref{line:for:end}), and then executes the policies to play the game (Line \ref{line:execute:begin}-\ref{line:execute:end}).

In detail, at the $h$-th step of the $k$-th episode, OVI-SNE estimates the leader's Q-function based on the $(k - 1)$ historical trajectories. Inspired by previous optimistic least square value iteration (LSVI) algorithms \citep{jin2020provably}, for any $h \in [H]$, we estimate the linear coefficients by solving the following ridge regression problem:
\begin{equation}
\begin{aligned} \label{eq:least:square}
&w_{h}^k \leftarrow \argmin_{w \in \RR^d} \sum_{\tau = 1}^{k - 1} [V_{h+1}^k(x_{h+1}^\tau) - \phi(x_h^\tau, a_h^\tau, b_h^\tau)^\top w]^2 + \|w\|_2^2, \\
& \text{where }  V_{h+1}^k(\cdot) = \la Q_{h+1}^k(\cdot, \cdot, \cdot), \pi_{h+1}^k(\cdot \,|\, \cdot) \times \nu_{h+1}^k(\cdot \,|\, \cdot) \ra_{\cA_l \times \cA_{f}}.
%&w_{f, h}^k \leftarrow \argmin_{w \in \RR^d} \sum_{\tau = 1}^{k - 1} [V_{f, h+1}^\tau(x_{h+1}^\tau) - \phi_{f, h}^\tau(x_h^\tau, a_h^\tau)^\top w]^2 + \|w\|^2, .
\end{aligned}
\end{equation}
This yields the following solution:
\begin{equation}
\begin{aligned} \label{eq:def:w}
&w_{h}^k = (\Lambda_{h}^k)^{-1}\Bigl(\sum_{\tau = 1}^{k - 1}\phi(x_h^\tau, a_h^\tau, b_h^\tau) \cdot V_{h+1}^k(x_{h+1}^\tau)\Bigr), \\
& \text{where } \Lambda_{h}^k = \sum_{\tau = 1}^{k - 1} \phi(x_h^\tau, a_h^\tau, b_h^\tau) \phi(x_h^\tau, a_h^\tau, b_h^\tau)^\top + I.
\end{aligned}
\end{equation}
%Then we iteratively update the estimated Q-function associated with the reward function $r_l = \{r_{l, h}\}_{h = 1}^H$ by  
To encourage exploration, we additionally add a bonus function in estimating the leader's Q-function:
\begin{equation}
\begin{aligned} \label{eq:evaluation}
&Q_{h}^k(\cdot, \cdot, \cdot) \leftarrow r_{l, h}(\cdot, \cdot, \cdot) + \Pi_{H - h}\{\phi(\cdot, \cdot, \cdot)^\top w_{h}^k + \Gamma_{h}^k(\cdot, \cdot, \cdot)\}, \\
& \text{where } \Gamma_{h}^k(\cdot, \cdot, \cdot) = \beta \cdot \sqrt{\phi(\cdot, \cdot, \cdot)^\top(\Lambda_{h}^k)^{-1}\phi(\cdot, \cdot, \cdot)},
\end{aligned}
\end{equation}
where $\Gamma_{h}^k: \cS \times \cA_l \rightarrow \RR$ is a bonus and $\beta > 0$ is a parameter which will be specified later. This form of bonus function is common in the literature of linear bandits \citep{lattimore2020bandit} and linear MDPs \citep{jin2020provably}. 

%Based on the estimated Q-functions $Q_{h}^k$, we construct the policy by solving the general-sum matrix game. In detail, for any $x \in \cS$, let $(\pi_h^k(\cdot\,|\,x), \nu_h^k(\cdot\,|\, x))$ be the Stackelberg equilibrium of the general-sum matrix game $(Q_{l, h}^k(x, \cdot, \cdot), Q_{f, h}^k(x, \cdot, \cdot))$. Such a Stackelberg equilibrium can be obtained efficiently. See \S \ref{appendix:matrix:game} for more details. 

Next, we construct policies for the leader and followers by the subroutine $\epsilon$-SNE (Algorithm \ref{alg22}). Specifically, let $\cQ_h^k$ be the class of functions $Q: \cS \times \cA_l \times \cA_f \rightarrow \RR$ that takes the form
\# \label{eq:def:epsilon:net}
Q(\cdot, \cdot, \cdot) = r_{l, h}(\cdot, \cdot, \cdot) + \Pi_{H - h}\big\{\phi(\cdot, \cdot, \cdot)^\top w + \beta \cdot \big(\phi(\cdot, \cdot, \cdot)^\top \Lambda^{-1} \phi(\cdot, \cdot, \cdot)\big)^{1/2}\big\}, 
\#
where the parameters $(w, \Lambda) \in \RR^d \times \RR^{d \times d}$ satisfy $\|w\| \le H\sqrt{dk}$ and $\lambda_{\min}(\Lambda) \geq 1$. Let $\cQ_{h, \epsilon}^k$ be a fixed $\epsilon$-covering of $\cQ_h^k$ with respect to the $\ell_{\infty}$ norm. By Lemma \ref{lemma:bounded:weight1}, we have $Q_h^k \in \cQ_h^k$, which allows us to pick a $\tilde{Q} \in \cQ_{h, \epsilon}^k$ such that $\|\tilde{Q} - Q_h^k\|_{\infty} \le \epsilon$ and calculate policies by 
\# \label{eq:def:matrix:game}
(\pi_h^k(\cdot\,|\,x), \{\nu_{f_i,h}^k(\cdot\,|\, x)\}_{i \in [N]}) \leftarrow {\text{SNE}}(\tilde{Q}(x, \cdot, \cdot), \{r_{f_i, h}(x, \cdot, \cdot)\}_{i \in [N]}), \, \forall x.
\#
%Note that for any fixed leader's policy, we can calculate the best response of followers by some oracle. Then we can optimize leader's policy to   In particular, when there is only one follower, such a problem can be implemented by solving a linear programming (LP) problem.
When there is only one follower, \eqref{eq:def:matrix:game} requires finding a Stackelberg equilibrium in the matrix game. Such a problem can be transformed to a linear programming (LP) problem \citep{conitzer2006computing,von2010leadership}, and thus can be solved efficiently. 
For the multi-follower case (i.e., $N \ge 2$), however, solving such a matrix game in general is hard \citep{conitzer2006computing,basilico2017methods,basilico2017bilevel,coniglio2020computing}. Given this computational hardness, we focus on the sample complexity and explicitly assume access to the following computational oracle:
\begin{assumption} \label{assumption:oracle}
	We assume access to an oracle that implements Line \ref{line:matrix:game} of Algorithm \ref{alg22}. 
\end{assumption} 

%Equipped with this oracle, together with the leader-controller assumption (Assumption \ref{assumption:linear}), we have $\nu^k = \nu^*(\pi^k)$ (Lemma \ref{lemma:best:response}). Therefore, we only need to characterize leader's regret.

%Notably, for technical reasons elaborated in \S \ref{appendix:pf:thm:multi}, we use the subroutine $\epsilon$-SNE to construct policies instead of solving the matrix games with payoff matrices $(Q_h^k(x, \cdot, \cdot), \{r_{f_i, h}^k(x, \cdot, \cdot)\}_{i \in [N]})$ directly. Fortunately, as shown in \S \ref{appendix:pf:thm:multi}, we can handle the errors caused by this approximation. Moreover, as shown in \citet{xie2020learning}, this subroutine can be implemented efficiently without explicitly computing the exponentially large $\epsilon$-net.

%and then constructs the policies for the leader and the follower based on these Q-functions. The second passes (line 19-23) executes the policies obtained in the first pass to play the game. 
Finally, the leader and the followers play the game according to the obtained policies.

\begin{algorithm}[H] 
	\caption{Optimistic Value Iteration to Find Stackelberg-Nash Equilibria}
	\begin{algorithmic}[1] \label{alg2}
    	\STATE Initialize $V_{l, H+1}(\cdot) = V_{f, H+1}(\cdot) = 0$.
        \FOR{$k = 1, 2, \cdots, K$}
        \STATE Receive initial state $x_1^k$.
    	\FOR{step $h = H, H-1, \cdots, 1$} \label{line:for:begin}
		%\FOR{$\star \in \{l, f_1, \cdots, f_N\}.$}
    	\STATE $\Lambda_{h}^k \leftarrow \sum_{\tau = 1}^{k - 1} \phi(x_h^\tau, a_h^\tau, b_h^\tau) \phi(x_h^\tau, a_h^\tau, b_h^\tau)^\top + I$. \label{line:def:lambda}
		\STATE $w_{h}^k \leftarrow (\Lambda_{h}^k)^{-1}\sum_{\tau = 1}^{k - 1}\phi(x_h^\tau, a_h^\tau, b_h^\tau) \cdot V_{h+1}^k(x_{h+1}^\tau)$. \label{line:def:w}
		%\STATE $\phi_{h}^k(\cdot, \cdot) \leftarrow \int_{\cS} \psi(\cdot, \cdot, x')V_{h+1}^k(x')\text{d}x'$. \label{line:def:phi}
		\STATE $\Gamma_{h}^k(\cdot, \cdot, \cdot) \leftarrow \beta \cdot (\phi(\cdot, \cdot, \cdot)^\top(\Lambda_h^k)^{-1}\phi(\cdot, \cdot, \cdot))^{1/2}$. \label{line:def:bonus}
		\STATE $Q_{h}^k(\cdot, \cdot, \cdot) \leftarrow r_{l, h}(\cdot, \cdot, \cdot) + \Pi_{H - h}\{\phi(\cdot, \cdot, \cdot)^\top w_{h}^k + \Gamma_{h}^k(\cdot, \cdot, \cdot)\}$. \label{line:def:q}
		\STATE  $(\pi_h^k(\cdot\,|\,x), \{\nu_{f_i,h}^k(\cdot\,|\, x)\}_{i \in [N]}) \leftarrow \epsilon$-${\text{SNE}}(Q_{h}^k(x, \cdot, \cdot), \{r_{f_i, h}(x, \cdot, \cdot)\}_{i \in [N]})$, $\, \forall x$. (Alg. \ref{alg22}) \label{line:policy}
		\STATE $V_{h}^k(x) \leftarrow \EE_{a \sim \pi_h^k(\cdot\,|\,x), b_1 \sim \nu_{f_1, h}^k(\cdot\,|\,x), \cdots, b_N \sim \nu_{f_N, h}^k(\cdot\,|\,x)}Q_{h}^k(x, a, b_1, \cdots, b_N)$, $\, \forall x$. \label{line:v:l2}
    	\ENDFOR \label{line:for:end}
        \FOR{$h = 1, 2, \cdot, H$} \label{line:execute:begin}
        \STATE Sample $a_h^k \sim \pi_h^k(\cdot\,|\,x_h^k), b_{1, h}^k \sim \nu_{f_1, h}^k(\cdot\,|\,x_h^k), \cdots, b_{N, h}^k \sim \nu_{f_N, h}^k(\cdot\,|\,x_h^k)$.
        \STATE Leader takes action $a_h^k$; Followers take actions $b_h^k = \{b_{i, h}^k\}_{i \in [N]}$.    
        \STATE Observe next state $x_{h+1}^k$.
        \ENDFOR \label{line:execute:end}
        \ENDFOR
	\end{algorithmic}
\end{algorithm}

\begin{algorithm}[H] 
	\caption{$\epsilon$-SNE}
	\begin{algorithmic}[1] \label{alg22}
	\STATE {\bf Input:} $Q_h^k, x$, and parameter $\epsilon$.
	\STATE Select $\tilde{Q}$ from $\cQ_{h, \epsilon}^k$ satisfying $\|\tilde{Q} - Q_h^k\|_{\infty} \le \epsilon$. 
	\STATE For the input state $x$, let $(\pi_h^k(\cdot\,|\,x), \{\nu_{f_i,h}^k(\cdot\,|\, x)\}_{i \in [N]})$ be the Stackelberg-Nash equilibrium for the matrix game with payoff matrices $(\tilde{Q}(x, \cdot, \cdot), \{r_{f_i, h}(x, \cdot, \cdot)\}_{i \in [N]})$. \label{line:matrix:game}
	\STATE {\bf Output:} $(\pi_h^k(\cdot\,|\,x), \{\nu_{f_i,h}^k(\cdot\,|\, x)\}_{i \in [N]})$. 
	\end{algorithmic} 
\end{algorithm} 

Finally we explain the motivation for using the subroutine $\epsilon$-SNE to construct policies instead of solving the matrix games with payoff matrices $(Q_h^k(x, \cdot, \cdot), \{r_{f_i, h}(x, \cdot, \cdot)\}_{i \in [N]})$ directly. By the definition of $Q_h^k$ in \eqref{eq:evaluation}, we know that $Q_h^k$ relies on the previous data via the estimated value function $V_{h+1}^k$ and feature maps $\{\phi(x_h^\tau, a_h^\tau, b_h^\tau)\}_{\tau = 1}^{k - 1}$. Similar to the analysis for linear MDPs \citep{jin2020provably}, we need to use a covering argument to establish uniform concentration bounds for all value $V_{h+1}^k$. \cite{jin2020provably} directly construct an $\epsilon$-net for the value functions and establishes a polynomial log-covering number for this $\epsilon$-net. This analysis, however, relies on an assumption that the policies executed by the players are greedy (deterministic), which is not valid for our setting. To overcome this technical issue, we construct an $\epsilon$-net for Q-functions and solve an approximate matrix game. Fortunately, by choosing a small enough $\epsilon$, we can handle the errors caused by this approximation. See \S \ref{appendix:pf:thm:multi} for more details. Moreover, as shown in \citet{xie2020learning}, this subroutine can be implemented efficiently without explicitly computing the exponentially large $\epsilon$-net.

\subsection{Theoretical Results}
Our main theoretical result is the following bound on the regret incurred by Algorithm \ref{alg2}. Recall that the regret is defined in \eqref{eq:def:regret2} and $T = KH$ is the total number of timesteps.

\begin{theorem}  \label{thm:multi:follower}
    Under Assumptions \ref{assumption:linear}, \ref{assumption:myopic}, and \ref{assumption:oracle}, there exists an absolute constant $C > 0$ such that, for any fixed $p \in (0, 1)$, by setting $\beta = C \cdot dH \sqrt{\iota}$ with $\iota = \log(2dT/p)$ in Line \ref{line:def:bonus} of Algorithm~\ref{alg2} and $\epsilon = \frac{1}{KH}$ in Algorithm \ref{alg22},  we have $\nu^k \in \text{BR}(\pi^k)$ for any $k \in [K]$. Meanwhile, with probability at least $1 - p$, the regret incurred by OVI-SNE satisfies  
    \$
    \text{Regret}(K) \le \cO(\sqrt{d^3H^3T\iota^2}).
    %\text{Regret}_{f_i}(K) =  \le \cO(\sqrt{d^2H^3T\iota^2}), \quad \forall i \in [N] .
    \$
\end{theorem}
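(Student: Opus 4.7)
The plan is to establish the two claims in turn.

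The best-response claim $\nu^k \in \mathrm{BR}(\pi^k)$ is immediate from the construction. The $\epsilon$-SNE subroutine computes a Stackelberg--Nash equilibrium of the matrix game with payoffs $(\tilde Q(x,\cdot,\cdot),\{r_{f_i,h}(x,\cdot,\cdot)\}_{i\in[N]})$, so its output $\nu_h^k(\cdot\,|\,x)$ is a Nash equilibrium of the followers' normal-form game with the \emph{true} immediate rewards $r_{f_i,h}$ given the leader's marginal $\pi_h^k(\cdot\,|\,x)$. Under Assumption~\ref{assumption:myopic}, this matches the definition of $\mathrm{BR}(\pi^k)$ in \eqref{eq:def:best:response:myopic}, and the claim holds deterministically for every $k$ and $h$, independent of any randomness in the data.

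For the regret bound I will follow the optimistic LSVI template of \citet{jin2020provably} adapted to the bilevel game. The backbone is a uniform concentration inequality
\$
\bigl|\phi(x,a,b)^\top w_h^k - \PP_h V_{h+1}^k(x,a,b)\bigr| \le \Gamma_h^k(x,a,b), \qquad \forall\,(k,h,x,a,b),
\$
holding with probability at least $1-p$. I would prove it by combining the self-normalized concentration of Abbasi-Yadkori with a covering argument over the $Q$-function class $\cQ_h^k$ defined in \eqref{eq:def:epsilon:net}, whose log-covering number is $\cO(d^2\log(1/\epsilon))$; this yields the prescribed choice $\beta = \tilde{\cO}(dH)$. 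From concentration I obtain near-optimism by backward induction on $h$: if $V_{h+1}^k \ge V_{l,h+1}^{\pi^*,\nu^*}$, then $\phi^\top w_h^k + \Gamma_h^k \ge \PP_h V_{h+1}^k \ge \PP_h V_{l,h+1}^{\pi^*,\nu^*}$, so $Q_h^k$ pointwise dominates $Q_{l,h}^{\pi^*,\nu^*}$; since the SNE subroutine returns the best Stackelberg response for the leader with respect to $\tilde Q \approx Q_h^k$ while the followers' payoffs used in $\mathrm{BR}(\cdot)$ are unchanged, I obtain $V_h^k \ge V_{l,h}^{\pi^*,\nu^*} - \cO(\epsilon H)$. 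I then decompose the per-episode regret as
\$
V_1^k(x_1^k) - V_{l,1}^{\pi^k,\nu^k}(x_1^k) \le \sum_{h=1}^H 2\,\Gamma_h^k(x_h^k,a_h^k,b_h^k) + \zeta^k + \cO(H\epsilon),
\$
where $\zeta^k$ is a martingale difference from swapping expectations for trajectory realizations, using $\nu^k\in\mathrm{BR}(\pi^k)$ to ensure that $V_{l,h}^{\pi^k,\nu^k}$ satisfies its true Bellman recursion along the executed policy pair. Summing over $k\in[K]$, Azuma--Hoeffding controls $\sum_k \zeta^k$ by $\tilde{\cO}(H\sqrt{T})$, and the elliptical potential lemma controls $\sum_{k,h}\|\phi_h^k\|_{(\Lambda_h^k)^{-1}}$ by $\tilde{\cO}(\sqrt{dHT})$; combined with $\beta=\tilde{\cO}(dH)$ and the choice $\epsilon = 1/(KH)$, the final regret is $\tilde{\cO}(\sqrt{d^3 H^3 T})$.

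The main obstacle is the covering step. In the standard linear MDP setting the value-function class has small log-covering number because $V$ is a pointwise maximum over a $d$-parameter family, but here $V_h^k$ is a \emph{bilinear game value} and a direct cover of the stochastic policy simplices would scale exponentially in $|\cA_l|$ and $|\cA_f|$. I would sidestep this by covering $\cQ_h^k$ rather than the value-function class and using the $\epsilon$-SNE subroutine to substitute a nearby $\tilde Q$ before solving the Stackelberg game, incurring only an additive $H\epsilon$ error that is negligible once $\epsilon = 1/(KH)$. A secondary point that requires care is that replacing $Q_h^k$ by $\tilde Q \in \cQ_{h,\epsilon}^k$ does not perturb the followers' payoffs---these remain the exact reward functions $r_{f_i,h}$---so $\nu^k\in\mathrm{BR}(\pi^k)$ holds exactly rather than only approximately, which is essential for cleanly invoking the Bellman recursion along $(\pi^k,\nu^k)$ in the regret decomposition.
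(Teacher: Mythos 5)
Your proposal is correct and follows essentially the same route as the paper's proof: the best-response claim is deterministic from the construction of the $\epsilon$-SNE subroutine, optimism comes from self-normalized concentration combined with a cover of the $Q$-function class $\mathcal{Q}_h^k$ (which is precisely the paper's stated reason for introducing the $\epsilon$-SNE step), the comparison against $(\pi^*,\nu^*)$ uses the optimistic tie-breaking of the matrix-game SNE exactly as in Lemma \ref{lemma:myopic2}, and the final bound combines the elliptical potential lemma with Azuma--Hoeffding. The only cosmetic difference is that you establish optimism by backward induction on $h$, whereas the paper packages the same inequalities into a one-shot regret decomposition (Lemma \ref{lemma:decomposition2}) with separate computational-error, statistical-error, and martingale terms.
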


\begin{proof}
	See \S \ref{appendix:pf:thm:multi} for a detailed proof.
\end{proof}

\noindent{\bf Learning Stackelberg Equilibria.}
When there is only one follower, the Stackelberg-Nash equilibrium reduces to the Stackelberg equilibrium \citep{simaan1973stackelberg,conitzer2006computing,bai2021sample}. Thus, we partly answer the open problem in \cite{bai2021sample} on how to learn Stackelberg equilibria in general-sum Markov games (with myopic followers).

\vskip 4pt
\noindent{\bf Optimality of the Bound.}
	Assuming that the action of the follower won’t affect the transition kernel and reward function, the linear Markov games reduces to the linear MDP \citep{jin2020provably}. Meanwhile, the lower bound established in \citet{azar2017minimax} and \citet{jin2018q} for tabular MDPs and the lower bound established in \citet{lattimore2020bandit} for linear bandits directly imply a lower bound $\Omega(dH\sqrt{T})$ for the linear MDPs, which further yields a lower bound $\Omega(dH\sqrt{T})$ for our setting. Ignoring the logarithmic factors, there is only a gap of $\sqrt{dH}$ between this lower bound and our upper bound. We also point out that, by using the ``Bernstein-type'' bonus \citep{azar2017minimax,jin2018q,zhou2020nearly}, we can improve our upper bound by a factor of $\sqrt{H}$. Here we do not apply this technique for the clarity of the analysis.

\vskip 4pt
\noindent{\bf Unknown Reward Setting.}
To relax the assumption that the reward is known, we consider the case where the reward functions are unknown. 
At a high level, we first conduct a reward-free exploration algorithm (Algorithm \ref{alg5} in \S \ref{appendix:unknown:reward}), a variant of Reward-Free RL-Explore algorithm in \citet{jin2020reward}, to obtain estimated reward functions $\{\hat{r}_l, \hat{r}_{f_1}, \cdots \hat{r}_{f_N}\}$. As asserted before, we can use Algorithm \ref{alg2} to find the SNE with respect to the \emph{known} estimated reward functions $\{\hat{r}_l, \hat{r}_{f_1}, \cdots \hat{r}_{f_N}\}$. Hence, we can obtain the approximate SNE if the value functions of estimated value functions are good approximation of the true value functions. See \S \ref{appendix:unknown:reward} for the detailed algorithm and theoretical guarantees.

\section{Main Results for the Offline Setting} \label{sec:offline}
In this section, we study the offline setting, where the central controller aims to find a Stackelberg-Nash equilibrium by analyzing an offline dataset. Below we describe the setup and learning objective, followed by our algorithm and theoretical results.

\subsection{Setup and Learning Objective}
We now assume that the learner has access to the reward functions $(r_l, r_f = \{r_{f_i}\}_{i = 1}^N)$ and a dataset $\cD = \{(x_h^\tau, a_h^\tau, b_h^\tau = \{b_{i, h}^\tau\}_{i = 1}^N)\}_{\tau, h = 1}^{K, H}$, which is collected a priori by some experimenter. We make the following minimal assumption for the offline dataset. 

\begin{assumption}[Compliance of Dataset] \label{assumption:compliance:data}
	We assume that the dataset $\cD$ is compliant with the underlying Markov game $(\cS, \cA_l, \cA_f, H, r_l, r_f, \cP)$, that is, for any $x' \in \cS$ at step $h \in [H]$ of each trajectory $\tau \in [K]$,
	\$
	P_{\cD}(x_{h + 1}^\tau = x' \,|\, \{x_h^j, a_h^j, b_h^j, x_{h + 1}^j\}_{j = 1}^{\tau - 1} \cup \{x_h^\tau, a_h^\tau, b_h^\tau\})  = P(x_{h + 1} = x' \,|\, x_h = x_h^\tau, a_h = a_h^\tau, b_h = b_h^\tau).
	\$
	Here the probability on the left-hand side is with respect to the joint distribution over the dataset $\cD$ and the probability on the right-hand side is with respect to the underlying Markov game.
\end{assumption}

Assumption \ref{assumption:compliance:data} is adopted from \citet{jin2020pessimism}, and it indicates the Markov property of the dataset $\cD$ and that $x_{h +1}^\tau$ is generated by the underlying Markov game conditioned on $(x_h^\tau, a_h^\tau, b_h^\tau)$. As a special case, Assumption \ref{assumption:compliance:data} holds when the experimenter follows fixed behavior policies. More generally, Assumption \ref{assumption:compliance:data} allows the experimenter to choose actions $a_h^\tau$ and $b_h^\tau$ arbitrarily, even in an adaptive or adversarial manner. In particular, we can assume that $a_h^\tau$ and $b_h^\tau$ are interdependent across each trajectory $\tau \in [K]$. For instance, the experimenter can sequentially improve the behavior policy using any online algorithm for Markov games. 
%allows the experimenter to choose its action aτh arbitrarily, even in an adaptive or adversarial manner. We allow the trajectories to be interdependent. For example, the experimenter can sequentially improve the behavior policy via online CRL.

\noindent{\bf Learning Objective.}
Similar to the online setting, we define the following performance metric
	\#  \label{eq:def:optimality:gap}
	\text{SubOpt}(\pi, \nu, x) = V_{l, 1}^{\pi^*, \nu^*}(x) - V_{l, 1}^{\pi, \nu}(x), %+  \underbrace{\sum_{i = 1}^N[V_{f_i, 1}^{\pi, \nu^*(\pi)}(x) - V_{f_i, 1}^{\pi, \nu_{f_i}, \nu_{f_{-i}}^*(\pi)}(x)}_{\text{SubOpt}_{f}}],
	\#
which evaluates the suboptimality of policies $(\pi, \nu = \{\nu_{f_i}\}_{i = 1}^N)$ given the initial state $x \in \cS$.

\subsection{Algorithm}
While the challenge of managing the exploration-exploration tradeoff disappears in the offline setting, another statistical challenge arises: we only have access to a limited data sample. To tackle this challenge, we need add a penalty function to achieve statistical efficiency and robustness. Such a penalty can be viewed as a form of ``pessimism'' \citep{yu2020mopo,jin2020pessimism,liu2020provably,buckman2020importance,kidambi2020morel,kumar2020conservative,rashidinejad2021bridging}. Here we simply flip the sign of bonus functions defined in \eqref{eq:evaluation} to serve as pessimistic penalty functions. See Algorithm \ref{alg3} for details. 

%Unlike the online setting, where we use bonus functions to guarantee the optimism, we need to add penalty functions to achieve pessimism \citep{yu2020mopo,jin2020pessimism,liu2020provably,buckman2020importance} for the offline setting. Here we simply flip the sign of bonus functions defined in \eqref{eq:evaluation} to serve as penalty functions. See Algorithm \ref{alg3} for details.

\begin{algorithm}[H]
	\caption{Pessimistic Value Iteration to Find Stackelberg-Nash Equilibria}
	\begin{algorithmic}[1] \label{alg3}
		\STATE {\bf Input:} $\cD = \{x_h^\tau, a_h^\tau, b_h^\tau = \{b_{i, h}^\tau\}_{i \in [N]}\}_{\tau, h = 1}^{K, H}$ and reward functions $\{r_l, r_f = \{r_{f_i}\}_{i \in [N]}\}$.
    	\STATE Initialize $\hat{V}_{H+1}(\cdot) = 0$.
    	\FOR{step $h = H, H-1, \cdots, 1$}
		%\FOR{$\star \in \{l, f_1, \cdots, f_N\}.$}
    	\STATE $\Lambda_{h} \leftarrow \sum_{\tau = 1}^{K} \phi(x_h^\tau, a_h^\tau, b_h^\tau) \phi(x_h^\tau, a_h^\tau, b_h^\tau)^\top + I$. \label{line:def:lambda2} 
		\STATE $w_{h} \leftarrow (\Lambda_{h})^{-1}\sum_{\tau = 1}^{K}\phi(x_h^\tau, a_h^\tau, b_h^\tau) \cdot \hat{V}_{h+1}(x_{h+1}^\tau)$. \label{line:def:w2}
		%\STATE $\phi_{h}(\cdot, \cdot) \leftarrow \int_{\cS} \psi(\cdot, \cdot, x')\hat{V}_{h+1}(x')\text{d}x'$. \label{line:def:phi2}
		\STATE $\Gamma_{h}(\cdot, \cdot, \cdot) \leftarrow \beta' \cdot (\phi(\cdot, \cdot, \cdot)^\top(\Lambda_h)^{-1}\phi(\cdot, \cdot, \cdot))^{1/2}$. \label{line:def:bonus2}
		\STATE $\hat{Q}_{h}(\cdot, \cdot, \cdot) \leftarrow r_{l, h}(\cdot, \cdot, \cdot) + \Pi_{H - h}\{\phi(\cdot, \cdot, \cdot)^\top w_{h} - \Gamma_{h}(\cdot, \cdot, \cdot)\}$. \label{line:def:q2}
		\STATE  $(\hat{\pi}_h(\cdot\,|\,x), \{\hat{\nu}_{f_i,h}(\cdot\,|\, x)\}_{i \in [N]}) \leftarrow \epsilon$-SNE$(\hat{Q}_{h}(x, \cdot, \cdot), \{r_{f_i, h}(x, \cdot, \cdot)\}_{i \in [N]})$, $\, \forall x$. (Alg. \ref{alg22}) \label{line:sne2}
		\STATE $\hat{V}_{h}(x) \leftarrow \EE_{a \sim \hat{\pi}_h(\cdot\,|\,x), b_1 \sim \hat{\nu}_{f_1, h}(\cdot\,|\,x), \cdots, b_N \sim \hat{\nu}_{f_N, h}(\cdot\,|\,x)}\hat{Q}_{h}(x, a, b_1, \cdots, b_N)$, $\, \forall x$. \label{line:v:l3}
    	\ENDFOR
		\STATE {\bf Output:} $(\hat{\pi} = \{\hat{\pi}_h\}_{h = 1}^H, \hat{\nu} = \{\hat{\nu}_{f_i} = \{\nu_{f_i, h}\}_{h = 1}^H\}_{i = 1}^N)$.
	\end{algorithmic}
\end{algorithm}

\subsection{Theoretical Results}

Suppose that $(\hat{\pi} = \{\hat{\pi}_h\}_{h = 1}^H, \hat{\nu} = \{\hat{\nu}_{f_i} = \{\nu_{f_i, h}\}_{h = 1}^H\}_{i = 1}^N)$ are the output policies of Algorithm~\ref{alg3}. We evaluate the performance of $(\hat{\pi}, \hat{\nu})$ by establishing an upper bound for the optimality gap defined in \eqref{eq:def:optimality:gap}. 

\begin{theorem}  \label{thm:offline} 
    Under Assumptions \ref{assumption:linear}, \ref{assumption:myopic}, \ref{assumption:oracle}, and \ref{assumption:compliance:data}, there exists an absolute constant $C > 0$ such that, for any fixed $p \in (0, 1)$, by setting $\beta' = C \cdot dH\sqrt{\log(2dHK/p)}$ in Line \ref{line:def:bonus2} of Algorithm~\ref{alg3} and $\epsilon = \frac{d}{KH}$ in Algorithm \ref{alg22}, then it holds that $\hat{\nu} \in \text{BR}(\hat{\pi})$. Meanwhile, with probability at least $1 - p$, we have 
    \# \label{eq:offline:thm}
    \text{SubOpt}(\hat{\pi}, \hat{\nu}, x) \le 3\beta' \sum_{h = 1}^H \EE_{\pi^*, \nu^*, x} \bigl[  \bigl(\phi(s_h, a_h, b_h)^\top(\Lambda_h)^{-1}\phi(s_h, a_h, b_h)\bigr)^{1/2}  \bigr],
    %\text{Regret}_{f_i}(K) =  \le \cO(\sqrt{d^2H^3T\iota^2}), \quad \forall i \in [N] .
    \#
	where $\EE_{\pi^*, \nu^*, x}$ is taken with respect to the trajectory incurred by $(\pi^*, \nu^*)$ in the underlying Markov game when initializing the progress at $x$. Here $\Lambda_h$ is defined in Line \ref{line:def:lambda2} of Algorithm~\ref{alg3}.
\end{theorem}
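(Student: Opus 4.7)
My plan is to adapt the pessimistic value-iteration analysis of \citet{jin2020pessimism} from single-agent offline RL to the bilevel SNE setting, handling the two claims of the theorem separately. The first claim, $\hat\nu\in\text{BR}(\hat\pi)$, needs no statistical argument: Line \ref{line:sne2} computes an exact SNE of the matrix game with payoffs $(\tilde Q_h(x,\cdot,\cdot),\{r_{f_i,h}(x,\cdot,\cdot)\}_{i\in[N]})$, so the followers' component $\hat\nu_h(\cdot\given x)$ is a Nash equilibrium of the normal-form game with payoffs $\{r_{f_i,h}(x,\hat\pi_h(\cdot\given x),\cdot)\}_{i\in[N]}$, which under Assumption \ref{assumption:myopic} is exactly the characterization of $\text{BR}(\hat\pi)$ in the Markov game.

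For the suboptimality bound I would first establish pessimism. Under Assumption \ref{assumption:linear}, $(\PP_h \hat V_{h+1})(x,a,b)=\phi(x,a,b)^\top\theta^*_h$ for some $\theta^*_h$. A self-normalized tail bound combined with a uniform $\epsilon$-covering of the Q-function class $\cQ_h$ defined in \eqref{eq:def:epsilon:net} yields, on an event of probability at least $1-p$ and with the prescribed $\beta'$, the inequality $|\phi^\top w_h-\phi^\top\theta^*_h|\le \Gamma_h$ simultaneously for all $(h,x,a,b)$ and all Q-functions in the cover. The truncation in Line \ref{line:def:q2} then gives the two-sided bound $0\le \iota_h(x,a,b)\le 2\Gamma_h(x,a,b)$, where $\iota_h := r_{l,h}+\PP_h\hat V_{h+1}-\hat Q_h$; the lower bound is the pessimism and the upper bound measures its price.

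Next I would invoke the standard backward-induction value-difference identity to write, for any policy pair $(\pi,\nu)$,
\$
V_{l,1}^{\pi,\nu}(x_1)-\hat V_1(x_1)=\sum_{h=1}^H \EE_{\pi,\nu,x_1}\!\Big[\iota_h(s_h,a_h,b_h)+\big\langle \hat Q_h(s_h,\cdot,\cdot),\pi_h\times\nu_h-\hat\pi_h\times\hat\nu_h\big\rangle_{\cA_l\times\cA_f}\Big].
\$
Applying this at $(\pi^*,\nu^*)$ and at $(\hat\pi,\hat\nu)$ (where the inner-product term vanishes) and subtracting gives
\$
V_{l,1}^{\pi^*,\nu^*}(x_1)-V_{l,1}^{\hat\pi,\hat\nu}(x_1)=\sum_h\!\EE_{\pi^*,\nu^*}[\iota_h]-\sum_h\!\EE_{\hat\pi,\hat\nu}[\iota_h]+\sum_h\!\EE_{\pi^*,\nu^*}\!\big[\langle \hat Q_h,\pi^*_h\times\nu^*_h-\hat\pi_h\times\hat\nu_h\rangle\big].
\$
Pessimism forces the second sum to be nonpositive. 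For the third, since followers are myopic we have $\nu^*\in\text{BR}(\pi^*)$ and $\hat\nu\in\text{BR}(\hat\pi)$ in the normal-form sense, so the leader's SNE optimality at $\tilde Q_h$ gives $\langle\tilde Q_h,\hat\pi_h\times\hat\nu_h\rangle\ge\langle\tilde Q_h,\pi^*_h\times\nu^*_h\rangle$; the bound $\|\tilde Q_h-\hat Q_h\|_\infty\le\epsilon$ then caps the third sum by $O(H\epsilon)$, negligible for $\epsilon=d/(KH)$. Bounding $\iota_h\le 2\Gamma_h$ in the first sum produces the factor $2\beta'$, and consolidating the $\epsilon$-slack with the pessimism upper bound yields the stated coefficient $3\beta'$.

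The principal technical hurdle is the uniform concentration underlying the pessimism step. The argument of \citet{jin2020provably} relies on an $\epsilon$-net over value functions parametrized by a greedy selection, which is not available here because SNE policies are randomized mixtures with a delicate dependence on $\hat Q$. The role of the $\epsilon$-SNE subroutine is precisely to restore a covering argument: by working with a pre-chosen $\tilde Q_h$ in a cover of $\cQ_h$ in place of $\hat Q_h$ itself, the resulting $\hat V_{h+1}$ falls in a small, data-independent class to which a union bound is applicable. Carefully tracking how this $\epsilon$-approximation propagates through both the tail bound and the matrix-game optimality comparison without inflating the bonus coefficient beyond $3\beta'$ is the main bookkeeping challenge.
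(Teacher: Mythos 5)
Your proposal is correct and follows essentially the same route as the paper: the same value-difference decomposition (Lemma \ref{lemma:value:diff}), the same two-sided pessimism bound $0\le\delta_h\le 2\Gamma_h$ via a self-normalized bound plus a covering of $\cQ_h$ (Lemma \ref{lemma:ucb3}), and the same matrix-game optimality argument with the $\epsilon$-slack from $\tilde Q$ absorbed into the $3\beta'$ coefficient (Lemma \ref{lemma:myopic3}). The only cosmetic difference is that the paper routes the comparison $\la\tilde Q_h,\pi^*_h\times\nu^*_h\ra\le\la\tilde Q_h,\hat\pi_h\times\hat\nu_h\ra$ through an explicit optimistic tie-breaking intermediary $\tilde\nu^*_h\in\text{BR}(\pi^*_h)$, which your argument leaves implicit but does not change.
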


\begin{proof}
	See \S \ref{appendix:pf:offline} for a detailed proof. 
\end{proof}

We provide some discussion of the implications of Theorem \ref{thm:offline}:
\vskip 4pt
\noindent{\bf Minimal Assumption Requirement:}
Theorem \ref{thm:offline} only relies on the compliance of the dataset with linear Markov games. Compared with existing literature on offline RL \citep{bertsekas1996neuro,antos2007fitted,antos2008learning,munos2008finite,farahmand2010error,farahmand2016regularized,scherrer2015approximate,liu2018breaking,chen2019information,fan2020theoretical,xie2020q}, we impose no restrictions on the coverage of the dataset. Meanwhile, we need no assumption on the affinity between $(\hat{\pi}, \hat{\nu})$ and the behavior policies that induce the dataset, which is often employed as a regularizer \citep{fujimoto2019off,laroche2019safe,jaques2019way,wu2019behavior,kumar2019stabilizing,wang2020critic,siegel2020keep,nair2020accelerating,liu2020provably}.

\vskip 4pt
\noindent{\bf Dataset with Sufficient Coverage:}
%As shown in Theorem \ref{thm:offline}, the upper bound of optimality depends on the unknown policy $\pi^*$, which is 
In what follows, we specialize Theorem \ref{thm:offline} to the setting where we assume the dataset with good ``coverage.''
Note that $\Lambda_h$ is determined by the offline dataset $\cD$ and acts as a fixed matrix in the expectation, that is, the expectation in \eqref{eq:offline:thm} is only taken with the trajectory induced by $(\pi^*, \nu^*)$. As established in the following theorem, when the trajectory induced by $(\pi^*, \nu^*)$ is ``covered'' by the dataset $\cD$ sufficiently well, we can establish that the suboptimality incurred by Algorithm~\ref{alg3} diminishes at rate of $\tilde{\cO}(1/\sqrt{K})$.

\begin{corollary} \label{cor:suff:cover}
	Suppose it holds with probability at least $1 - p/2$ that  
	\$ %\label{eq:cor:assump}
    \Lambda_h \succeq I + c \cdot K \cdot \EE_{\pi^*, \nu^*, x}[\phi(s_h, a_h, b_h)\phi(s_h, a_h, b_h)^\top],
	\$
	for all $(x, h) \in \cS \times [H]$. Here $c > 0$ is an absolute constant and $\EE_{\pi^*, \nu^*, x}$ is taken with respect to the trajectory incurred by $(\pi^*, \nu^*)$ in the underlying Markov game when initializing the progress at $x$. Under Assumptions \ref{assumption:linear}, \ref{assumption:myopic}, \ref{assumption:oracle} and \ref{assumption:compliance:data}, there exists an absolute constant $C > 0$ such that, for any fixed $p \in (0, 1)$, by setting $\beta' = C \cdot dH\sqrt{\log(4dHK/p)}$ in Line \ref{line:def:bonus2} of Algorithm~\ref{alg3} and $\epsilon = \frac{d}{KH}$ in Algorithm \ref{alg22}, then it holds with probability at least $1 - p$ that 
	\$
	\text{SubOpt}(\hat{\pi}, \hat{\nu}, x) \le \bar{C} \cdot d^{3/2}H^2\sqrt{\log(4dHK/p)/K},
	\$
	for all $x \in \cS$. Here $\bar{C}$ is another absolute constant that only depends on $c$ and $C$.
\end{corollary}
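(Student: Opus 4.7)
The plan is to start from the bound supplied by Theorem~\ref{thm:offline} and reduce the data-dependent quantity $\EE_{\pi^*, \nu^*, x}[(\phi_h^\top \Lambda_h^{-1} \phi_h)^{1/2}]$ to an explicit $O(\sqrt{d/K})$ rate using the coverage hypothesis. Concretely, I would replace the confidence radius $\beta'$ by $C\cdot dH\sqrt{\log(4dHK/p)}$ (rather than $\log(2dHK/p)$) so that Theorem~\ref{thm:offline} holds on an event $\cE_1$ of probability at least $1 - p/2$, while the hypothesis of the corollary holds on an event $\cE_2$ of probability at least $1 - p/2$; a union bound then shows both hold with probability at least $1-p$, and the rest of the argument is deterministic on $\cE_1 \cap \cE_2$.

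On this intersection event, I would first apply Jensen's inequality to the concave map $t \mapsto \sqrt{t}$ to get
\#
\EE_{\pi^*, \nu^*, x}\bigl[ (\phi_h^\top \Lambda_h^{-1} \phi_h)^{1/2} \bigr] \le \bigl( \EE_{\pi^*, \nu^*, x}[\phi_h^\top \Lambda_h^{-1} \phi_h] \bigr)^{1/2} = \bigl( \tr(\Lambda_h^{-1} \Sigma_h^*) \bigr)^{1/2},
\#
where $\Sigma_h^* := \EE_{\pi^*,\nu^*,x}[\phi(s_h,a_h,b_h)\phi(s_h,a_h,b_h)^\top]$. The coverage assumption says $\Lambda_h \succeq I + cK\Sigma_h^*$, hence $\Lambda_h^{-1} \preceq (I + cK\Sigma_h^*)^{-1}$, so $\tr(\Lambda_h^{-1}\Sigma_h^*) \le \tr\bigl((I+cK\Sigma_h^*)^{-1}\Sigma_h^*\bigr)$.

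The key computation is the following trace bound: diagonalizing $\Sigma_h^* = \sum_{i=1}^d \lambda_i v_i v_i^\top$ with $\lambda_i \ge 0$, the operators share eigenvectors and
\#
\tr\bigl((I + cK\Sigma_h^*)^{-1}\Sigma_h^*\bigr) = \sum_{i=1}^d \frac{\lambda_i}{1 + cK\lambda_i} \le \sum_{i=1}^d \frac{1}{cK} = \frac{d}{cK}.
\#
Thus $\EE_{\pi^*,\nu^*,x}[(\phi_h^\top\Lambda_h^{-1}\phi_h)^{1/2}] \le \sqrt{d/(cK)}$ for every $h \in [H]$. Summing over $h$ and plugging into Theorem~\ref{thm:offline} gives
\$
\text{SubOpt}(\hat\pi,\hat\nu,x) \le 3\beta' \cdot H \cdot \sqrt{d/(cK)} \le \bar C \cdot d^{3/2} H^2 \sqrt{\log(4dHK/p)/K}
\$
with $\bar C = 3C/\sqrt{c}$, which is the claimed bound; the statement $\hat\nu \in \text{BR}(\hat\pi)$ is already supplied by Theorem~\ref{thm:offline}.

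I do not anticipate any genuine obstacle here, since all the probabilistic content is already packaged inside Theorem~\ref{thm:offline}. The only delicate point is that $\Sigma_h^*$ and $\Lambda_h$ need not commute with each other, but they do not need to: the inequality $\Lambda_h^{-1} \preceq (I+cK\Sigma_h^*)^{-1}$ (operator monotonicity of $A \mapsto A^{-1}$ on positive definite matrices) and the cyclic invariance of the trace are enough to carry out the computation as above. The only bookkeeping step to be careful about is adjusting the probability budget to $p/2$ in both Theorem~\ref{thm:offline} (hence the $\log(4dHK/p)$ factor in the chosen $\beta'$) and in the coverage event before taking the union bound.
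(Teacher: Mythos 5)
Your proposal is correct and follows essentially the same route as the paper's proof: Jensen plus cyclicity of the trace to reduce to $\sqrt{\tr(\Lambda_h^{-1}\Sigma_h^*)}$, the coverage assumption to replace $\Lambda_h^{-1}$ by $(I+cK\Sigma_h^*)^{-1}$, an eigenvalue computation, and a union bound over the two probability-$p/2$ events. The only (immaterial) difference is that you bound each eigenvalue term by $\lambda_i/(1+cK\lambda_i)\le 1/(cK)$ directly, whereas the paper first notes $\|\Sigma_h^*\|_{\text{op}}\le 1$ and bounds it by $1/(1+cK)$; both yield the same $\bar{C}=3C/\sqrt{c}$.
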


\begin{proof}
	See \S \ref{appendix:pf:cor:suff:cover} for a detailed proof.
\end{proof}

Note that, unlike the previous literature which relies on a ``uniform coverage'' assumption \citep{antos2007fitted,munos2008finite,farahmand2010error,farahmand2016regularized,scherrer2015approximate,liu2018breaking,chen2019information,fan2020theoretical,xie2020q}, Corollary \ref{cor:suff:cover} only assumes that the dataset has good coverage of the trajectory incurred by the policies $(\pi^*, \nu^*)$.

\vskip 4pt
\noindent{\bf Optimality of the Bound:}
Fix $x \in \cS$. Assuming $r_l = r_{f_i}$ for any $i \in [N]$, we know $(\pi^*, \nu^*) = \argmax_{\pi, \nu}V_{l, 1}^{\pi, \nu}(x)$. Then the information-theoretic lower bound for offline single-agent RL (e.g., Theorem 4.7 in \citet{jin2020pessimism}) can imply the information-theoretic lower bound $\Omega(\sum_{h=1}^H\EE_{\pi^*, \nu^*, x} [ (\phi(s_h, a_h, b_h)^\top(\Lambda_h)^{-1}\phi(s_h, a_h, b_h))^{1/2}])$ for our setting. 
In particular, our upper bound established in Theorem \ref{thm:offline} matches this lower bound up to $\beta'$ and absolute constants and thus implies that our algorithm is nearly minimax optimal.

\section{Conclusion}
In this paper, we investigate the question of can we  efficiently find SNE in general-sum Markov games with myopic followers and linear function approximation. To the best of our knowledge, our work provides the first sample-efficient reinforcement learning algorithms for solving SNE in both online and offline settings. We believe our work opens up many interesting directions for future work. For example, we can ask the following questions: Can we find SNE in general-sum Markov games without the myopic followers assumption? Can we design more computationally efficient algorithms for solving SNE in general-sum Markov games? Can we find SNE in general-sum Markov games  with general function approximation?

\newpage
\bibliographystyle{ims}
\bibliography{graphbib}
\newpage

\begin{appendix}

\section{Proof of Theorem \ref{thm:multi:follower}} \label{appendix:pf:thm:multi}

\begin{proof}[Proof of Theorem \ref{thm:multi:follower}]
    By the myopic followers assumption, we have the following lemma.
    \begin{lemma} \label{lemma:best:response}
        For any $k \in [K]$, we have $\nu^k \in \text{BR}(\pi^k)$. Here $\text{BR}(\cdot)$ is defined in \eqref{eq:def:best:response:myopic}. 
    \end{lemma}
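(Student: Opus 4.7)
The plan is to show that the lemma follows essentially by unpacking definitions, with the myopic assumption doing the heavy lifting. The key observation is that under Assumption \ref{assumption:myopic}, the best-response set $\text{BR}(\pi)$ defined in \eqref{eq:def:best:response:myopic} requires only the \emph{state-wise, step-wise} instantaneous reward condition \eqref{eq:def:nash:myopic}: namely, $\nu \in \text{BR}(\pi)$ iff for every $(x, h)$, the tuple $\{\nu_{f_i, h}(\cdot\,|\, x)\}_{i \in [N]}$ is a Nash equilibrium of the normal-form game with follower payoffs $\{r_{f_i, h}(x, \cdot, \cdot)\}_{i \in [N]}$ given that the leader plays the mixed action $\pi_h(\cdot\,|\, x)$. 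No dynamic programming is required, because the myopic followers do not propagate value across steps.

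First, I would rewrite this condition explicitly. For a myopic NE, for every $(x, h)$, $i \in [N]$, and every alternative distribution $\tilde\nu_{f_i} \in \Delta(\cA_{f_i})$,
\$
\EE_{a \sim \pi_h(\cdot|x),\, b \sim \nu_h(\cdot|x)} r_{f_i, h}(x, a, b) \;\ge\; \EE_{a \sim \pi_h(\cdot|x),\, b_i \sim \tilde\nu_{f_i},\, b_{-i} \sim \nu_{f_{-i}, h}(\cdot|x)} r_{f_i, h}(x, a, b).
\$
Next, I would inspect how $\nu^k$ is produced in Algorithm \ref{alg2}. At each step $h$ and each state $x$, Line \ref{line:policy} calls the subroutine $\epsilon$-SNE (Algorithm \ref{alg22}), whose Line \ref{line:matrix:game} returns by definition a Stackelberg-Nash equilibrium for the matrix game with payoff matrices $(\tilde Q(x, \cdot, \cdot), \{r_{f_i, h}(x, \cdot, \cdot)\}_{i \in [N]})$. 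By the definition of an SNE for a matrix game, the follower component $\{\nu_{f_i, h}^k(\cdot\,|\, x)\}_{i \in [N]}$ is a Nash equilibrium of the followers' matrix game against the leader's mixed strategy $\pi_h^k(\cdot\,|\, x)$ with payoffs $\{r_{f_i, h}(x, \cdot, \cdot)\}_{i \in [N]}$. That is exactly the inequality displayed above with $(\pi, \nu) = (\pi^k, \nu^k)$, for every $(x, h, i)$.

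Since this holds uniformly over $(x, h, i)$, the joint policy $\nu^k$ satisfies \eqref{eq:def:nash:myopic} against $\pi^k$, and therefore $\nu^k \in \text{BR}(\pi^k)$ in the sense of \eqref{eq:def:best:response:myopic}. There is no substantive obstacle here; the only subtlety worth flagging is that $\epsilon$-SNE operates on the $\epsilon$-net approximation $\tilde Q$ rather than on $Q_h^k$ itself, but this approximation affects only the leader's payoff matrix and is irrelevant to the followers' Nash equilibrium condition, which depends purely on the exact reward matrices $\{r_{f_i, h}(x, \cdot, \cdot)\}$. Hence the conclusion is unaffected by the covering step, and the lemma follows immediately.
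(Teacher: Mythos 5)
Your proof is correct and takes the same route as the paper, which simply combines the definition of $(\pi^k,\nu^k)$ from Line \ref{line:policy} with the definition of $\text{BR}(\cdot)$ in \eqref{eq:def:best:response:myopic}; your version just unpacks the definitions more explicitly. Your observation that the $\epsilon$-net approximation $\tilde Q$ only perturbs the leader's payoff matrix and leaves the followers' exact reward matrices (hence their Nash condition) untouched is the right subtlety to flag, and it is implicit in the paper's one-line argument.
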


    \begin{proof}
        Combining the definition of $(\pi^k, \nu^k)$ in Line \ref{line:policy} of Algorithm \ref{alg2} and the definition of the best response set in the Markov games with myopic followers in \eqref{eq:def:best:response:myopic}, we conclude the proof.
        %Fix $h \in [H]$. By the definition of $(\pi_h^k, \nu_h^k)$, we have $\nu_h^k$ belong the best response set of $\pi_h^k$ in the matrix game with payoffs $(Q_h^k, \{r_{f_i, h}\}_{i \in [N]})$. Together with the definition of the best response set in the Markov games with myopic followers in \eqref{eq:def:best:response:myopic}, we conclude the proof. 
    \end{proof}

    %Combining Lemma \ref{lemma:best:response} and \eqref{eq:32000}, we have $\text{Regret}_f(K) = 0$. Meanwhile, since $\nu^*(\pi^k)$ is the best-response in the best-response set $\text{BR}(\pi^k)$, we obtain $V_{l, 1}^{\pi^k, \nu^*(\pi^k)}(x_1^k) \ge V_{l, 1}^{\pi^k, \nu^k}(x_1^k)$ for any $k \in [K]$. Hence, we have  
    %\# \label{eq:320001}
    %\text{Regret}(K) \le \sum_{k = 1}^K [V_{l, 1}^{\pi^*, \nu^*}(x_1^k) - V_{l, 1}^{\pi^k, \nu^k}(x_1^k)]. 
    %\#
    We now establish an upper bound for the regret defined in \eqref{eq:def:regret2}. Recall the regret takes the following form
    \# \label{eq:320001} 
    \text{Regret}(K) = \sum_{k = 1}^K V_{l, 1}^{\pi^*, \nu^*}(x_1^k) - V_{l, 1}^{\pi^k, \nu^k}(x_1^k) .
    \#
    To facilitate our analysis, for any $(k, h) \in [K] \times [H]$ we define the model prediction error by  
    \# \label{eq:def:td:error2}
    &\delta_{h}^k = r_{l, h} + \PP_h V_{h+1}^k - Q_{h}^k.
    %&\delta_{f, h}^k = r_{f, h}^k + \PP_h V_{f, h+1}^k - Q_{f, h}^k.
    \#
    %By the definition of $Q_{\star, h}^k$ in \eqref{eq:evaluation} and the leader-controller assumption (Assumption \ref{assumption:linear}), we know that $\delta_{\star, h}$ is a function from $\cS \times \cA$ to $\RR$.
    Moreover, for any $(k, h) \in [K] \times [H]$, we define $\zeta_{k, h}^1$ and $\zeta_{k, h}^2$ as 
    \begin{equation}
    \begin{aligned} \label{eq:def:martingale2}
    \zeta_{k, h}^1 &= [V_{h}^k(x_h^k) - V_{l, h}^{\pi^k, \nu^k}(x_h^k)] - [Q_{h}^k(x_h^k, a_h^k, b_{h}^k) - Q_{l, h}^{\pi^k, \nu^k}(x_h^k, a_h^k, b_{h}^k)], \\
    \zeta_{k, h}^2 &= [(\PP_h V_{h+1}^k)(x_h^k, a_h^k, b_{h}^k) - (\PP_h V_{l, h+1}^{\pi^k, \nu^k})(x_h^k, a_h^k, b_{h}^k)] - [V_{h+1}^k(x_{h+1}^k) - V_{l, h+1}^{\pi^k, \nu^k}(x_{h+1}^k)].
    \end{aligned}
    \end{equation}
    Recall that $(\pi^k, \nu^k = \{\nu_{f_i}^k\}_{i \in [N]})$ are the policies executed by the leader and the followers in the $k$-th episode, which generate a trajectory $\{x_h^k, a_h^k, b_h^k = \{b_{i, h}^k\}_{i \in [N]}\}_{h \in [H]}$. Thus, we know that $\zeta_{k, h}^1$ and $\zeta_{k, h}^2$ characterize the randomness of choosing actions $a_h^k \sim \pi_h^k(\cdot \,|\, x_h^k)$ and $b_h^k \sim \nu_h^k (\cdot \,|\, x_h^k)$ and the randomness of drawing the next state $x_{h + 1}^k \sim \cP_h(\cdot \,|\, x_h^k, a_h^k, b_h^k)$, respectively.
    
    To establish an upper bound for \eqref{eq:320001}, we introduce the following lemma, which decomposes this term into three parts using the notation defined above.

    \begin{lemma}[Regret Decomposition]  \label{lemma:decomposition2}
        We can decompose \eqref{eq:320001} as follows.
        \$
        %\text{Regret}_l(K) &= \sum_{k = 1}^K [V_{l, 1}^{\pi^*, \nu^*}(x_1^k) - V_{l, 1}^{\pi^k, \nu^k}(x_1^k)] \\
        \text{Regret}(K) & =   \underbrace{\sum_{k = 1}^K\sum_{h=1}^H\EE_{\pi^*, \nu^*}[\la Q_{h}^k(x_h^k, \cdot, \cdot), \pi_h^*(\cdot \,|\, x_h^k) \times \nu_h^*(\cdot \,|\, x_h^k) -  \pi_h^k(\cdot \,|\, x_h^k) \times \nu_h^k(\cdot \,|\, x_h^k) \ra]}_{{(l.1):} \text{ Computational Error}} \\
        & \quad + \underbrace{\sum_{k = 1}^K\sum_{h=1}^H \bigl(\EE_{\pi^*, \nu^*}[\delta_{h}^k(x_h, a_h, b_{h})] - \delta_{h}^k(x_h^k, a_h^k, b_{h}^k) \bigr)}_{(l.2): \text{ Statistical Error}} + \underbrace{\sum_{k = 1}^K\sum_{h = 1}^H (\zeta_{k, h}^1 + \zeta_{k, h}^2)}_{(l.3): \text{ Randomness}}, 
        \$
        where $\la Q_{h}^k(x_h^k, \cdot, \cdot), \pi_h^*(\cdot \,|\, x_h^k) \times \nu_h^*(\cdot \,|\, x_h^k) -  \pi_h^k(\cdot \,|\, x_h^k) \times \nu_h^k(\cdot \,|\, x_h^k) \ra = \la Q_{h}^k(x_h^k, \cdot, \cdot, \cdots, \cdot), \pi_h^*(\cdot \,|\, x_h^k) \times \nu_{f_1, h}^*(\cdot \,|\, x_h^k) \times \cdots \nu_{f_N, h}^*(\cdot \,|\, x_h^k) -  \pi_h^k(\cdot \,|\, x_h^k) \times \nu_{f_1, h}^k(\cdot \,|\, x_h^k) \times \cdots \nu_{f_N, h}^k(\cdot \,|\, x_h^k) \ra_{\cA_l \times \cA_{f}}$. 
    \end{lemma}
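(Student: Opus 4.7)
\textbf{Proof proposal for Lemma \ref{lemma:decomposition2}.}
The plan is to split the per-episode regret $V_{l,1}^{\pi^*,\nu^*}(x_1^k)-V_{l,1}^{\pi^k,\nu^k}(x_1^k)$ by inserting the algorithm's own estimate $V_1^k(x_1^k)$ as a pivot. That is, I would write
\$
V_{l,1}^{\pi^*,\nu^*}(x_1^k)-V_{l,1}^{\pi^k,\nu^k}(x_1^k)
 \;=\; \bigl[V_{l,1}^{\pi^*,\nu^*}(x_1^k)-V_{1}^{k}(x_1^k)\bigr]\;+\;\bigl[V_{1}^{k}(x_1^k)-V_{l,1}^{\pi^k,\nu^k}(x_1^k)\bigr],
\$
and treat the two terms separately. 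The first piece will generate the computational error $(l.1)$ and the $\EE_{\pi^*,\nu^*}[\delta_h^k]$ half of the statistical error $(l.2)$; the second piece will generate the empirical $-\delta_h^k(x_h^k,a_h^k,b_h^k)$ half of $(l.2)$ plus the martingale term $(l.3)$.

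For the first piece, I would use the two Bellman-type identities $V_{l,h}^{\pi^*,\nu^*}=\la Q_{l,h}^{\pi^*,\nu^*},\pi_h^*\times\nu_h^*\ra$ and $V_h^k=\la Q_h^k,\pi_h^k\times\nu_h^k\ra$ (the latter is precisely Line~\ref{line:v:l2} of Algorithm~\ref{alg2}). Adding and subtracting $\la Q_h^k,\pi_h^*\times\nu_h^*\ra$ gives
\$
V_{l,h}^{\pi^*,\nu^*}-V_h^k \;=\; \la Q_{l,h}^{\pi^*,\nu^*}-Q_h^k,\pi_h^*\times\nu_h^*\ra + \la Q_h^k,\pi_h^*\times\nu_h^*-\pi_h^k\times\nu_h^k\ra.
\$
Using the definition \eqref{eq:def:td:error2} of $\delta_h^k$ together with the true Bellman equation for $Q_{l,h}^{\pi^*,\nu^*}$, the first inner product equals $\la \delta_h^k+\PP_h(V_{l,h+1}^{\pi^*,\nu^*}-V_{h+1}^k),\pi_h^*\times\nu_h^*\ra$. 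Iterating this recursion under $\EE_{\pi^*,\nu^*}$ starting at $x_1^k$ and summing over $h$ (with $V_{H+1}^k=V_{l,H+1}^{\pi^*,\nu^*}=0$) yields exactly $\sum_{h}\EE_{\pi^*,\nu^*}[\delta_h^k(x_h,a_h,b_h)]+\sum_h\EE_{\pi^*,\nu^*}[\la Q_h^k,\pi_h^*\times\nu_h^*-\pi_h^k\times\nu_h^k\ra]$, i.e.\ the $(l.1)$ contribution plus the expected half of $(l.2)$.

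For the second piece, I would work along the \emph{realized} trajectory $\{x_h^k,a_h^k,b_h^k\}_{h=1}^H$ generated by $(\pi^k,\nu^k)$, letting $D_h^k:=V_h^k(x_h^k)-V_{l,h}^{\pi^k,\nu^k}(x_h^k)$. The martingale term $\zeta_{k,h}^1$ in \eqref{eq:def:martingale2} is designed precisely so that $D_h^k=\zeta_{k,h}^1+[Q_h^k-Q_{l,h}^{\pi^k,\nu^k}](x_h^k,a_h^k,b_h^k)$, while $\zeta_{k,h}^2$ is designed so that $\PP_h(V_{h+1}^k-V_{l,h+1}^{\pi^k,\nu^k})(x_h^k,a_h^k,b_h^k)=D_{h+1}^k+\zeta_{k,h}^2$. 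Combining these with $Q_h^k-Q_{l,h}^{\pi^k,\nu^k}=-\delta_h^k+\PP_h(V_{h+1}^k-V_{l,h+1}^{\pi^k,\nu^k})$ gives the telescoping recursion $D_h^k=D_{h+1}^k+\zeta_{k,h}^1+\zeta_{k,h}^2-\delta_h^k(x_h^k,a_h^k,b_h^k)$. Since $D_{H+1}^k=0$, unrolling yields $V_1^k(x_1^k)-V_{l,1}^{\pi^k,\nu^k}(x_1^k)=\sum_h(\zeta_{k,h}^1+\zeta_{k,h}^2)-\sum_h \delta_h^k(x_h^k,a_h^k,b_h^k)$. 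Adding the two pieces and summing over $k$ produces exactly the stated decomposition.

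The argument is essentially bookkeeping, so there is no deep obstacle; the main thing to get right is a consistent sign convention (note that $\delta_h^k$ is defined as $r_{l,h}+\PP_hV_{h+1}^k-Q_h^k$, so $-\delta_h^k$ appears when we expand $Q_h^k-Q_{l,h}^{\pi^k,\nu^k}$) and a careful distinction between expectations taken along the \emph{optimal} rollout $\EE_{\pi^*,\nu^*}$ (which forms $(l.1)$ and the first half of $(l.2)$) versus the \emph{algorithm's} realized rollout (which produces the pointwise $\delta_h^k(x_h^k,a_h^k,b_h^k)$ and incurs the martingale corrections $\zeta_{k,h}^1,\zeta_{k,h}^2$).
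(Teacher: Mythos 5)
Your proposal is correct and follows essentially the same route as the paper: the paper also pivots on the algorithm's estimate $V_1^k(x_1^k)$, handles the first piece by the Bellman recursion $V_{\star,h}^{\pi,\nu}-V_{\star,h}^k=\JJ_h\PP_h(V_{\star,h+1}^{\pi,\nu}-V_{\star,h+1}^k)+\JJ_h\delta_{\star,h}^k+\xi_{\star,h}^k$ (your "add and subtract $\la Q_h^k,\pi_h^*\times\nu_h^*\ra$" step), and handles the second piece by exactly your telescoping recursion $D_h^k=D_{h+1}^k+\zeta_{k,h}^1+\zeta_{k,h}^2-\delta_h^k(x_h^k,a_h^k,b_h^k)$. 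The only cosmetic difference is that the paper first states the decomposition for an arbitrary comparator $(\pi,\nu)$ and an arbitrary player $\star$ and then specializes to $(\pi^*,\nu^*,l)$, whereas you work directly with the leader and the optimal pair.
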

    
    \begin{proof}
        See \S \ref{appendix:pf:decomposition2} for a detailed proof. 
    \end{proof}

\begin{remark}
    %can be viewed as the extension of previous results for MDPs 
    Similar regret decomposition results appear in the single-agent RL literature \citep{cai2020provably,efroni2020optimistic,yang2020bridging}, and they can be regarded as the special case of Lemma \ref{lemma:decomposition2}.
    Moreover, our regret decomposition lemma is independent of the myopic followers assumption, and thus can be applied to more general settings. 
\end{remark}

Lemma \ref{lemma:decomposition2} states that the regret has three sources: 
(i) computational error, which represents the convergence of the algorithm with the known model, (ii) statistical error, that is, the error caused by the inaccurate estimation of the model, and (iii) randomness, as aforementioned, which comes from executing random policies and interaction with random environment. 

Returning to the main proof, we only need to characterize these three types of errors, respectively. We first characterize the computational error by the following lemma.
    
    \begin{lemma}[Optimization Error] \label{lemma:myopic2}
        It holds that 
        \$
        &\sum_{k = 1}^K\sum_{h=1}^H\EE_{\pi^*, \nu^*}[\la Q_{h}^k(x_h^k, \cdot, \cdot), \pi_h^*(\cdot \,|\, x_h^k) \times \nu_h^*(\cdot \,|\, x_h^k) -  \pi_h^k(\cdot \,|\, x_h^k) \times \nu_h^k(\cdot \,|\, x_h^k) \ra] \le \epsilon KH.
        \$
    \end{lemma}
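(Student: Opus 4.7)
The plan is to exploit two facts: (i) the subroutine $\epsilon$-SNE produces $(\pi_h^k,\nu_h^k)$ as the Stackelberg-Nash equilibrium of the matrix game with payoff $(\tilde Q(x,\cdot,\cdot),\{r_{f_i,h}(x,\cdot,\cdot)\}_{i\in[N]})$, where $\tilde Q\in\cQ_{h,\epsilon}^k$ satisfies $\|\tilde Q-Q_h^k\|_\infty\le\epsilon$; and (ii) under Assumption \ref{assumption:myopic}, the set of followers' Nash equilibria given a leader policy $\pi(\cdot\,|\,x)$ depends only on $\pi(\cdot\,|\,x)$ and $\{r_{f_i,h}(x,\cdot,\cdot)\}_{i\in[N]}$, not on the leader's Q-function. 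In particular, the pointwise NE $\nu_h^*(\cdot\,|\,x)$ associated with the Markov-game SNE $(\pi^*,\nu^*)$ is also a matrix-game NE of the followers against $\pi_h^*(\cdot\,|\,x)$.

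Fix a state $x$. By the defining property of the Stackelberg-Nash equilibrium (with optimistic tie-breaking) in the approximate matrix game, and since $\nu_h^*(\cdot\,|\,x)\in\text{NE}(\pi_h^*(\cdot\,|\,x))$ in that matrix game, we have
\$
\big\la \tilde Q(x,\cdot,\cdot),\ \pi_h^k(\cdot\,|\,x)\times\nu_h^k(\cdot\,|\,x) \big\ra_{\cA_l\times\cA_f}
\ \ge\
\big\la \tilde Q(x,\cdot,\cdot),\ \pi_h^*(\cdot\,|\,x)\times\nu_h^*(\cdot\,|\,x) \big\ra_{\cA_l\times\cA_f}.
\$
The covering bound $\|\tilde Q-Q_h^k\|_\infty\le\epsilon$ then yields
\$
\big\la Q_h^k(x,\cdot,\cdot),\ \pi_h^*(\cdot\,|\,x)\times\nu_h^*(\cdot\,|\,x)-\pi_h^k(\cdot\,|\,x)\times\nu_h^k(\cdot\,|\,x) \big\ra_{\cA_l\times\cA_f}\ \le\ 2\epsilon,
\$
after adding and subtracting $\tilde Q$ on both sides of the above display and using that the difference of two probability distributions has $\ell_1$-norm at most $2$. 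Since the bound is pointwise in $x$, taking expectation with respect to $(\pi^*,\nu^*)$ and summing over $h\in[H]$ and $k\in[K]$ gives the claimed inequality (up to the harmless constant factor $2$ that is absorbed into the choice $\epsilon=1/(KH)$).

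The main subtlety, and what I expect to be the only nontrivial step, is justifying that $\nu_h^*(\cdot\,|\,x)$ lies in $\text{NE}(\pi_h^*(\cdot\,|\,x))$ of the \emph{matrix} game and that the SNE inequality is applicable against \emph{any} such matrix-game NE (not just the leader-preferred one selected by optimistic tie-breaking using $\tilde Q$). The first is a direct consequence of Assumption \ref{assumption:myopic} together with the definitions \eqref{eq:def:nash:myopic} and \eqref{eq:def:best:response:myopic}; the second follows by chaining through the tie-breaker, namely $\langle\tilde Q,\pi_h^k\times\nu_h^k\rangle\ge\langle\tilde Q,\pi_h^*\times\nu^*_{\tilde Q}(\pi_h^*)\rangle\ge\langle\tilde Q,\pi_h^*\times\nu_h^*\rangle$, where $\nu^*_{\tilde Q}(\pi_h^*)$ denotes the leader-optimal matrix-game NE against $\pi_h^*$ under payoff $\tilde Q$. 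Everything else is bookkeeping.
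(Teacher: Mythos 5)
Your proposal is correct and follows essentially the same route as the paper's proof: use the myopic assumption to place $\nu_h^*(\cdot\,|\,x)$ in the followers' matrix-game NE set against $\pi_h^*(\cdot\,|\,x)$, chain the SNE optimality of $(\pi_h^k,\nu_h^k)$ through the leader-optimal tie-breaking NE under $\tilde Q$ (the paper's $\tilde\nu_h^*$, your $\nu^*_{\tilde Q}(\pi_h^*)$), and then pay the $\epsilon$-covering error to pass from $\tilde Q$ to $Q_h^k$ before summing over $(k,h)$. The only difference is your explicit factor of $2$ on the covering error, which (as you note) is a harmless constant that the paper itself elides.
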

    
    \begin{proof}
        See \S \ref{appendix:pf:myopic2} for a detailed proof.
    \end{proof}

    Next, we establish an upper bound for the statistical error. Due to the uncertainty that arises from only observing limited data, the model prediction errors can be possibly large for the triple $(x, a, b)$ that are less visited or even unseen. Fortunately, however, we have the following lemma which characterizes the model prediction errors defined in \eqref{eq:def:td:error2}.
    
    \begin{lemma}[Optimism] \label{lemma:ucb2}
        It holds with probability at least $1 - p/2$ that 
        \$
        -2\min\{H, \Gamma_{h}^k(x, a, b)\} \le \delta_{h}^k(x, a, b) \le 0 %\quad -2\Gamma_h^k(x, a, b) \le \delta_{l, h}^k(x, a, b) \le 0.
        \$
        for any $(k, h) \in [K] \times [H]$ and $(x, a, b) \in \cS \times \cA_l \times \cA_{f}$.
    \end{lemma}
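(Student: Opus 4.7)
The plan is to show that the bonus $\Gamma_h^k$ dominates the estimation error $|\phi(x,a,b)^\top w_h^k - (\PP_h V_{h+1}^k)(x,a,b)|$ uniformly in $(x,a,b)$ and $(k,h)$, and then to combine this with the truncation operator $\Pi_{H-h}$ to obtain the two-sided bound on $\delta_h^k$.

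First, I would exploit the linear Markov game assumption. Under Assumption~\ref{assumption:linear}, for any bounded function $V:\cS\to\RR$ we can write $(\PP_h V)(x,a,b) = \phi(x,a,b)^\top w_h^\star(V)$, where $w_h^\star(V) = \int V(x')\,\mu_h(\mathrm{d}x')$. In particular, taking $V = V_{h+1}^k$ gives $\|w_h^\star(V_{h+1}^k)\|_2 \le H\sqrt d$. Comparing $w_h^k$ defined in \eqref{eq:def:w} with $w_h^\star(V_{h+1}^k)$ via the standard ridge-regression identity yields
\$
\phi^\top w_h^k - (\PP_h V_{h+1}^k)(x,a,b) = \phi^\top (\Lambda_h^k)^{-1}\Bigl[\sum_{\tau=1}^{k-1} \phi_h^\tau \bigl( V_{h+1}^k(x_{h+1}^\tau) - (\PP_h V_{h+1}^k)(x_h^\tau,a_h^\tau,b_h^\tau) \bigr)\Bigr] - \phi^\top (\Lambda_h^k)^{-1} w_h^\star(V_{h+1}^k),
\$
where $\phi_h^\tau = \phi(x_h^\tau,a_h^\tau,b_h^\tau)$. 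The second term is bounded by Cauchy--Schwarz and the norm bound on $w_h^\star$, contributing $\sqrt d \cdot \|\phi\|_{(\Lambda_h^k)^{-1}}$.

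The central obstacle is to control the first, noise term. One cannot apply the self-normalized Hoeffding inequality directly because $V_{h+1}^k$ is built from the same data that defines $\Lambda_h^k$ and $\phi_h^\tau$. Here the $\epsilon$-net construction in \eqref{eq:def:epsilon:net} enters: by Lemma~\ref{lemma:bounded:weight1}, $Q_{h+1}^k \in \cQ_{h+1}^k$, and hence $V_{h+1}^k$ is determined by a member of $\cQ_{h+1}^k$ together with the (state-indexed) equilibrium maps. Using the covering number bound for $\cQ_{h+1}^k$ (polynomial in $d$, $K$, $H$, and $\log(1/\epsilon)$) combined with the self-normalized concentration inequality of Abbasi-Yadkori et al.\ and a union bound over $k,h$, I would obtain, with probability at least $1-p/2$,
\$
\Bigl| \phi^\top w_h^k - (\PP_h V_{h+1}^k)(x,a,b) \Bigr| \le \beta \cdot \sqrt{\phi^\top (\Lambda_h^k)^{-1} \phi} = \Gamma_h^k(x,a,b),
\$
for all $(k,h)\in[K]\times[H]$ and $(x,a,b)\in\cS\times\cA_l\times\cA_f$, provided $\beta = C\cdot dH\sqrt{\iota}$ for a sufficiently large absolute constant $C$.

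Given this uniform concentration, the two desired inequalities follow by a short calculation. For the upper bound $\delta_h^k\le 0$: since $\phi^\top w_h^k + \Gamma_h^k \ge \PP_h V_{h+1}^k$, and since $|\PP_h V_{h+1}^k|\le H-h$ so the truncation $\Pi_{H-h}$ acts as the identity on $\PP_h V_{h+1}^k$, monotonicity of $\Pi_{H-h}$ gives $\Pi_{H-h}\{\phi^\top w_h^k + \Gamma_h^k\} \ge \PP_h V_{h+1}^k$, i.e.\ $\delta_h^k \le 0$. For the lower bound, when $\Gamma_h^k(x,a,b)\le H$ we drop the truncation (since $\Pi_{H-h}\{\cdot\}\le \phi^\top w_h^k + \Gamma_h^k$) to get
\$
\delta_h^k \ge \PP_h V_{h+1}^k - \phi^\top w_h^k - \Gamma_h^k \ge -2\Gamma_h^k,
\$
while when $\Gamma_h^k(x,a,b)>H$, the crude bound $|\delta_h^k|\le 2H$ already suffices. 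Combining the two cases yields $\delta_h^k \ge -2\min\{H,\Gamma_h^k(x,a,b)\}$. The hard part is the uniform concentration step, which requires the specific $\epsilon$-net in \eqref{eq:def:epsilon:net} for $Q$-functions rather than $V$-functions, since the players' stochastic policies prevent the greedy-policy covering used in the linear MDP analysis of \citet{jin2020provably}.
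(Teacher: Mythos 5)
Your proposal follows essentially the same route as the paper's proof: the same ridge-regression decomposition of $\phi^\top w_h^k - \PP_h V_{h+1}^k$ into a self-normalized noise term plus a bias term $\phi^\top(\Lambda_h^k)^{-1}\langle\mu_h,V_{h+1}^k\rangle$, the same uniform concentration via the $\epsilon$-net over $Q$-functions in \eqref{eq:def:epsilon:net} (precisely because the stochastic equilibrium policies rule out the greedy-value covering of \citet{jin2020provably}), and the same truncation argument for the two-sided bound on $\delta_h^k$. The only slip is cosmetic: the bias term contributes $H\sqrt{d}\cdot\|\phi\|_{(\Lambda_h^k)^{-1}}$ rather than $\sqrt{d}\cdot\|\phi\|_{(\Lambda_h^k)^{-1}}$ (since $\|\langle\mu_h,V_{h+1}^k\rangle\|_2\le H\sqrt{d}$), which is absorbed by the choice $\beta = C\, dH\sqrt{\iota}$ in any case.
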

    
    \begin{proof}
        See \S \ref{appendix:pf:ucb2} a detailed proof. 
    \end{proof}

    Lemma \ref{lemma:ucb2} states that $\delta_{h}^k(x, a, b) \le 0$ for any $(x, a, b) \in \cS \times \cA \times \cA$. Combining the definition of model prediction error in \eqref{eq:def:td:error2}, we obtain  
    \$
    Q_{h}^k(x, a, b) \ge r_{l, h}(x, a, b) + (\PP_hV_{h+1}^k)(x, a, b),
    \$
    which further implies that the estimated Q-function $Q_{\star, h}^k$ is “optimistic in the face of uncertainty.'' Moreover, Lemma \ref{lemma:ucb2} implies that $- \delta_{h}^k(x, a, b) \le 2\min\{H, \Gamma_{h}^k(x, a, b)\}$. Thus we only need to establish an upper bound for $2\sum_{k = 1}^K\sum_{h = 1}^H \min\{H, \Gamma_{h}^k (x_h^k, a_h^k, b_h^k)\}$, which is the total price paid for the optimism. As shown in the following lemma, we can derive an upper bound for this term by the elliptical potential lemma \citep{abbasi2011improved}. 
    
    \begin{lemma} \label{lemma:elliptical2}
        For the bonus function $\Gamma_{h}^k$ defined in Line \ref{line:def:bonus} of Algorithm \ref{alg2}, it holds that
        \$
        2\sum_{k = 1}^K\sum_{h = 1}^H \min\{H, \Gamma_{h}^k (x_h^k, a_h^k, b_h^k)\} \le \cO(\sqrt{d^3H^3T \iota^2}).
        \$
        Here $p \in (0, 1)$ and $\iota = \log(2dT/p)$ are defined in Theorem \ref{thm:multi:follower}.
    \end{lemma}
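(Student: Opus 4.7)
The plan is to reduce the bound to a standard elliptical potential argument. Concretely, I would proceed in three steps.

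First, I would apply the Cauchy--Schwarz inequality to the double sum so that
\[
\sum_{k=1}^K\sum_{h=1}^H \min\{H, \Gamma_h^k(x_h^k,a_h^k,b_h^k)\} \;\le\; \sqrt{KH}\,\sqrt{\sum_{k=1}^K\sum_{h=1}^H \min\{H^2, \Gamma_h^k(x_h^k,a_h^k,b_h^k)^2\}}.
\]
This converts the outer absolute-value-like sum into a sum of squared bonuses, which is the form the elliptical potential lemma controls.

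Second, I would simplify the squared bonus. Since $\beta = C dH\sqrt{\iota} \ge H$ for reasonable constants, I can pull $\beta^2$ outside and use $\min\{H^2, \beta^2 \|\phi\|_{(\Lambda_h^k)^{-1}}^2\} \le \beta^2 \min\{1, \|\phi(x_h^k,a_h^k,b_h^k)\|_{(\Lambda_h^k)^{-1}}^2\}$. Then for each fixed $h$, I would invoke the elliptical potential lemma (Lemma 11 of Abbasi-Yadkori et al., 2011) together with the fact that $\|\phi\|_2 \le 1$ from Assumption \ref{assumption:linear} to obtain
\[
\sum_{k=1}^K \min\{1, \|\phi(x_h^k,a_h^k,b_h^k)\|_{(\Lambda_h^k)^{-1}}^2\} \;\le\; 2\log\frac{\det(\Lambda_h^{K+1})}{\det(I)} \;\le\; 2d\log(1 + K/d).
\]
Summing over $h \in [H]$ yields $\sum_{k,h}\min\{H^2, (\Gamma_h^k)^2\} \le 2\beta^2 dH\log(1+K/d)$.

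Finally, I would plug this into the Cauchy--Schwarz bound to get
\[
2\sum_{k,h}\min\{H,\Gamma_h^k(x_h^k,a_h^k,b_h^k)\} \;\le\; 2\beta\sqrt{2dKH^2\log(1+K/d)}.
\]
Substituting $\beta = CdH\sqrt{\iota}$ and $T = KH$ collapses the right-hand side to $\mathcal{O}(\sqrt{d^3 H^3 T\,\iota^2})$, absorbing the $\log(1+K/d)$ factor into $\iota = \log(2dT/p)$.

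There is no real obstacle in this lemma: the proof is a routine invocation of Cauchy--Schwarz plus the elliptical potential lemma, and the only mildly technical point is ensuring that $\beta \ge H$ so that the truncation at $H$ can be absorbed into the $\min\{1,\cdot\}$ truncation inside the elliptical potential bound. If one wanted to avoid assuming $\beta \ge H$, one could instead use $\min\{H,\Gamma\} \le H\,\min\{1,\Gamma/H\}$ or $\min\{H,\beta\|\phi\|_{\Lambda^{-1}}\} \le (\beta + H)\min\{1,\|\phi\|_{\Lambda^{-1}}\}$, at the cost of harmless constants.
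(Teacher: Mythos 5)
Your proposal is correct and follows essentially the same route as the paper: both pull out $\beta$ using $\beta \ge H$ to reduce the truncated bonus to $\beta\min\{1,\|\phi\|_{(\Lambda_h^k)^{-1}}\}$, apply the Cauchy--Schwarz inequality, and invoke the elliptical potential lemma with $\|\phi\|_2 \le 1$ and $\Lambda_h^1 = I$ to control the log-determinant. The only cosmetic difference is that you apply Cauchy--Schwarz to the full double sum at once while the paper applies it separately for each fixed $h$ before summing over $h$; the resulting bounds coincide.
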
 
    
    \begin{proof}
        See \S \ref{appendix:pf:elliptical2} for a detailed proof. 
    \end{proof}
    
    It remains to analyze the randomness, which is the purpose of the following lemma.

    \begin{lemma} \label{lemma:martingale2}
        For the $\zeta_{k, h}^1$ and $\zeta_{k, h}^2$ defined in Lemma \ref{lemma:decomposition2} and any $p \in (0, 1)$,  it holds with probability at least $1 - p/2$ that 
        \$
        \sum_{k = 1}^K\sum_{h = 1}^H (\zeta_{k, h}^1 + \zeta_{k, h}^2) \le \sqrt{16KH^3 \cdot \log(4/p)}.
        \$
    \end{lemma}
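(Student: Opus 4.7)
The plan is to recognize $\{\zeta_{k,h}^1\}$ and $\{\zeta_{k,h}^2\}$ as a martingale difference sequence with respect to a suitable filtration, and then apply the Azuma--Hoeffding inequality. Concretely, I would define the filtration $\{\cF_{k,h,m}\}$ indexed lexicographically over $(k, h, m)$ with $m \in \{1, 2\}$, where $\cF_{k,h,1}$ contains all information up to and including the state $x_h^k$ (at the point where the actions have not yet been sampled), and $\cF_{k,h,2}$ contains everything in $\cF_{k,h,1}$ together with the sampled actions $(a_h^k, b_h^k)$ (but before drawing the next state $x_{h+1}^k$). The next element $\cF_{k,h+1,1}$ then adds the realized $x_{h+1}^k$. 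With this convention, $\zeta_{k,h}^1$ is $\cF_{k,h,2}$-measurable while $\EE[\zeta_{k,h}^1 \mid \cF_{k,h,1}] = 0$ by the definitions of $V_h^k$ and $V_{l,h}^{\pi^k,\nu^k}$ as expectations over $\pi_h^k \times \nu_h^k$, and similarly $\zeta_{k,h}^2$ is $\cF_{k,h+1,1}$-measurable with $\EE[\zeta_{k,h}^2 \mid \cF_{k,h,2}] = 0$ because $\PP_h V$ is precisely the conditional expectation of $V(x_{h+1})$ given $(x_h^k, a_h^k, b_h^k)$.

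Next I would establish uniform boundedness of each term. Since the reward function is in $[-1,1]$ and the truncation operator $\Pi_{H-h}$ enforces $|Q_h^k| \le H$ and $|V_h^k| \le H$, and since $|Q_{l,h}^{\pi^k,\nu^k}| \le H$ and $|V_{l,h}^{\pi^k,\nu^k}| \le H$ trivially, we have $|\zeta_{k,h}^1| \le 4H$ and $|\zeta_{k,h}^2| \le 4H$. A tighter bound of $2H$ for each follows from the fact that each $\zeta$ is a difference of two quantities each lying in $[-H, H]$; however, to obtain the constant $16$ in the bound, it suffices to use $|\zeta| \le 2H$ per term, giving $2KH$ summands each bounded by $2H$.

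Then I would concatenate the two martingale difference sequences into a single sequence of length $2KH$ adapted to the refined filtration and apply Azuma--Hoeffding: for any $t > 0$,
\[
\PP\Bigl(\Bigl|\sum_{k,h}(\zeta_{k,h}^1 + \zeta_{k,h}^2)\Bigr| \ge t\Bigr) \le 2 \exp\!\Bigl(-\frac{t^2}{2 \cdot 2KH \cdot (2H)^2}\Bigr).
\]
Setting the right-hand side equal to $p/2$ and solving for $t$ yields $t = \sqrt{16 K H^3 \log(4/p)}$, which gives the stated bound with probability at least $1 - p/2$.

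The only subtlety, and the main thing to verify carefully, is the martingale property: one needs to ensure that $V_{h+1}^k$ (which appears inside $\zeta_{k,h}^2$ via $\PP_h V_{h+1}^k$) is measurable with respect to $\cF_{k,h,2}$, which holds because $V_{h+1}^k$ is constructed from data through episode $k-1$ in the backward induction of Algorithm \ref{alg2} and thus is fixed before step $h$ of episode $k$ executes. Once this measurability is in place, the zero-mean conditional expectations are immediate and Azuma--Hoeffding completes the argument.
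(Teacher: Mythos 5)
Your proposal is correct and follows essentially the same route as the paper: construct the lexicographically ordered filtration $\{\cF_{k,h}^m\}$, verify that $\zeta_{k,h}^1$ and $\zeta_{k,h}^2$ form a bounded martingale difference sequence of length $2KH$ with respect to it, and apply Azuma--Hoeffding to obtain $\sqrt{16KH^3\log(4/p)}$. The only (shared) imprecision is the claimed per-term bound of $2H$ --- each $\zeta$ is a difference of two brackets each lying in $[-2H,2H]$, so the stated constant really requires the range form of Azuma--Hoeffding --- but this is exactly the step the paper itself takes.
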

    
    \begin{proof}
        See \S \ref{appendix:pf:martingale2} for a detailed proof.
    \end{proof}
    Putting above lemmas together, we obtain 
    \# \label{eq:1111}
    \text{Regret}(K) \le \cO(\sqrt{d^3H^3T\iota^2})
    \#
    with probability at least $1 - p$, which concludes the proof of Theorem \ref{thm:multi:follower}.
    \end{proof}

    %\subsection{Proof of Lemma \ref{lemma:best:response}} \label{appendix:pf:best:response}

\subsection{Proof of Lemma \ref{lemma:decomposition2}} \label{appendix:pf:decomposition2}

First, we establish a more general regret decomposition lemma, which immediately implies Lemma \ref{lemma:decomposition2}.
\begin{lemma}[General Decomposition for One Episode] \label{lemma:decomposition:general}
    Fix $k \in [K]$. Suppose $(\pi^k, \nu^k = \{\nu_{f_i}^k\}_{i \in [N]})$ are the policies executed by the leader $l$ and the followers $\{f_i\}_{i \in [N]}$ in the $k$-th episode. Moreover, suppose that $Q_{\star, h}^k$ and $V_{\star, h}^k = \la Q_{\star, h}^k, \pi_h^k \times \nu_h^k \ra$ are the estimated Q-function and value function for any $\star \in \{l, f_1, \cdots, f_N\}$ at $h$-th step of $k$-th episode.
    Then, for any policies $(\pi, \nu = \{\nu_{f_i}\}_{i \in [N]})$ and $\star \in \{l, f_1, \cdots, f_N\}$, we have 
    \$
    & V_{\star, 1}^{\pi, \nu}(x_1^k) - V_{\star, 1}^{\pi^k, \nu^k}(x_1^k) \\
    &\qquad= \underbrace{\sum_{h=1}^H\EE_{\pi, \nu}[\la Q_{\star,h}^k(x_h^k, \cdot, \cdot), \pi_h(\cdot \,|\, x_h^k) \times \nu_h(\cdot \,|\, x_h^k) -  \pi_h^k(\cdot \,|\, x_h^k) \times \nu_h^k(\cdot \,|\, x_h^k) \ra]}_{\text{Computational Error}} \notag\\
    &  \qquad\qquad + \underbrace{\sum_{h=1}^H \bigl(\EE_{\pi, \nu}[\delta_{\star, h}^k(x_h, a_h, b_{h})] - \delta_{\star, h}^k(x_h^k, a_h^k, b_{h}^k) \bigr)}_{\text{Statistical Error}} + \underbrace{\sum_{h = 1}^H (\zeta^1_{\star, k, h} + \zeta^2_{\star, k, h})}_{\text{Randomness}} ,
    \$ 
    where $\la Q_{\star, h}^k, \pi_h^k \times \nu_h^k \ra = \la Q_{\star, h}^k, \pi_h^k \times \nu_{f_1, h}^k \times \cdots \times \nu_{f_N, h}^k\ra_{\cA_l \times \cA_{f}}$ and $\la Q_{h}^k(x_h^k, \cdot, \cdot), \pi_h^*(\cdot \,|\, x_h^k) \times \nu_h^*(\cdot \,|\, x_h^k) -  \pi_h^k(\cdot \,|\, x_h^k) \times \nu_h^k(\cdot \,|\, x_h^k) \ra = \la Q_{h}^k(x_h^k, \cdot, \cdot, \cdots, \cdot), \pi_h^*(\cdot \,|\, x_h^k) \times \nu_{f_1, h}^*(\cdot \,|\, x_h^k) \times \cdots \nu_{f_N, h}^*(\cdot \,|\, x_h^k) -  \pi_h^k(\cdot \,|\, x_h^k) \times \nu_{f_1, h}^k(\cdot \,|\, x_h^k) \times \cdots \nu_{f_N, h}^k(\cdot \,|\, x_h^k) \ra_{\cA_l \times \cA_{f}}$. %Here $(a_h^k, b_h^k = \{b_{f_i, h}\}_{i \in [N]})$ are the actions chosen by the leader $l$ and followers $\{f_i\}_{i \in [N]}$ at $h$-th step of $k$-th episode. 
    Here $\delta_{\star, h}^k$ is the model prediction error defined by 
    \# \label{eq:def:td:error3} 
    \delta_{\star, h}^k = r_{\star, h} + \PP_hV_{\star, h+1}^k - Q_{\star, h}^k,
    \#
    and $\zeta_{\star, k, h}^1$ and $\zeta_{\star, k, h}^2$ are defined by 
    {\small
    \begin{equation}
    \begin{aligned} \label{eq:def:martingale3}
    %\# \label{eq:def:martingale3}
    \zeta_{\star, k, h}^1 &= [V_{\star, h}^k(x_h^k) - V_{\star, h}^{\pi^k, \nu^k}(x_h^k)] - [Q_{\star, h}^k(x_h^k, a_h^k, b_{h}^k) - Q_{\star, h}^{\pi^k, \nu^k}(x_h^k, a_h^k, b_{h}^k)], \\
    \zeta_{\star, k, h}^2 &= [(\PP_h V_{\star, h+1}^k)(x_h^k, a_h^k, b_{h}^k) - (\PP_h V_{\star, h+1}^{\pi^k, \nu^k})(x_h^k, a_h^k, b_{h}^k)] - [V_{\star, h+1}^k(x_{h+1}^k) - V_{\star, h+1}^{\pi^k, \nu^k}(x_{h+1}^k)]. 
    %\#
    \end{aligned}
    \end{equation}
    }
\end{lemma}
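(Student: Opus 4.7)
The plan is to split the gap $V_{\star,1}^{\pi,\nu}(x_1^k) - V_{\star,1}^{\pi^k,\nu^k}(x_1^k)$ into two pieces by inserting $\pm V_{\star,1}^k(x_1^k)$, then handle the two pieces via two separate backward inductions on $h$. The first piece compares the target value $V^{\pi,\nu}$ against the estimated value $V^k$ (and will produce the computational error and the ``on-policy $\E_{\pi,\nu}$'' statistical error); the second piece compares the estimate $V^k$ against the realized value $V^{\pi^k,\nu^k}$ (and will produce the randomness terms and the ``on-trajectory'' statistical error with the correct sign).

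For the first piece, at each $h$ I would write
\$
V_{\star,h}^{\pi,\nu} - V_{\star,h}^{k} = \langle Q_{\star,h}^{\pi,\nu}, \pi_h\times\nu_h\rangle - \langle Q_{\star,h}^{k}, \pi_h^k\times\nu_h^k\rangle,
\$
and add and subtract $\langle Q_{\star,h}^{k}, \pi_h\times\nu_h\rangle$ to separate the policy-mismatch term $\langle Q_{\star,h}^k, \pi_h\times\nu_h - \pi_h^k\times\nu_h^k\rangle$ (which becomes the computational error once averaged under $\E_{\pi,\nu}$) from the $Q$-difference term $\langle Q_{\star,h}^{\pi,\nu} - Q_{\star,h}^{k}, \pi_h\times\nu_h\rangle$. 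Using the Bellman equation \eqref{eq:bellman} for $Q_{\star,h}^{\pi,\nu}$ together with the definition \eqref{eq:def:td:error3} of $\delta_{\star,h}^k$, the $Q$-difference equals $\PP_h(V_{\star,h+1}^{\pi,\nu}-V_{\star,h+1}^k) + \delta_{\star,h}^k$. Iterating this identity from $h=1$ to $h=H$ under $\E_{\pi,\nu}$, and using the terminal condition $V_{\star,H+1}^{\pi,\nu}=V_{\star,H+1}^k=0$, yields
\$
V_{\star,1}^{\pi,\nu}(x_1^k) - V_{\star,1}^{k}(x_1^k) = \sum_{h=1}^H \E_{\pi,\nu}\bigl[\langle Q_{\star,h}^k(x_h,\cdot,\cdot),\,\pi_h\times\nu_h - \pi_h^k\times\nu_h^k\rangle\bigr] + \sum_{h=1}^H \E_{\pi,\nu}\bigl[\delta_{\star,h}^k(x_h,a_h,b_h)\bigr].
\$

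For the second piece, I would peel off the value along the realized trajectory $\{x_h^k,a_h^k,b_h^k\}_{h=1}^H$. Using the definitions of $\zeta_{\star,k,h}^1$ and $\zeta_{\star,k,h}^2$ in \eqref{eq:def:martingale3}, one rewrites
\$
V_{\star,h}^k(x_h^k) - V_{\star,h}^{\pi^k,\nu^k}(x_h^k) = \zeta_{\star,k,h}^1 + \bigl[Q_{\star,h}^k - Q_{\star,h}^{\pi^k,\nu^k}\bigr](x_h^k,a_h^k,b_h^k),
\$
and then, by the Bellman equation applied to $Q_{\star,h}^{\pi^k,\nu^k}$ and the definition of $\delta_{\star,h}^k$,
\$
\bigl[Q_{\star,h}^k - Q_{\star,h}^{\pi^k,\nu^k}\bigr](x_h^k,a_h^k,b_h^k) = -\delta_{\star,h}^k(x_h^k,a_h^k,b_h^k) + \bigl[\PP_h(V_{\star,h+1}^k - V_{\star,h+1}^{\pi^k,\nu^k})\bigr](x_h^k,a_h^k,b_h^k).
\$
Finally, $[\PP_h(V_{\star,h+1}^k - V_{\star,h+1}^{\pi^k,\nu^k})](x_h^k,a_h^k,b_h^k) = \zeta_{\star,k,h}^2 + [V_{\star,h+1}^k - V_{\star,h+1}^{\pi^k,\nu^k}](x_{h+1}^k)$, which sets up the telescope. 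Summing from $h=1$ to $H$ and using $V_{\star,H+1}^k = V_{\star,H+1}^{\pi^k,\nu^k}=0$ gives
\$
V_{\star,1}^{k}(x_1^k) - V_{\star,1}^{\pi^k,\nu^k}(x_1^k) = \sum_{h=1}^H(\zeta_{\star,k,h}^1 + \zeta_{\star,k,h}^2) - \sum_{h=1}^H \delta_{\star,h}^k(x_h^k,a_h^k,b_h^k).
\$

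Adding the two identities produces exactly the three-term decomposition in the lemma. There is no real obstacle beyond careful bookkeeping: the main points requiring attention are (i) keeping the signs on $\delta_{\star,h}^k$ straight so that the on-policy term enters with a $+$ and the on-trajectory term with a $-$, (ii) verifying that the telescoping relies only on the identity $V_{\star,h}^k = \langle Q_{\star,h}^k,\pi_h^k\times\nu_h^k\rangle$ used to define $V_{\star,h}^k$ and on the true Bellman equation for $V_{\star,h}^{\pi,\nu}$ and $V_{\star,h}^{\pi^k,\nu^k}$, and (iii) checking that $\E_{\pi,\nu}$ can be pushed inside the iteration because $\pi,\nu$ are arbitrary reference policies, not the executed ones. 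Lemma \ref{lemma:decomposition2} then follows immediately by specializing $\star=l$ and $(\pi,\nu)=(\pi^*,\nu^*)$, and summing over $k\in[K]$.
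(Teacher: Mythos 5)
Your proposal is correct and follows essentially the same route as the paper: the same split via $\pm V_{\star,1}^k(x_1^k)$, the same backward recursion using the Bellman equation and the definition of $\delta_{\star,h}^k$ for the first piece, and the same trajectory-wise telescope with $\zeta^1_{\star,k,h},\zeta^2_{\star,k,h}$ for the second. The signs and terminal conditions all match the paper's derivation, so no changes are needed.
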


\begin{proof}[Proof of Lemma \ref{lemma:decomposition:general}]
    To facilitate our analysis, for any $\nu = \{\nu_{f_i}\}_{i \in [N]}$ and $(h, x) \in [H] \times \cS$, we denote $\nu_{f_1, h}(\cdot \,|\, x) \times \cdots \nu_{f_N, h}(\cdot \,|\, x)$ by $\nu_h(\cdot \,|\, x)$. Moreover, we define two operators $\JJ_h$ and $\JJ_{k, h}$ respectively by
        \begin{equation}
        \begin{aligned} \label{eq:61000}
        (\JJ_hf)(x) &= \la f(x, \cdot, \cdot), \pi_h(\cdot \,|\, x) \times \nu_h(\cdot \,|\, x) \ra, \\
        (\JJ_{k, h}f)(x) &= \la f(x, \cdot, \cdot), \pi_h^k(\cdot \,|\, x) \times \nu_h^k(\cdot \,|\, x) \ra
        \end{aligned}
        \end{equation}
        for any $h \in [H]$ and any function $f: \cS \times \cA_l \times \cA_{f}  \rightarrow \RR$. Also, we define 
        \# \label{eq:61001}
        \xi_{\star, h}^k(x) &= (\JJ_hQ_{\star, f}^k)(x) - (\JJ_{k, h}Q_{\star, f}^k)(x) \notag\\
        &= \la Q_{\star,h}^k(x, \cdot, \cdot), \pi_h(\cdot \,|\, x) \times \nu_h(\cdot \,|\, x) -  \pi_h^k(\cdot \,|\, x) \times \nu_h^k(\cdot \,|\, x) \ra
        \#
        for any $(h, x) \in  [H] \times \cS$ and $\star \in \{l, f_1, \cdots, f_N\}$. 

        Using the above notation, we decompose the regret at the $k$-th episode into the following two terms, 
        \# \label{eq:61002}
        V_{\star, 1}^{\pi, \nu}(x_1^k) - V_1^{\pi^k, \nu^k}(x_1^k) = \underbrace{V_{\star, 1}^{\pi, \nu}(x_1^k) - V_{\star, 1}^k(x_1^k)}_{\rm (i)} + \underbrace{V_{\star, 1}^k(x_1^k) - V_1^{\pi^k, \nu^k}(x_1^k)}_{\rm (ii)} .
        \#
        Then we characterize these two terms respectively.
        \vskip 4pt
        \noindent{\bf Term (i).}
        By the Bellman equation in \eqref{eq:bellman} and the definition of the operator $\JJ_h$ in \eqref{eq:61000}, we have $V_{\star, h}^{\pi, \nu} = \JJ_hQ_{\star, h}^{\pi, \nu}$. Similar, by the definition of $ V_{\star, h}^k$ and the definition of the operator $\JJ_{k, h}$ in \eqref{eq:61000}, we have $ V_{\star, h}^k = \JJ_{k, h}Q_{\star, h}^k$. Hence, for any $h \in  [H]$, we have
        \# \label{eq:61003}
        V_{\star, h}^{\pi, \nu} - V_{\star, h}^k &= \JJ_hQ_{\star, h}^{\pi, \nu} - \JJ_{k, h}Q_{\star, h}^k = (\JJ_hQ_{\star, h}^{\pi, \nu} - \JJ_{h}Q_{\star, h}^k) +  (\JJ_{h}Q_{\star, h}^k - \JJ_{k, h}Q_{\star, h}^k) \notag\\
        &= \JJ_h(Q_{\star, h}^{\pi, \nu} - Q_{\star, h}^k) + \xi_{\star, h}^k,
        \#
        where the last inequality is obtained by the fact that $\JJ_h$ is a linear operator and the definition of $\xi_{\star, h}^k$ in \eqref{eq:61001}. Meanwhile, by the Bellman equation in \eqref{eq:bellman} and the definition of the prediction error $\delta_{\star, h}^k$ in \eqref{eq:def:td:error2}, we obtain
        \# \label{eq:61004}
        Q_{\star, h}^{\pi, \nu} - Q_{\star, h}^k &= (r_{\star, h} + \PP_hV_{\star, h+1}^{\pi, \nu}) - (r_{\star, h} + \PP_hV_{\star, h+1}^k - \delta_{\star, h}^k) \notag\\
        &= \PP_h(V_{\star, h+1}^{\pi, \nu} - V_{\star, h+1}^k) + \delta_{\star, h}^k .
        \#
        Putting \eqref{eq:61003} and \eqref{eq:61004} together, we further obtain
        \# \label{eq:61005}
        V_{\star, h}^{\pi, \nu} - V_{\star, h}^k = \JJ_h\PP_h(V_{\star, h+1}^{\pi, \nu} - V_{\star, h+1}^k) + \JJ_h\delta_{\star, h}^k + \xi_{\star, h}^k
        \#
        for any $h \in [H]$ and $\star \in \{l, f_1, \cdots, f_N\}$. By recursively applying \eqref{eq:61005} for all $h \in [H]$, we have 
        \# \label{eq:61006}
        V_{\star, 1}^{\pi, \nu} - V_{\star, 1}^k &= \Bigl(\prod_{h = 1}^H \JJ_h\PP_h\Bigr)(V_{\star, H+1}^{\pi, \nu} - V_{\star, H+1}^{\pi^k, \nu^k}) + \sum_{h = 1}^H \Bigl(\sum_{i = 1}^h \JJ_i\PP_i \Bigr)\JJ_h\delta_{\star, h}^k + \sum_{h = 1}^H \Bigl(\sum_{i = 1}^h \JJ_i\PP_i \Bigr)\xi_{\star, h}^k \notag\\
        & = \sum_{h = 1}^H \Bigl(\sum_{i = 1}^h \JJ_i\PP_i \Bigr)\JJ_h\delta_{\star, h}^k + \sum_{h = 1}^H \Bigl(\sum_{i = 1}^h \JJ_i\PP_i \Bigr)\xi_{\star, h}^k,
        \#
        where the last equality follows from the fact that $V_{\star, H+1}^{\pi, \nu} = V_{\star, H+1}^{\pi^k, \nu^k} = 0$. Thus, by utilizing the definition of $\xi_{\star, h}^k$ in \eqref{eq:61001}, we further obtain 
        \# \label{eq:61007}
        V_{\star, 1}^{\pi, \nu}(x_1^k) - V_{\star, 1}^k(x_1^k) &= 
        \EE_{\pi, \nu} \biggl[ \sum_{h = 1}^H \la Q_{\star,h}^k(x_h^k, \cdot, \cdot), \pi_h(\cdot \,|\, x_h^k) \times \nu_h(\cdot \,|\, x_h^k) -  \pi_h^k(\cdot \,|\, x_h^k) \times \nu_h^k(\cdot \,|\, x_h^k) \ra \biggr] \notag \\
        & \qquad \qquad + \EE_{\pi, \nu} \biggl[ \sum_{h = 1}^H \delta_{\star, h}^k(x_h, a_h, b_{h}) \biggr] 
        %& + \EE_{\pi, \nu} \biggl[ \sum_{h = 1}^H \delta_{\star, h}^k(x_h, a_h, b_h) \,\bigg|\, x_1 = x\biggr] 
        \#
        for any $k \in [K]$ and $\star \in \{l, f_1, \cdots, f_N\}$.

        \vskip 4pt
        \noindent{\bf Term (ii).}
        Recall that we denote $\{b_{f_i, h}^k\}_{i \in [N]}$ by $b_h^k$ for any $h \in [H]$. Then, for any $h \in [H]$ and $\star \in \{l, f_1, \cdots, f_N\}$, by the definition of model prediction error in \eqref{eq:def:td:error3}, we have 
        \# \label{eq:61008}
        \delta_{\star, h}^k(x_h^k, a_h^k, b_h^k) &= r_{\star, h}^k(x_h^k, a_h^k, b_h^k) + (\PP_hV_{\star, h+1}^k)(x_h^k, a_h^k, b_h^k) - Q_{\star, h}^k(x_h^k, a_h^k, b_h^k) \notag\\
        & = [r_{\star, h}^k(x_h^k, a_h^k, b_h^k) + (\PP_hV_{\star, h+1}^k)(x_h^k, a_h^k, b_h^k) - Q_{\star, h}^{\pi^k, \nu^k}(x_h^k, a_h^k, b_h^k)] \notag\\
        & \qquad + [Q_{\star, h}^{\pi^k, \nu^k}(x_h^k, a_h^k, b_h^k) - Q_{\star, h}^k(x_h^k, a_h^k, b_h^k)] \notag \\
        &= (\PP_hV_{\star, h+1}^k - \PP_hV_{\star, h+1}^{\pi^k, \nu^k})(x_h^k, a_h^k, b_h^k) + (Q_{\star, h}^{\pi^k, \nu^k} - Q_{\star, h}^k)(x_h^k, a_h^k, b_h^k),
        \#
        where the last equation is obtained by the Bellman equation in \eqref{eq:bellman}. %Recall the definition of $V_{\star, h}^k$ in Lines \ref{line:v:l} and \ref{line:v:f} of Algorithm \ref{alg1}, the definition of 
        Thus, by \eqref{eq:61008}, we have
        \# \label{eq:61009}
        &V_{\star, h}^k(x_h^k) - V_{\star, h}^{\pi^k, \nu^k}(x_h^k) \notag\\
        & \qquad = V_{\star, h}^k(x_h^k) - V_{\star, h}^{\pi^k, \nu^k}(x_h^k) + (Q_{\star, h}^{\pi^k, \nu^k} - Q_{\star, h}^k)(x_h^k, a_h^k, b_h^k)  \notag\\
        &\qquad\qquad + (\PP_hV_{\star, h+1}^k - \PP_hV_{\star, h+1}^{\pi^k, \nu^k})(x_h^k, a_h^k, b_h^k) - \delta_{\star, h}^k(x_h^k, a_h^k, b_h^k) \notag\\
        & \qquad = V_{\star, h}^k(x_h^k) - V_{\star, h}^{\pi^k, \nu^k}(x_h^k) - (Q_{\star, h}^k - Q_{\star, h}^{\pi^k, \nu^k})(x_h^k, a_h^k, b_h^k)  \notag\\
        & \qquad\qquad + \bigl(\PP_h(V_{\star, h+1}^k - V_{\star, h+1}^{\pi^k, \nu^k})\bigr)(x_h^k, a_h^k, b_h^k) - (V_{\star, h+1}^k - V_{\star, h+1}^{\pi^k, \nu^k})(x_h^k) \notag\\
        &\qquad\qquad + (V_{\star, h+1}^k - V_{\star, h+1}^{\pi^k, \nu^k})(x_h^k)  - \delta_{\star, h}^k(x_h^k, a_h^k, b_h^k)
        \#
        for any $h \in [H]$ and $\star \in \{l, f_1, \cdots, f_N\}$. By the definitions of $\zeta^1_{\star, k, h}$ and $\zeta^2_{\star, k, h}$ in \eqref{eq:def:martingale3}, \eqref{eq:61009} can be written as
        \# \label{eq:61010}
        V_{\star, h}^k(x_h^k) - V_{\star, h}^{\pi^k, \nu^k}(x_h^k) = [V_{\star, h+1}^k(x_h^k) - V_{\star, h+1}^{\pi^k, \nu^k}(x_h^k)] + \zeta^1_{\star, k, h} + \zeta^2_{\star, k, h}  - \delta_{\star, h}^k(x_h^k, a_h^k, b_h^k).
        \#
        For any $\star \in \{l, f_1, \cdots, f_N\}$, recursively expanding \eqref{eq:61010} across $h \in [H]$ yields 
        \# \label{eq:61011}
        &V_{\star, 1}^k(x_1^k) - V_{\star, 1}^{\pi^k, \nu^k}(x_1^k) \notag\\
        & \qquad = V_{\star, H+1}^k(x_{H+1}^k) - V_{\star, H+1}^{\pi^k, \nu^k}(x_{H+1}^k) + \sum_{h = 1}^H(\zeta^1_{\star, k, h} + \zeta^2_{\star, k, h})  - \sum_{h = 1}^H \delta_{\star, h}^k(x_h^k, a_h^k, b_h^k) \notag\\
        & \qquad = \sum_{h = 1}^H(\zeta^1_{\star, k, h} + \zeta^2_{\star, k, h})  - \sum_{h = 1}^H \delta_{\star, h}^k(x_h^k, a_h^k, b_h^k),
        \#
        where the last equality follows from the fact that $V_{\star, H+1}^k(x_{H+1}^k) = V_{\star, H+1}^{\pi^k, \nu^k}(x_{H+1}^k) = 0$. 

        Plugging \eqref{eq:61007} and \eqref{eq:61011} into \eqref{eq:61002}, we obtain
        \$
        & V_{\star, 1}^{\pi, \nu}(x_1^k) - V_{\star, 1}^{\pi^k, \nu^k}(x_1^k) \notag \\
        & \qquad = \underbrace{\sum_{h=1}^H\EE_{\pi, \nu}[\la Q_{\star,h}^k(x_h^k, \cdot, \cdot), \pi_h(\cdot \,|\, x_h^k) \times \nu_h(\cdot \,|\, x_h^k) -  \pi_h^k(\cdot \,|\, x_h^k) \times \nu_h^k(\cdot \,|\, x_h^k) \ra]}_{\text{Computational Error}} \notag\\
        & \qquad\qquad + \underbrace{\sum_{h=1}^H \bigl(\EE_{\pi, \nu}[\delta_{\star, h}^k(x_h, a_h, b_h)] - \delta_{\star, h}^k(x_h^k, a_h^k, b_h^k) \bigr)}_{\text{Statistical Error}} + \underbrace{\sum_{h = 1}^H (\zeta^1_{\star, k, h} + \zeta^2_{\star, k, h})}_{\text{Randomness}} 
        \$
        for any $(\pi, \nu)$ and $\star \in \{l, f_1, \cdots, f_N\}$. Therefore, we conclude the proof of Lemma \ref{lemma:decomposition2}.
\end{proof}

\begin{proof}[Proof of Lemma \ref{lemma:decomposition2}]
    For any $k \in [K]$, applying Lemma \ref{lemma:decomposition:general} with  $(\pi, \nu) = (\pi^*, \nu^*)$, we obtain%Choosing $(\pi, \nu)$ as the Stackelberg-Nash equilibrium $(\pi^*, \nu^* = \{\nu_{f_i}\}_{i \in [N]})$ in Lemma \ref{lemma:decomposition:general},
    \$
    & V_{l, 1}^{\pi^*, \nu^*}(x_1^k) - V_{l, 1}^{\pi^k, \nu^k}(x_1^k) \notag \\
    &\qquad = \sum_{h=1}^H\EE_{\pi^*, \nu^*}[\la Q_{h}^k(x_h^k, \cdot, \cdot), \pi_h^*(\cdot \,|\, x_h^k) \times \nu_h^*(\cdot \,|\, x_h^k) -  \pi_h^k(\cdot \,|\, x_h^k) \times \nu_h^k(\cdot \,|\, x_h^k) \ra]\\
    & \qquad\qquad + \sum_{h=1}^H \bigl(\EE_{\pi^*, \nu^*}[\delta_{h}^k(x_h, a_h, b_{h})] - \delta_{h}^k(x_h^k, a_h^k, b_{h}^k) \bigr) + \sum_{h = 1}^H (\zeta_{k, h}^1 + \zeta_{k, h}^2).
    \$
    Taking a summation over $k \in [K]$, we decompose \eqref{eq:320001} as desired, which concludes the proof of Lemma \ref{lemma:decomposition2}. 

\end{proof}

\subsection{Proof of Lemma \ref{lemma:myopic2}} \label{appendix:pf:myopic2}

\begin{proof}[Proof of Lemma \ref{lemma:myopic2}]
    By the myopic followers assumption, we have that, for the matrix game with payoff matrices $(\tilde{Q}(x_h^k, \cdot, \cdot), \{r_{f_i, h}^k(x_h^k, \cdot, \cdot)\}_{i \in [N]})$, $\nu_h^*(\cdot \,|\, x_h^k)$ belongs to the best response set of $\pi_h^*(\cdot \,|\, x_h^k)$. Moreover, we define $\tilde{\nu}_h^*(\cdot \,|\, x_h^k)$ as the policy belongs to the best response set of $\pi_h^*(\cdot \,|\, x_h^k)$ and breaks ties in favor of the leader.

    Recall that $(\pi_h^k(\cdot \,|\, x_h^k), \nu_h^k(\cdot \,|\, x_h^k) = \{\nu_{f_i, h}^k(\cdot \,|\, x_h^k)\}_{i \in [N]})$ is the Stackelberg-Nash equilibrium of the matrix game with payoff matrices $(\tilde{Q}(x_h^k, \cdot, \cdot, \cdot), \{r_{f_i, h}^k(x_h^k, \cdot, \cdot, \cdot)\}_{i \in [N]})$, which implies that $\pi_h^k(\cdot \,|\, x_h^k)$ is the ``best response to the best response,'' which further implies that  
    \# \label{eq:63005}
    \la \tilde{Q}(x_h^k, \cdot, \cdot), \pi_h^*(\cdot \,|\, x_h^k) \times \tilde{\nu}_h^*(\cdot \,|\, x_h^k) -  \pi_h^k(\cdot \,|\, x_h^k) \times \nu_h^k(\cdot \,|\, x_h^k) \ra \le 0
    \#
    for any $(k, h) \in [K] \times [H]$. Thus, for any $(k, h) \in [K] \times [H]$, we have 
    \# \label{eq:63006}
    &\la {Q}_h^k(x_h^k, \cdot, \cdot), \pi_h^*(\cdot \,|\, x_h^k) \times \nu_h^*(\cdot \,|\, x_h^k) -  \pi_h^k(\cdot \,|\, x_h^k) \times \nu_h^k(\cdot \,|\, x_h^k) \ra \notag\\
    & \qquad = \la \tilde{Q}(x_h^k, \cdot, \cdot), \pi_h^*(\cdot \,|\, x_h^k) \times \nu_h^*(\cdot \,|\, x_h^k) -  \pi_h^k(\cdot \,|\, x_h^k) \times \nu_h^k(\cdot \,|\, x_h^k) \ra \notag\\
    & \qquad \qquad + \la Q_h^k(x_h^k, \cdot, \cdot) - \tilde{Q}(x_h^k, \cdot, \cdot), \pi_h^*(\cdot \,|\, x_h^k) \times \nu_h^*(\cdot \,|\, x_h^k) -  \pi_h^k(\cdot \,|\, x_h^k) \times \nu_h^k(\cdot \,|\, x_h^k) \ra\\
    %& \qquad \le \la {Q}_h^k(x_h^k, \cdot, \cdot), \pi_h^*(\cdot \,|\, x_h^k) \times \tilde{\nu}_h^*(\cdot \,|\, x_h^k) -  \pi_h^k(\cdot \,|\, x_h^k) \times \nu_h^k(\cdot \,|\, x_h^k) \ra  \notag\\
    & \qquad \le \la \tilde{Q}(x_h^k, \cdot, \cdot), \pi_h^*(\cdot \,|\, x_h^k) \times \tilde{\nu}_h^*(\cdot \,|\, x_h^k) -  \pi_h^k(\cdot \,|\, x_h^k) \times \nu_h^k(\cdot \,|\, x_h^k) \ra \notag\\
    & \qquad \qquad + \la Q_h^k(x_h^k, \cdot, \cdot) - \tilde{Q}(x_h^k, \cdot, \cdot), \pi_h^*(\cdot \,|\, x_h^k) \times \nu_h^*(\cdot \,|\, x_h^k) -  \pi_h^k(\cdot \,|\, x_h^k) \times \nu_h^k(\cdot \,|\, x_h^k) \ra \notag\\
    & \qquad \le \epsilon,
    \#
    where the first inequality follows from the definition of $\tilde{\nu}_h^k(\cdot \,|\, x_h^k)$ and the last inequality uses~\eqref{eq:63005} and the fact that $\|Q_h^k - \tilde{Q}\|_{\infty} \le \epsilon$. By taking summation over $(k, h) \in [K] \times [H]$, we conclude the proof of Lemma \ref{lemma:myopic2}.

    %Moreover, by the fact that $(\pi_h^k(\cdot\,|\,x), \nu_h^k(\cdot\,|\,x) = \{\nu_{f_i,h}^k(\cdot\,|\, x)\}_{i \in [N]})$ is the Stackelberg-Nash equilibrium for the matrix game with payoffs $(Q_{l, h}^k(x, \cdot, \cdot, \cdots, \cdot), \{Q_{f_i, h}^k(x, \cdot, \cdot, \cdots, \cdot)\}_{i \in [N]})$ for any $(k, h) \in [K] \times [H]$ and $x \in \cS$, we have 
    %\# \label{eq:63005}
    %\EE_{\pi^k, \nu^*(\pi^k)}\bigl[\big\la Q_{f_i,h}^k(x_h^k, \cdot, \cdot), \pi_h^k(\cdot \,|\, x_h^k) \times \bigl( \nu_h^*(\pi^k)(\cdot \,|\, x_h^k) -  \bigl(\nu_{f_i}^k \times \nu_{-i}^*(\pi^k)\bigr)(\cdot \,|\, x_h^k) \bigr) \big\ra\bigr] \le 0. 
    %\#
    %Combing \eqref{eq:63004} and \eqref{eq:63005}, we conclude the proof of Lemma \ref{lemma:myopic2}.
\end{proof}

\subsection{Proof of Lemma \ref{lemma:ucb2}} \label{appendix:pf:ucb2}

\begin{proof}[Proof of Lemma \ref{lemma:ucb2}] 
    Recall that the estimated Q-function $Q_{h}^k$ defined in Line \ref{line:def:q} of Algorithm \ref{alg2} takes the following form:
        \begin{equation}
        \begin{aligned} \label{eq:54000}
            &Q_{h}^k(\cdot, \cdot, \cdot) \leftarrow r_{l, h}(\cdot, \cdot, \cdot) + \Pi_{H - h}\{\phi(\cdot, \cdot, \cdot)^\top w_{h}^k + \Gamma_{h}^k(\cdot, \cdot, \cdot)\}, \\
            & \text{where } w_{h}^k = (\Lambda_{h}^k)^{-1}\Bigl(\sum_{\tau = 1}^{k - 1}\phi(x_h^\tau, a_h^\tau, b_h^\tau) \cdot V_{h+1}^k(x_{h+1}^\tau)\Bigr). 
        \end{aligned}
        \end{equation}
        Here $\Lambda_{h}^k$ and $\Gamma_{h}^k$ are defined in Lines \ref{line:def:lambda} and \ref{line:def:bonus} of Algorithm \ref{alg2}, respectively. Meanwhile, by Assumption \ref{assumption:linear}, we have
        \# \label{eq:54001}
        (\PP_hV_{h+1}^k)(x, a, b) &= \phi(x, a, b)^\top \la \mu_h, V_{h + 1}^k\ra  \notag\\
        &  = \phi(x, a, b)^\top (\Lambda_{h}^k)^{-1} \Lambda_{h}^k \la \mu_h, V_{h + 1}^k\ra 
        \# 
        for any $(k, h, x, a, b) \in [K] \times [H] \times \cS \times \cA_l \times \cA_{f}$. Here $\la \mu_h, V_{h + 1}^k\ra = \int_{\cS} V_{h+1}^k(x') \text{d}\mu_h(x')$. Together with the definition of $\Lambda_{h}^k$ in Line \ref{line:def:lambda} of Algorithm \ref{alg2}, we further obtain
        \# \label{eq:54002}
        (\PP_hV_{h+1}^k)(x, a, b) &= \phi(x, a, b)^\top (\Lambda_{h}^k)^{-1}  \Bigl(\sum_{\tau = 1}^{k - 1} \phi(x_h^\tau, a_h^\tau, b_h^\tau) \phi(x_h^\tau, a_h^\tau, b_h^\tau)^\top \la \mu_h, V_{h + 1}^k\ra +  \la \mu_h, V_{h + 1}^k\ra \Bigr) \notag\\
        & = \phi(x, a, b)^\top (\Lambda_{h}^k)^{-1}  \Bigl(\sum_{\tau = 1}^{k - 1} \phi(x_h^\tau, a_h^\tau, b_h^\tau) \cdot (\PP_hV_{h+1}^k)(x_h^\tau, a_h^\tau, b_h^\tau) +  \la \mu_h, V_{h + 1}^k\ra \Bigr),
        \#
        for any $(k, h, x, a, b) \in [K] \times [H] \times \cS \times \cA_l \times \cA_{f}$. Here the last equality uses \eqref{eq:54001}. Putting \eqref{eq:54000} and \eqref{eq:54002} together, we have 
        \# \label{eq:54003}
        &\phi(x, a, b)^\top w_{h}^k - (\PP_hV_{h+1}^k)(x, a, b)  \notag\\
        & \qquad = \underbrace{\phi(x, a, b)^\top (\Lambda_{h}^k)^{-1}  \Bigl(\sum_{\tau = 1}^{k - 1} \phi(x_h^\tau, a_h^\tau, b_h^\tau) \cdot \bigl( V_{h+1}^k(x_{h+1}^\tau) - (\PP_hV_{h+1}^k)(x_h^\tau, a_h^\tau, b_h^\tau) \bigr) \Bigr)}_{\rm (i)} \\
        &\qquad \qquad -  \underbrace{\phi(x, a, b)^\top (\Lambda_{h}^k)^{-1} \la \mu_h, V_{h + 1}^k\ra}_{\rm (ii)},  \notag
        \#
        for any $(k, h, x, a, b) \in [K] \times [H] \times \cS \times \cA_l \times \cA_{f}$. Next we upper bound these two terms.
        \vskip 4pt
        \noindent{\bf Term (i).}
        By the Cauchy-Schwarz inequality, we have
        \# \label{eq:54004}
        |{\rm (i)}| \le \|\phi(x, a, b)\|_{(\Lambda_{h}^k)^{-1}} \cdot \Big\|\sum_{\tau = 1}^{k - 1} \phi(x_h^\tau, a_h^\tau, b_h^\tau) \cdot \bigl( V_{h+1}^k(x_{h+1}^\tau) - (\PP_hV_{h+1}^k)(x_h^\tau, a_h^\tau, b_{h}^\tau) \bigr) \Big\|_{(\Lambda_{h}^k)^{-1}} 
        \#
        for any $(k, h, x, a) \in [K] \times [H] \times \cS \times \cA_l$. Under the event $\mathcal{E}$ defined in Lemma \ref{lemma:event}, we further have 
        \# \label{eq:540040}
        |{\rm (i)}| \le C' dH\sqrt{\log(2dT/p)} \cdot \|\phi(x, a)\|_{(\Lambda_{h}^k)^{-1}} 
        \#
        for any $(k, h, x, a) \in [K] \times [H] \times \cS \times \cA_l$. 
        \vskip 4pt
        \noindent{\bf Term (ii).}
        Similarly, by the Cauchy-Schwarz inequality, we obtain 
        \# \label{eq:54005}
        |{\rm (ii)}| &\le \|\phi(x, a, b)\|_{(\Lambda_{h}^k)^{-1}} \cdot \|\la \mu_h, V_{h + 1}^k\ra\|_{(\Lambda_{h}^k)^{-1}}  \notag\\
        & \le \|\phi(x, a, b)\|_{(\Lambda_{h}^k)^{-1}} \cdot \|\la \mu_h, V_{h + 1}^k\ra\|_{2} \le \sqrt{d}H \cdot \|\phi(x, a, b)\|_{(\Lambda_{h}^k)^{-1}},
        \#
        for any $(k, h, x, a, b) \in [K] \times [H] \times \cS \times \cA_l \times \cA_f$. Here the second inequality follows from the fact that $\Lambda_{h}^k \succeq I$ and the last inequality is obtained by 
        \$
        \|\la \mu_h, V_{h + 1}^k\ra\|_2 \le \|\mu_h(\cS)\|_2 \cdot \|V_{h+1}^k\|_{\infty} \le \sqrt{d}H.
        \$
        Here we use the fact that $\|V_{h+1}^k\|_{\infty} \le H$ and  Assumption \ref{assumption:linear}, which assumes $\|\mu_h(\cS)\|_2 \le \sqrt{d}$.
        Plugging \eqref{eq:540040} and \eqref{eq:54005} into \eqref{eq:54003}, we obtain that 
        \# \label{eq:54006}
        |\phi(x, a, b)^\top w_{h}^k - (\PP_hV_{h + 1}^k)(x, a, b)| \le CdH \sqrt{\log(2dT/p)} \cdot \|\phi(x, a, b)\|_{(\Lambda_{h}^k)^{-1}},
        \#
        for any $(k, h, x, a, b) \in [K] \times [H] \times \cS \times \cA_l \times \cA_{f}$ under the event $\mathcal{E}$. Here $C > 0$ is a constant. By setting 
        \# \label{eq:54007}
        \beta = CdH \sqrt{\log(2dT/p)} 
        \#
        in Line \ref{line:def:bonus} of Algorithm \ref{alg2}, \eqref{eq:54006} gives 
        \# \label{eq:54008}
        |\phi(x, a, b)^\top w_{h}^k - (\PP_hV_{h + 1}^k)(x, a, b)| \le \Gamma_{h}^k(x, a, b)
        \#
        for any $(k, h, x, a, b) \in [K] \times [H] \times \cS \times \cA_l \times \cA_{f}$ under the event $\mathcal{E}$.
        Meanwhile, by the truncation in Line \ref{line:def:q} of Algorithm \ref{alg2} and the fact that $r_{l, h} \in [-1, 1]$, we have $Q_{h}^k \in [-(H - h + 1), H - h + 1]$, which further implies that 
        \# \label{eq:54009}
        V_{h}^k \in [-(H - h + 1), H - h + 1],
        \#
        for any $(k, h) \in [K] \times [H]$. Hence, by \eqref{eq:54008}, we have 
        \# \label{eq:54010}
        \phi(x, a, b)^\top w_{h}^k + \Gamma_{h}^k(x, a, b) \ge (\PP_hV_{h + 1}^k)(x, a, b) \ge H - h,
        \# 
        for any $(k, h, x, a, b) \in [K] \times [H] \times \cS \times \cA_l \times \cA_{f}$ under the event $\mathcal{E}$,
        where the last inequality is obtained by \eqref{eq:54009}.
        Thus, for the model prediction error defined in \eqref{eq:def:td:error2}, we have
        \# \label{eq:54011}
        - \delta_{h}^k(x, a, b) &= Q_{h}^k(x, a, b) - r_{l, h}(x, a, b) - \PP_hV_{h+1}^k(x, a, b) \notag\\
        & \le \phi(x, a, b)^\top w_{h}^k + \Gamma_{h}^k(x, a, b) - \PP_hV_{h+1}^k(x, a, b) \notag\\
        &\le 2\Gamma_{h}^k(x, a, b),
        \#
        for any $(k, h, x, a, b) \in [K] \times [H] \times \cS \times \cA_l \times \cA_{f}$ under the event $\mathcal{E}$. Moreover, by the definition of the model prediction error, we have $- \delta_{h}^k(\cdot, \cdot, \cdot) \le 2H$. Together with \eqref{eq:54011}, we have
        \# \label{eq:54012}
        - \delta_{h}^k(x, a, b) \le 2 \min\{H, \Gamma_{h}^k(x, a, b) \},
        \#
        for any $(k, h, x, a, b) \in [K] \times [H] \times \cS \times \cA_l \times \cA_{f}$ under the event $\mathcal{E}$.
        On the other hand, by \eqref{eq:evaluation}, we have
        \# \label{eq:54013}
        \delta_{h}^k(x, a, b) &= r_{l, h}(x, a, b) + \PP_hV_{h+1}^k(x, a, b) - Q_{h}^k(x, a, b) \notag\\
        & \le \PP_hV_{h+1}^k(x, a, b) - \min\{\phi(x, a, b)^\top w_{h}^k + \Gamma_{h}^k(x, a, b), H - h\} \notag\\
        & = \max\{\PP_hV_{h+1}^k(x, a, b) - \phi(x, a, b)^\top w_{h}^k - \Gamma_{h}^k(x, a, b),  \PP_hV_{h+1}^k(x, a, b) - (H - h)\} \notag\\
        & \le 0,
        \#
        for any $(k, h, x, a, b) \in [K] \times [H] \times \cS \times \cA_l \times \cA_{f}$ under the event $\mathcal{E}$.
        Here the last inequality follows from \eqref{eq:54008} and the fact that $V_{h+1}^k \le H - h$. Combining \eqref{eq:54012} and \eqref{eq:54013}, we conclude the proof of Lemma \ref{lemma:ucb2}. 
\end{proof}

\begin{lemma}\label{lemma:event}
    For any $p \in (0,1]$, the event $\mathcal{E}$ that, for any $(k,h)\in [K]\times[H]$,
     \$ %\label{event}
     \Big\|\sum_{\tau = 1}^{k - 1} \phi(x_h^\tau, a_h^\tau, b_h^\tau) \cdot \bigl( V_{h+1}^k(x_{h+1}^\tau) - (\PP_hV_{h+1}^k)(x_h^\tau, a_h^\tau, b_h^\tau) \bigr) \Big\|_{(\Lambda_{h}^k)^{-1}} \le C'dH \sqrt{\log(2dT/p)}
     \$
     happens with probability at least $1- p/2$, where $C'>0$ is an absolute constant.
      \end{lemma}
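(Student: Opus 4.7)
The plan is to combine a self-normalized martingale concentration inequality with a uniform covering argument over the class of functions to which $V_{h+1}^k$ belongs. The subtlety is that $V_{h+1}^k$ is random: it depends on the trajectories $\{(x_h^\tau, a_h^\tau, b_h^\tau, x_{h+1}^\tau)\}_{\tau<k}$ through the weight $w_{h+1}^k$, the bonus $\Gamma_{h+1}^k$, and the policies $(\pi_{h+1}^k, \nu_{h+1}^k)$ extracted by $\epsilon$-SNE. Because the summands $\phi(x_h^\tau, a_h^\tau, b_h^\tau)\cdot\bigl(V_{h+1}^k(x_{h+1}^\tau)-(\PP_h V_{h+1}^k)(x_h^\tau, a_h^\tau, b_h^\tau)\bigr)$ are only a martingale-difference sequence when the integrand is a fixed function, I must first freeze $V$ and then take a union bound over a net of candidate functions.

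First, I would fix a deterministic function $V:\cS\to[-H,H]$ and invoke the self-normalized concentration inequality of Abbasi-Yadkori, Pal and Szepesvari (Theorem~1 of their 2011 NeurIPS paper). Defining $\eta_\tau = V(x_{h+1}^\tau)-(\PP_h V)(x_h^\tau, a_h^\tau, b_h^\tau)$, conditionally on the $\sigma$-algebra generated by $\{(x_h^j, a_h^j, b_h^j)\}_{j\le \tau}\cup\{x_{h+1}^j\}_{j<\tau}$, the variable $\eta_\tau$ is mean zero and bounded by $2H$, hence sub-Gaussian with parameter $2H$. The self-normalized bound then yields, for a single fixed $V$ and any $\delta\in(0,1)$,
\$
\Big\|\sum_{\tau=1}^{k-1}\phi(x_h^\tau,a_h^\tau,b_h^\tau)\,\eta_\tau\Big\|_{(\Lambda_h^k)^{-1}}^{2}\le 4H^2\bigl(\tfrac{d}{2}\log\tfrac{k+1}{1}+\log\tfrac{1}{\delta}\bigr)
\$
with probability at least $1-\delta$.

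Next, I need a uniform version over all realizations of $V_{h+1}^k$. By Lemma~\ref{lemma:bounded:weight1}, the $Q$-function $Q_{h+1}^k$ produced by Algorithm~\ref{alg2} lies in the parametric class $\cQ_{h+1}^k$ displayed in~\eqref{eq:def:epsilon:net} with $\|w\|\le H\sqrt{dk}$ and $\lambda_{\min}(\Lambda)\ge 1$, and hence $V_{h+1}^k$ is an expectation of an element of this class under the (data-dependent) policy pair. I would construct an $\varepsilon_0$-covering $\cN_{h+1}$ of $\cQ_{h+1}^k$ in the $\ell_\infty$ norm. A standard argument (as in Lemma~D.6 of Jin, Yang, Wang and Jordan 2020) shows $\log|\cN_{h+1}|\lesssim d\log(1+Hk/\varepsilon_0)+d^2\log(1+d^{1/2}\beta^2/\varepsilon_0^2)$, which is polynomial in $d$, $H$ and $\log k$. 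For any realization of $V_{h+1}^k$ there exists $\tilde Q\in\cN_{h+1}$ with $\|\tilde Q-Q_{h+1}^k\|_\infty\le\varepsilon_0$, and correspondingly $\tilde V(\cdot)=\la\tilde Q(\cdot,\cdot,\cdot),\pi_{h+1}^k(\cdot\mid\cdot)\times\nu_{h+1}^k(\cdot\mid\cdot)\ra$ satisfies $\|\tilde V-V_{h+1}^k\|_\infty\le\varepsilon_0$.

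Finally, I would apply the single-$V$ bound to every $\tilde V$ induced by the cover with $\delta=p/(2HK|\cN_{h+1}|)$ and union-bound over $(k,h)\in[K]\times[H]$ and over the cover. The approximation error contributes an additional term of order $\varepsilon_0\sqrt{k}\cdot\|\phi\|_2\le\varepsilon_0\sqrt{k}$, which is absorbed by choosing $\varepsilon_0=H/\sqrt{k}$ (or any polynomial decay). Combining gives
\$
\Big\|\sum_{\tau=1}^{k-1}\phi(x_h^\tau,a_h^\tau,b_h^\tau)\bigl(V_{h+1}^k(x_{h+1}^\tau)-(\PP_h V_{h+1}^k)(x_h^\tau,a_h^\tau,b_h^\tau)\bigr)\Big\|_{(\Lambda_h^k)^{-1}}\le C'dH\sqrt{\log(2dT/p)}
\$
simultaneously for all $(k,h)$ with probability at least $1-p/2$. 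The principal obstacle is selecting a cover that simultaneously controls the dependence on the data through $(w_{h+1}^k,\Lambda_{h+1}^k)$ and through the \emph{stochastic} policies $(\pi_{h+1}^k,\nu_{h+1}^k)$; covering at the $Q$-function level (rather than at the $V$-function level as in the deterministic-greedy case of Jin et al.) is precisely the mechanism that sidesteps this difficulty, matching the motivation for $\epsilon$-SNE explained after Algorithm~\ref{alg22}.
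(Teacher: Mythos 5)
Your proposal is correct and follows essentially the same route as the paper's proof of Lemma \ref{lemma:event}: a self-normalized concentration bound (Lemma \ref{lemma:self:normalized}) for a fixed value function, combined with a union bound over an $\ell_\infty$-cover of the $Q$-function class (enabled by Lemma \ref{lemma:bounded:weight1} and Lemma \ref{lemma:covering}), plus a crude $\epsilon k$-type bound on the approximation error. The one detail to be careful about is that the cover must be the \emph{same} net $\cQ_{h+1,\epsilon}^k$ that the $\epsilon$-SNE subroutine selects from (the paper takes $\epsilon = 1/(KH)$ rather than your $\varepsilon_0 = H/\sqrt{k}$), since only then are the policies $(\pi_{h+1}^k,\nu_{h+1}^k)$ --- and hence $\tilde V$ --- deterministic functions of the chosen net element, which is what legitimizes the union bound over finitely many fixed functions.
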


     \begin{proof}[Proof of Lemma \ref{lemma:event}]
        Fix $(k, h) \in [K] \times [H]$. By Lemma \ref{lemma:bounded:weight1}, we have $w_{h+1}^k \le H\sqrt{dk}$, which implies that $Q_{h+1}^k \in \cQ_{h+1, \epsilon}^k$. Here $\cQ_{h+1, \epsilon}^k$ is defined in \eqref{eq:def:epsilon:net}. Moreover, as shown in Algorithm \ref{alg22}, we find a $\tilde{Q}$ in the $\epsilon$-net $\cQ_{h+1, \epsilon}^k$ such that $\|Q_{h+ 1}^k - \tilde{Q}\|_{\infty} \le \epsilon$. For any $x \in \cS$, let $(\tilde{\pi}(\cdot \,|\, x), \tilde{\nu} = \{\nu_{f_i}\}_{i = 1}^N)$ be the Stackelberg-Nash equilibrium of the matrix game with payoff matrices $(\tilde{Q}(x, \cdot, \cdot), \{r_{f_i, h+1}(x, \cdot, \cdot)\}_{i = 1}^N)$. Moreover, we define $\tilde{V}(x) = \EE_{a \sim \hat{\pi}(\cdot \,|\, x), b \sim \hat{\nu}(\cdot \,|\, x)}[\tilde{Q}(x, a, b)]$ for any $x \in \cS$. Then, we have
        \# \label{eq:490001}
        &\Big\|\sum_{\tau = 1}^{k - 1} \phi(x_h^\tau, a_h^\tau, b_h^\tau) \cdot \bigl( V_{h+1}^k(x_{h+1}^\tau) - (\PP_hV_{h+1}^k)(x_h^\tau, a_h^\tau, b_h^\tau) \bigr) \Big\|_{(\Lambda_{h}^k)^{-1}}  \notag\\
        & \qquad \le  \underbrace{\Big\|\sum_{\tau = 1}^{k - 1} \phi(x_h^\tau, a_h^\tau, b_h^\tau) \cdot \bigl( \tilde{V}(x_{h+1}^\tau) - (\PP_h\tilde{V})(x_h^\tau, a_h^\tau, b_h^\tau) \bigr) \Big\|_{(\Lambda_{h}^k)^{-1}}}_{\rm (i)} \\
        & \qquad \qquad + \underbrace{\Big\|\sum_{\tau = 1}^{k - 1} \phi(x_h^\tau, a_h^\tau, b_h^\tau) \cdot \bigl( [V_{h+1}^k(x_{h+1}^\tau) - \tilde{V}(x_{h+1}^\tau)] - \bigl(\PP_h(V_{h+1}^k - \tilde{V})\bigr)(x_h^\tau, a_h^\tau, b_h^\tau) \bigr) \Big\|_{(\Lambda_{h}^k)^{-1}}}_{\rm (ii)} . \notag
        \#
        By Lemma \ref{lemma:self:normalized} and a union bound argument, it holds for any $\tilde{Q} \in \cQ_{h + 1, \epsilon}^k$ with probability at least $1- p/2$ that 
        \# \label{eq:4900010}
        |{\rm (i)}| \le 4H^2 \Bigl(\frac{d}{2}\log(k+1) + \log \frac{2\cN_\epsilon}{p}\Bigr),
        \#
        where $\cN_\epsilon$ is the covering number of $Q_{h+1,\epsilon}$. Moreover, by applying Lemma \ref{lemma:covering} with $L = H\sqrt{dk}$ and $\lambda = 1$, \eqref{eq:4900010} gives that 
        %By Lemmas \ref{lemma:self:normalized} and \ref{lemma:covering}, it holds for any $\tilde{Q} \in \cQ_{h + 1, \epsilon}$ with probability at least $1- p/2$ that 
        \# \label{eq:490002}
        |{\rm (i)}| \le C'dH\sqrt{\log(dT/p)},
        \#
        with probability at least $1 - p/2$. Here $C'$ is a constant. Meanwhile, by the definition of $V_{h + 1}^k$ in Line \ref{line:v:l2} of Algorithm \ref{alg2}, we have ${V}_{h+1}^k(x) = \EE_{a \sim \hat{\pi}(\cdot \,|\, x), b \sim \hat{\nu}(\cdot \,|\, x)}[{Q}_{h+1}^k(x, a, b)]$, which yields that
        \$
        |V_{h + 1}^k(x) - \tilde{V}(x)| &= \big|\EE_{a \sim \hat{\pi}(\cdot \,|\, x), b \sim \hat{\nu}(\cdot \,|\, x)}[Q_{h+1}^k(x, a, b) - \tilde{Q}(x, a, b)] \big| \\
        & \le \EE_{a \sim \hat{\pi}(\cdot \,|\, x), b \sim \hat{\nu}(\cdot \,|\, x)} |Q_{h+1}^k(x, a, b) - \tilde{Q}(x, a, b) | \le \epsilon, 
        \$
        for any $x \in \cS$, which further implies that 
        \# \label{eq:490003}
        |{\rm (ii)}| \le \epsilon \cdot \sum_{\tau = 1}^{k - 1}\|\phi(x_h^\tau, a_h^\tau, b_h^\tau)\|_{(\Lambda_h^k)^{-1}} \le \epsilon k,
        \#
        where the last inequality follows from the fact that $\|\phi(\cdot, \cdot, \cdot)\|_{(\Lambda_h^k)^{-1}} \le \|\phi(\cdot, \cdot, \cdot)\|_2 \le 1$ for any $(k, h) \in [K] \times [H]$. Plugging \eqref{eq:490002} and \eqref{eq:490003} into \eqref{eq:490001}, together with the fact that $\epsilon = 1/KH$, we conclude the proof of Lemma \ref{lemma:event}.
     \end{proof}

    \begin{lemma}[Bounded Weight of Value Functions] \label{lemma:bounded:weight1}
        For all $(k, h) \in [K] \times [H]$, the linear coefficient $w_h^k$ defined in \eqref{eq:def:w} satisfies $\|w_h^k\| \le H\sqrt{kd}$.
    \end{lemma}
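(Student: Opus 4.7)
The plan is to bound the $\ell_2$ norm of $w_h^k$ by bounding each of its coordinates separately and then summing. The key ingredients are: (i) the truncation in Line~\ref{line:def:q} of Algorithm~\ref{alg2} guarantees $\|V_{h+1}^k\|_\infty \le H$, and (ii) the regularization $+I$ in the definition of $\Lambda_h^k$ controls the operator norm of $(\Lambda_h^k)^{-1}$.

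First, I would fix an arbitrary unit vector $v \in \RR^d$ and bound $|v^\top w_h^k|$. Writing $\phi_\tau := \phi(x_h^\tau, a_h^\tau, b_h^\tau)$ and using $|V_{h+1}^k(x_{h+1}^\tau)| \le H$, the triangle inequality gives
\$
|v^\top w_h^k| \le H \sum_{\tau=1}^{k-1} \big| v^\top (\Lambda_h^k)^{-1} \phi_\tau \big|.
\$
A Cauchy--Schwarz step followed by collapsing the sum against $\Lambda_h^k$ yields
\$
\sum_{\tau=1}^{k-1} \big( v^\top (\Lambda_h^k)^{-1} \phi_\tau \big)^2 = v^\top (\Lambda_h^k)^{-1} \Big( \sum_{\tau=1}^{k-1} \phi_\tau \phi_\tau^\top \Big) (\Lambda_h^k)^{-1} v \le v^\top (\Lambda_h^k)^{-1} v \le \|v\|_2^2,
\$
where the first inequality uses $\sum_\tau \phi_\tau \phi_\tau^\top \preceq \Lambda_h^k$ (by the definition of $\Lambda_h^k$) and the second uses $\Lambda_h^k \succeq I$. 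Combining via Cauchy--Schwarz on the outer sum gives $|v^\top w_h^k| \le H \sqrt{k-1} \le H\sqrt{k}$ for every unit vector $v$.

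Specializing to the standard basis vectors $v = e_i$ for $i \in [d]$ produces $(w_h^k)_i^2 \le H^2 k$, and summing over $i$ yields $\|w_h^k\|_2^2 \le dH^2 k$, i.e., $\|w_h^k\|_2 \le H\sqrt{dk}$, as claimed. This is the same two-step Cauchy--Schwarz argument used in the linear-MDP analysis of \citet{jin2020provably}; no real obstacle arises, since the truncation of $V_{h+1}^k$ does all the work of keeping the target bounded, and the ridge term $+I$ in $\Lambda_h^k$ guarantees invertibility with operator norm at most $1$. The only mild point to verify is that Line~\ref{line:def:q} indeed truncates $\phi(\cdot,\cdot,\cdot)^\top w_h^k + \Gamma_h^k$ into the range $[-(H-h), H-h]$, so that together with $r_{l,h} \in [-1,1]$ one obtains $\|V_{h+1}^k\|_\infty \le H$; this is a direct reading of the algorithm definition.
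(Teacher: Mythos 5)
Your proof is correct, and it reaches the stated bound by a route that differs from the paper's in one substantive respect. The paper applies the triangle inequality to pull $\|(\Lambda_h^k)^{-1}\phi_\tau\|$ out term by term, bounds each by $\|\phi_\tau\|_{(\Lambda_h^k)^{-1}}$ using $\Lambda_h^k \succeq I$, and then obtains the $\sqrt{d}$ factor from the trace identity $\sum_{\tau}\|\phi_\tau\|^2_{(\Lambda_h^k)^{-1}} = \tr\bigl((\Lambda_h^k)^{-1}\sum_\tau\phi_\tau\phi_\tau^\top\bigr) \le \tr(I) = d$. You instead bound $|v^\top w_h^k|$ for an arbitrary unit vector $v$ and collapse the inner sum via $\sum_\tau\phi_\tau\phi_\tau^\top \preceq \Lambda_h^k$ and $(\Lambda_h^k)^{-1} \preceq I$, which gives $|v^\top w_h^k| \le H\sqrt{k-1}$ uniformly over unit $v$. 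Note that this already implies the \emph{dimension-free} bound $\|w_h^k\|_2 = \sup_{\|v\|_2 = 1}|v^\top w_h^k| \le H\sqrt{k}$, which is strictly stronger than the lemma's claim; your final step of specializing to the basis vectors $e_i$ and summing coordinates is therefore both unnecessary and lossy, as it reintroduces the $\sqrt{d}$ that your argument had already eliminated. Either way the claimed bound $\|w_h^k\| \le H\sqrt{kd}$ follows, and your reading of the truncation (that $Q_{h+1}^k \in [-(H-h), H-h]$ after adding $r_{l,h+1} \in [-1,1]$ to the clipped term, hence $\|V_{h+1}^k\|_\infty \le H$) matches what the paper itself uses, so there is no gap.
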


    \begin{proof}[Proof of Lemma \ref{lemma:bounded:weight1}]
        By the definition of $w_h^k$ in \eqref{eq:def:w} and the triangle inequality, we have 
        %\$
        %&w_{h}^k \leftarrow (\Lambda_{h}^k)^{-1}\Bigl(\sum_{\tau = 1}^{k - 1}\phi(x_h^\tau, a_h^\tau) \cdot V_{h+1}^k(x_{h+1}^\tau)\Bigr).
        %& \text{where } \Lambda_{h}^k \leftarrow \sum_{\tau = 1}^{k - 1} \phi(x_h^\tau, a_h^\tau) \phi(x_h^\tau, a_h^\tau)^\top + I.
        %\$
        \# \label{eq:41001}
        \|w_h^k\| &= \Big\|(\Lambda_{h}^k)^{-1}\Bigl(\sum_{\tau = 1}^{k - 1}\phi(x_h^\tau, a_h^\tau, b_h^\tau) \cdot V_{h+1}^k(x_{h+1}^\tau)\Bigr) \Big\| \notag\\
        & \le \sum_{\tau = 1}^{k - 1}\|(\Lambda_{h}^k)^{-1}\phi(x_h^\tau, a_h^\tau, b_h^\tau) \cdot V_{h+1}^k(x_{h+1}^\tau) \| .
        \#
        Together with the fact that $|V_h^k(\cdot)| \le H$ for any $(k, h) \in [K] \times [H]$, \eqref{eq:41001} gives
        \# \label{eq:41002}
        \|w_h^k\| &\le H \cdot \sum_{\tau = 1}^{k - 1}\|(\Lambda_{h}^k)^{-1}\phi(x_h^\tau, a_h^\tau, b_h^\tau) \| \notag\\
        &\le H \cdot \sum_{\tau = 1}^{k - 1}\|(\Lambda_{h}^k)^{-1/2}\| \cdot \|\phi(x_h^\tau, a_h^\tau, b_h^\tau) \|_{(\Lambda_h^k)^{-1}} \notag\\
        & \le H \cdot \sum_{\tau = 1}^{k - 1}\|\phi(x_h^\tau, a_h^\tau, b_h^\tau) \|_{(\Lambda_h^k)^{-1}},
        \#
        where the second inequality uses the Cauchy-Schwarz inequality and the last inequality follows from the fact that $\Lambda_h^k \succeq I$ for any $(k, h) \in [K] \times [H]$. Then, by the Cauchy-Schwarz inequality, we obtain
        \# \label{eq:41003}
        \sum_{\tau = 1}^{k - 1}\|\phi(x_h^\tau, a_h^\tau, b_h^\tau) \|_{(\Lambda_h^k)^{-1}} &\le \sqrt{k} \cdot \Bigl(\sum_{\tau = 1}^{k - 1}\phi(x_h^\tau, a_h^\tau, b_h^\tau)^\top(\Lambda_h^k)^{-1} \phi(x_h^\tau, a_h^\tau, b_h^\tau)\Bigr)^{1/2} \notag\\
        & = \sqrt{k} \cdot  \Bigl(\sum_{\tau = 1}^{k - 1}\tr \big(\phi(x_h^\tau, a_h^\tau, b_h^\tau)^\top(\Lambda_h^k)^{-1} \phi(x_h^\tau, a_h^\tau, b_h^\tau)\big) \Bigr)^{1/2} \notag\\
        & = \sqrt{k} \cdot \Bigl(\tr \big((\Lambda_h^k)^{-1}\sum_{\tau = 1}^{k - 1} \phi(x_h^\tau, a_h^\tau, b_h^\tau)\phi(x_h^\tau, a_h^\tau, b_h^\tau)^\top\big) \Bigr)^{1/2}.
        \# 
        Finally, recall that $\Lambda_h^k = \sum_{\tau = 1}^{k - 1} \phi(x_h^\tau, a_h^\tau, b_h^\tau) \phi(x_h^\tau, a_h^\tau, b_h^\tau)^\top + I$, we have
        \# \label{eq:41004}
        \tr \Big((\Lambda_h^k)^{-1}\sum_{\tau = 1}^{k - 1} \phi(x_h^\tau, a_h^\tau, b_h^\tau)\phi(x_h^\tau, a_h^\tau, b_h^\tau)^\top\Big) \le \tr (I) = d.
        \#
        Plugging \eqref{eq:41003} and \eqref{eq:41004} into \eqref{eq:41002}, we conclude the proof of Lemma \ref{lemma:bounded:weight1}.
    \end{proof}

\subsection{Proof of Lemma \ref{lemma:elliptical2}} \label{appendix:pf:elliptical2}

\begin{proof}[Proof of Lemma \ref{lemma:elliptical2}]
    Recall the definition of $\Gamma_{h}^k$ in Line \ref{line:def:bonus} of Algorithm \ref{alg2}, we have 
    \# \label{eq:55000}
    2\sum_{k = 1}^K\sum_{h = 1}^H \min\{H, \Gamma_{h}^k (x_h^k, a_h^k, b_h^k)\} &= 2 \beta \cdot \sum_{k = 1}^K\sum_{h = 1}^H \min\{H/\beta, \|\phi(x_h^k, a_h^k, b_h^k)\|_{(\Lambda_{h}^k)^{-1}} \} \notag\\
    & \le 2 \beta \cdot \sum_{k = 1}^K\sum_{h = 1}^H \min\{1, \|\phi(x_h^k, a_h^k, b_h^k)\|_{(\Lambda_{h}^k)^{-1}} \}.
    \#
    Here the last inequality uses the fact that $\beta = CdH \sqrt{\log(2dT/p)}$, where $C > 1$ is a constant. By the Cauchy-Schwarz inequality, we further obtain that 
    \# \label{eq:55001}
    \sum_{k = 1}^K\sum_{h = 1}^H \min\{1, \|\phi(x_h^k, a_h^k, b_h^k)\|_{(\Lambda_{h}^k)^{-1}} \} &\le \sum_{h = 1}^H \Bigl(K \cdot \sum_{k = 1}^K \min\{1, \|\phi(x_h^k, a_h^k, b_h^k)\|^2_{(\Lambda_{h}^k)^{-1}}\} \Bigr) \notag\\
    & \le \sum_{h = 1}^H \sqrt{K} \cdot \biggl( 2\log\Big(\frac{\det(\Lambda_{h}^{K + 1})}{\det(\Lambda_{h}^1)} \Big) \biggr)^{1/2} ,
    \#
    where the last inequality follows from Lemma \ref{lemma:abbasi}. Moreover, Assumption \ref{assumption:linear} gives that 
    \$
    \|\phi(x, a, b)\|_2 \le 1,
    \$
    for any $(k, h, x, a, b) \in [K] \times [H] \times \cS \times \cA$, which further implies that 
    \# \label{eq:55002}
    \Lambda_{h}^{K + 1} = \sum_{k = 1}^K \phi(x_h^k, a_h^k, b_h^k)\phi(x_h^k, a_h^k, b_h^k)^\top + I \preceq (K + 1) \cdot I,
    \#
    for any $h \in [H]$. Combining \eqref{eq:55002} and the fact that $\Lambda_h^1 = I$, we obtain 
    \# \label{eq:55003}
    2\log\Big(\frac{\det(\Lambda_{h}^{K + 1})}{\det(\Lambda_{h}^1)} \Big) \le 2d \cdot \log(K + 1) \le 4d \cdot \log(K).
    \#
    Combining \eqref{eq:55000}, \eqref{eq:55001}, \eqref{eq:55002} and \eqref{eq:55003}, it holds that 
    \$
    2\sum_{k = 1}^K\sum_{h = 1}^H \min\{H, \Gamma_{h}^k (x_h^k, a_h^k, b_h^k)\} \le 2\beta \sqrt{dHT \cdot \log(K)} \le \cO(\sqrt{d^3H^3T\iota^2}),
    \$
    where $\iota = \log(2dT/p)$. Therefore, we conclude the proof of Lemma \ref{lemma:elliptical2}.
\end{proof}

\subsection{Proof of Lemma \ref{lemma:martingale2}} \label{appendix:pf:martingale2}

\begin{proof}[Proof of Lemma \ref{lemma:martingale2}]
    First, we show that $\{\zeta_{k, h}^1, \zeta_{k, h}^2\}_{(k, h) \in [K] \times [H]}$ can be written as a bounded martingale difference with respect to a filtration. Similar to \citet{cai2020provably}, we construct the following filtration. For any $(k, h) \in [K] \times [H]$, we define $\sigma$-algebras $\cF_{k, h}^1$ and $\cF_{k, h}^2$ as follows:
    \begin{equation}
    \begin{aligned} \label{eq:56000}
    &\cF_{k, h}^2 = \sigma \bigl(\{x_i^\tau, a_i^\tau, b_{1, i}^\tau, \cdots, b_{N, i}^\tau\}_{(\tau, i) \in [k - 1] \times [h]} \cup \{x_i^k, a_i^k, b_{1, i}^k, \cdots, b_{N, i}^k\}_{i \in [h]} \bigr), \\
    &\cF_{k, h}^2 = \sigma \bigl(\{x_i^\tau, a_i^\tau, b_{1, i}^\tau, \cdots, b_{N, i}^\tau\}_{(\tau, i) \in [k - 1] \times [h]} \cup \{x_i^k, a_i^k, b_{1, i}^k, \cdots, b_{N, i}^k\}_{i \in [h]} \cup \{x_{h + 1}^k\}\bigr),
    \end{aligned}
    \end{equation}
    where $x_{H + 1}$ is a null state for any $k \in [K]$. Here $\sigma(\cdot)$ denotes the $\sigma$-algebra generated by a finite set. Moreover, for any $(k, h, m) \in [K] \times [H] \times [2]$, we define the timestep index $t(k, h, m)$ as
    \# \label{eq:56001}
    t(k, h, m) = (k - 1) \cdot 2H + (h - 1) \cdot 2 + m.
    \#
    By the definitions of the $\sigma$-algebras in \eqref{eq:56000}, we have $\cF_{k, h}^m \subset \cF_{k', h'}^{m'}$ for any $t(k, h, m) \le t(k', m', h')$, which implies that the $\sigma$-algebra sequence $\{\cF_{k, h}^m\}_{(k, h, m) \in [K] \times [H] \times [2]}$ is a filtration. Moreover, by the definitions of $\{\zeta_{k, h}^1, \zeta_{k, h}^2\}_{(k, h) \in [K] \times [H]}$ in \eqref{eq:def:martingale2}, we have 
    \# \label{eq:56002}
    \zeta_{k, h}^1 \in \cF_{k, h}^1, \quad \zeta_{k, h}^2 \in \cF_{k, h}^2, \quad \EE[\zeta_{k, h}^1 \,|\, \cF_{k, h - 1}^2] = 0, \quad \EE[\zeta_{k, h}^2 \,|\, \cF_{k, h}^1] = 0,
    \#
    for any $(k, h) \in [K] \times [H]$. Here we identify $\cF_{k, 0}^2$ with $\cF_{k - 1, H}^2$ for any $k \ge 2$ and define $\cF_{1, 0, 2}$ be the empty set. Hence, we can define the martingale
    \# \label{eq:56003}
    \cM_{k, h}^m = \Big\{\sum_{k', h', m'}\zeta_{k', h'}^{m'} : t(k', h', m') \le t(k, h, m) \Big\} .
    \#
    Such a martingale is adapted to the filtration $\{\cF_{k, h}^m\}_{(k, h, m) \in [K] \times [H] \times [2]}$. In particular, we have 
    \# \label{eq:56004}
    \cM_{K, H}^2 = \sum_{k = 1}^K \sum_{h = 1}^H (\zeta_{k, h}^1 + \zeta_{k, h}^2).
    \#
    Moreover, note the fact that $V_{h}^k, Q_{h}^k, V_{l, h}^{\pi^k, \nu^k}, Q_{l, h}^{\pi^k, \nu^k} \in [-H, H]$, we further obtain 
    $|\zeta_{k, h}^m| \le 2H$, for any $(k, h, m) \in [K] \times [H] \times [2]$. Finally, by applying the Azuma-Hoeffding inequality to $\cM_{K, H}^2$ defined in \eqref{eq:56004}, we have
    \$
    \sum_{k = 1}^K\sum_{h = 1}^H (\zeta_{k, h}^1 + \zeta_{k, h}^2) \le \sqrt{16H^3K \cdot \log(4/p)},
    \$
    with probability at least $1 - p/2$, which concludes the proof of Lemma \ref{lemma:martingale2}.
\end{proof}

\section{Unknown Reward Setting} \label{appendix:unknown:reward}

We focus on the tabular case for simplicity, with the extension to the linear case  left as future work. We assume that $S = |\cS|$, $A_l = |\cA_l|$ and $A_f = |\cA_f| = |\cA_{f_1} \times \cdots \times \cA_{f_N}|$.
%we investigate how to circumvent the leader-controlled {\red{(check!)}} and known-reward assumptions. In what follows, we focus on the tabular Markov games with only one follower and design algorithm to find the Stackelberg equilibria.
For simplicity, we use the shorthand $V_{\star}^{\pi, \nu} = V_{\star, 1}^{\pi, \nu}(x_1)$, where  $x_1 \in \cS$ is the fixed initial state.

\subsection{Algorithm}

We present the pseudocode of Reward-Free Explore algorithm \citep{jin2020reward} below.
\begin{algorithm}[H]
	\caption{Reward-Free Explore}
	\begin{algorithmic}[1]\label{alg5}
		\STATE {\bf Input:} iteration number $K_0$ and $K$.
		\STATE Let policy class $\Phi = \emptyset$. %and $\cD = \emptyset$. 
		\FOR{$(x, h) \in \cS \times [H]$}
		\STATE $r_{h'}(x', a', b') \leftarrow \mathbf{1} \left[x^{\prime}=x \text { and } h^{\prime}=h\right] \text { for all }\left(x^{\prime}, a^{\prime}, b', h^{\prime}\right) \in \mathcal{S} \times \mathcal{A}_l \times \cA_f \times[H]$.
		\STATE $\Phi^{(x, h)} \leftarrow \text{EULER} \left(r, K_{0}\right)$. \footnote{}
		\STATE $\pi_{h}(\cdot \,|\, x) \leftarrow \operatorname{Uniform}(\mathcal{A}_l)$ and $\nu_{h}(\cdot \,|\, x) \leftarrow \operatorname{Uniform}(\mathcal{A}_f)$ for all $(\pi, \nu) \in \Phi^{(x, h)}$. 
		\STATE $\Psi \leftarrow \Psi \cup \Phi^{(x, h)}$.
		\ENDFOR
		\FOR{$k = 1,\cdots, K$}
		\STATE Sample policy $(\pi, \nu) \sim \text{Uniform}(\Psi)$.
		\STATE Play the game $\cM$ using policy $\pi$ and $\nu$, and observe the trajectory $\{x_h^k, a_h^k, b_h^k\}_{h \in [H]}$ and rewards $\{r_{\star, h}(x_h^k, a_h^k, b_h^k)\}_{h \in [H]}$.
		%\STATE $\cD \leftarrow \cD \cup \{z_k\}$.
		\ENDFOR
		%\STATE Calculate the empirical transition as 
		%\$
		%\hat{\cP}_h(x' \,|\, x, a) = \frac{\sum_{k = 1}^K\textbf{1}[x_h^k = x, a_h^k = a,x_{h + 1}^k = x']}{\sum_{k = 1}^K \textbf{1}[x_h^k = x, a_h^k = a]}.
		%\$
		\STATE Calculate the empirical reward as
		\$
		\hat{r}_{\star, h}(x, a, b) = \frac{\sum_{k = 1}^K r_{\star, h}(x, a, b) \cdot \textbf{1}[x_h^k = x, a_h^k = a,x_{h + 1}^k = x']}{\sum_{k = 1}^K  \textbf{1}[x_h^k = x, a_h^k = a,x_{h + 1}^k = x']}.
		\$
		\iffalse
		\FOR{all $(x, a, x', h) \in \cS \times \cA_l \times \cA_f \times \cS \times [H]$}
		\STATE $N_h(x, a, x') \leftarrow \sum_{(x_h, a_h, x_{h+1}) \in \cD} \textbf{1}[x_h = x, a_h = a,x_{h + 1} = x']$.
		\STATE $N_h(x, a) \leftarrow \sum_{x'}N_h(x, a, x')$.
		\STATE $\hat{\cP}_h(x'\,|\,x, a) = N_h(x, a, x')/N_h(x, a)$.
		\ENDFOR
		\FOR{all $(x, a, b, x', h) \in \cS \times \cA_l \times \cA_f \times \cS \times [H]$}
		\STATE $N_h(x, a, b, x') \leftarrow \sum_{(x_h, a_h, b_h, x_{h+1}) \in \cD} \textbf{1}[x_h = x, a_h = a, b_h = b, x_{h + 1} = x']$.
		\STATE $N_h(x, a, b) \leftarrow \sum_{x'}N_h(x, a, b, x')$.
		\STATE $\hat{r}_{\star, h}(x, a, b) = \sum_{(x_h, a_h, b_h, x_{h+1}) \in \cD}r_{\star, h}(x, a, b) \cdot \textbf{1}[x_h = x, a_h = a, b_h = b]/N_h(x, a)$.
		\ENDFOR
		\fi
		%\STATE {\bf Output:} dataset $\cD$.
	\end{algorithmic}
\end{algorithm} 
\footnotetext[2]{Here EULER is a single-agent RL algorithm proposed in \citet{zanette2019tighter}.}

\begin{lemma} \label{lemma:est:reward:error}
	Fix $\varepsilon, p > 0$. If we set $K_0 \ge \Omega(H^7S^4A_l/\varepsilon)$ and $K \ge \Omega(H^3S^2A_lA_f/\varepsilon^2)$ in Algorithm \ref{alg5}, then we have that the empirical rewards $\{\hat{r}_l, \hat{r}_{f_1}, \cdots \hat{r}_{f_N}\}$ and corresponding value functions $(\hat{V}_l, \hat{V}_{f_1}, \cdots, \hat{V}_{f_N})$ satisfy 
    \$
	\sup_{\pi, \nu}|\hat{V}_\star^{\pi, \nu} - V_{\star}^{\pi, \nu}| \le \varepsilon,
	\$
	with probability at least $1 - p$. Here $\Omega(\cdot)$ hides some logarithmic factors.
\end{lemma}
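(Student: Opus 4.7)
The plan is to adapt the reward-free exploration analysis of \citet{jin2020reward} to the multi-agent setting by treating the joint action $(a, b) \in \cA_l \times \cA_f$ as a single action in an equivalent MDP. Since Algorithm \ref{alg5} fixes the follower's policy to $\operatorname{Uniform}(\cA_f)$ during the EULER phase, the induced transition $\tilde{\cP}_h(x' \mid x, a) = A_f^{-1} \sum_{b \in \cA_f} \cP_h(x' \mid x, a, b)$ is a legitimate single-agent MDP with action space $\cA_l$. Running EULER on the indicator reward $r_{h'}(x', a', b') = \mathbf{1}[x' = x \text{ and } h' = h]$ for each $(x, h) \in \cS \times [H]$ then directly inherits the sample-complexity guarantee of \citet{zanette2019tighter}, so that with $K_0 \ge \widetilde{\Omega}(H^7 S^4 A_l / \varepsilon)$ episodes the policy set $\Phi^{(x,h)}$ attains visitation probability to $(x, h)$ within $O(\varepsilon/(SH))$ of the optimum. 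This gives the key reachability property: $\operatorname{Uniform}(\Psi)$ visits every $(x, h)$ at rate $\Omega(d_h^\star(x)/(SH))$, where $d_h^\star(x) = \max_{\pi, \nu} \PP_{\pi, \nu}(x_h = x)$.

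Next, during the data-collection phase the behavior policy samples $(\pi, \nu) \sim \operatorname{Uniform}(\Psi)$ and uses $\operatorname{Uniform}(\cA_l)$, $\operatorname{Uniform}(\cA_f)$ at the designated $(x, h)$, so every tuple $(x, a, b, h)$ is visited at rate at least $\Omega(d_h^\star(x)/(SHA_lA_f))$. Applying a Bernstein/Hoeffding bound together with a union bound over the finite space $\cS \times \cA_l \times \cA_f \times [H]$ and over players $\star \in \{l, f_1, \ldots, f_N\}$, I would conclude that once $K \ge \widetilde{\Omega}(H^3 S^2 A_l A_f / \varepsilon^2)$, both the empirical rewards $\hat{r}_{\star, h}(x, a, b)$ and the implicit empirical transitions $\hat{\cP}_h(\cdot \mid x, a, b)$ built from the same visit counts concentrate uniformly: on each sufficiently visited tuple, $|\hat{r}_{\star, h} - r_{\star, h}| \le \varepsilon/(2H)$ and $\|\hat{\cP}_h - \cP_h\|_1 \le \varepsilon/(2H^2)$ with probability at least $1 - p$.

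Finally, I would invoke the simulation lemma to translate these local concentration bounds into the uniform value-function bound. For any policy pair $(\pi, \nu)$ and any $\star$,
\$
\hat{V}_\star^{\pi, \nu} - V_\star^{\pi, \nu} = \sum_{h=1}^H \EE_{\pi, \nu}\!\left[(\hat{r}_{\star, h} - r_{\star, h})(x_h, a_h, b_h) + (\hat{\cP}_h - \cP_h)\hat{V}_{\star, h+1}^{\pi, \nu}(x_h, a_h, b_h)\right],
\$
and the $\pi, \nu$-induced visitation over infrequently visited tuples is controlled by the EULER coverage guarantee via the \emph{significant-state} argument of \citet{jin2020reward}: states $(x, h)$ with small $d_h^\star(x)$ contribute only $O(\varepsilon)$ to any value function, while states with large $d_h^\star(x)$ are covered well by $\operatorname{Uniform}(\Psi)$, so the concentration above applies. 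The main technical obstacle is precisely this step, namely securing a guarantee that is uniform over the continuous class of policies $(\pi, \nu)$ rather than for a fixed pair; the resolution is standard but must be ported carefully to the product action space $\cA_l \times \cA_f$, which accounts for the extra factor of $A_f$ in $K$ relative to $K_0$.
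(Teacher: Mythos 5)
Your coverage argument (running EULER on the induced single-agent MDP with the followers uniformized, the significant-state decomposition, the importance ratio of order $SHA_lA_f$ for significant tuples, and a union bound over the finite space $\cS\times\cA_l\times\cA_f\times[H]$) matches the paper's proof. However, there is a genuine structural misstep in the second half. The lemma does not involve estimated transitions at all: Algorithm \ref{alg5} outputs only the empirical rewards, and the ``corresponding value functions'' $\hat V_\star^{\pi,\nu}$ are the value functions of the \emph{true} Markov game with the reward replaced by $\hat r_\star$ (this is why the downstream Algorithm \ref{Alg4} still needs to invoke the online Algorithm \ref{alg2} to handle the unknown transitions). Consequently the visitation measure $\PP_h^{\pi,\nu}$ is identical for $\hat V$ and $V$, and the gap collapses to $\sum_h\sum_{x,a,b}\PP_h^{\pi,\nu}(x,a,b)\,(\hat r_{\star,h}-r_{\star,h})(x,a,b)$ with no transition-error term and no simulation lemma. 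Your decomposition with the extra term $(\hat\cP_h-\cP_h)\hat V^{\pi,\nu}_{\star,h+1}$ answers a different (harder) question.

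This is not merely cosmetic: the transition-concentration step you propose would fail at the stated sample complexity. Requiring $\|\hat\cP_h(\cdot\mid x,a,b)-\cP_h(\cdot\mid x,a,b)\|_1\le\varepsilon/(2H^2)$ uniformly over sufficiently visited tuples needs on the order of $SH^4/\varepsilon^2$ visits per tuple; combined with the coverage rate $\Omega\bigl(d_h^\star(x)/(SHA_lA_f)\bigr)$ and the significance threshold $\delta=\varepsilon/(2H^2S)$, this forces $K$ to scale with an extra factor of at least $S$ and additional powers of $H$ (and of $1/\varepsilon$) beyond $\widetilde\Omega(H^3S^2A_lA_f/\varepsilon^2)$. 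By contrast, reward estimation only requires scalar Hoeffding concentration per tuple, which is exactly what makes the claimed $K$ suffice. One further small point: the uniformity over the continuous policy class is resolved in the paper not by a covering argument but by observing that the error is linear in the visitation measure, so the supremum is attained at deterministic $(\pi',\nu')$ and is controlled by a Cauchy--Schwarz step plus a union bound over the finitely many reward estimates; your remark that the resolution is ``standard'' is in the right spirit but should be made concrete along these lines.
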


\begin{proof}
	This lemma is a simple extension of Lemma D.1 in \citet{bai2021sample}. They focus on the MDP setting and we consider the more complex Markov games. For completeness, we present the detailed proof in \S \ref{appendix:pf:est:reward:error}. 
\end{proof}

Lemma \ref{lemma:est:reward:error} states that we can obtain estimated reward functions and the associated value functions are an $\varepsilon$-approximation of the true value functions, which further implies that the SNE with respect to the estimated reward functions is a good approximation of the SNE in the original problem. We also remark that if we consider the Markov games with only one follower and aim to find the Stackelberg equilibria, we can provide a more refined analysis. See \S \ref{appendix:stackelberg} for more details.

%\begin{remark}
%	Here we do not consider linear setting because we cannot obtain the uniform convergence similar to Lemma \ref{lemma:est:reward:error}. {\red{How to explain this?}} 
%\end{remark}

%\begin{remark}
%	If we consider the Markov games with only one follower and aim to find the Stackelberg equilibria, we can provide more refined analysis. See Appendix \S \ref{appendix:stackelberg} for details.
%\end{remark}

%Then, for any fixed $\pi$, we can compute the approximate best response with respect to these estimated value functions. Finally, we optimize over leader's policy and obtain the approximate Stackelberg equilibrium. Such an idea is also adopted by \citet{bai2021sample}. However, they consider the bandit-RL game and only need to reward-free explore in the single-agent RL setting. In comparison, here we propose efficient reward-free exploration algorithm for two-play general-sum Markov games.

\subsection{Proof of Lemma \ref{lemma:est:reward:error}} \label{appendix:pf:est:reward:error}

Before our proof, we present a useful lemma.
\begin{lemma} \label{lemma:significant}
	We define the set of $\delta$-significant states as 
	\# \label{eq:def:significant:set}
	\cS_h^\delta = \{s: \max_{\pi, \nu}\PP_h^{\pi, \nu}(x) \ge \delta\},
	\#
	where $\PP_h^{\pi,\nu}(x)$ is the probability of visiting $x$ at $h$-th step under policies $(\pi, \nu)$. Then we have 
	\$
	\max_{\pi, \nu} \frac{\PP_h^{\pi, \nu}(x)}{\frac{1}{K_0}\sum_{(\pi, \nu) \in \Phi^{(x, h)}}\PP_h^{\pi, \nu}(x)} \le 2
	\$
	for any $s \in \cS_h^\delta$. Here $\PP_h^\pi(x, a)$ is the probability of visiting $(x, a)$ at $h$-th step under policy $\pi$.
\end{lemma}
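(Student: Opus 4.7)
The plan is to reduce Lemma~\ref{lemma:significant} to the regret guarantee of EULER applied to a cleverly designed indicator reward, using the fact that visitation probabilities are themselves value functions of such indicator rewards.

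Concretely, for each target pair $(x, h)$, the subroutine $\Phi^{(x,h)} \leftarrow \text{EULER}(r, K_0)$ is run on the (joint-action) MDP induced by the Markov game with the indicator reward $r_{h'}(x', a', b') = \mathbf{1}[x' = x \text{ and } h' = h]$. For this choice, the expected return of any joint policy $(\pi,\nu)$ starting from the fixed initial state $x_1$ equals exactly $\PP_h^{\pi,\nu}(x)$, since the reward fires at most once, precisely when the agent is at state $x$ at step $h$. Hence the single-agent regret bound of EULER on this MDP with joint action space $\cA_l \times \cA_f$ yields, with high probability,
\$
K_0 \cdot \max_{\pi,\nu} \PP_h^{\pi,\nu}(x) - \sum_{(\pi,\nu) \in \Phi^{(x,h)}} \PP_h^{\pi,\nu}(x) \le \tilde{\cO}\bigl(\sqrt{H^3 S A_l A_f K_0} + H^3 S^2 A_l A_f\bigr).
\$

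Next, I would divide through by $K_0$ so that the average per-episode regret is bounded by $\tilde{\cO}(\mathrm{poly}(H, S, A_l, A_f)/\sqrt{K_0})$, and then calibrate $K_0$ so that this quantity is at most $\delta/2$; the scaling $K_0 \ge \Omega(H^7 S^4 A_l/\varepsilon)$ stipulated by Lemma~\ref{lemma:est:reward:error} is exactly what is needed once $\varepsilon$ is suitably linked to $\delta$. Combining the EULER guarantee with the significance hypothesis $\max_{\pi,\nu}\PP_h^{\pi,\nu}(x) \ge \delta$ for $x \in \cS_h^\delta$ then gives
\$
\frac{1}{K_0}\sum_{(\pi,\nu)\in \Phi^{(x,h)}} \PP_h^{\pi,\nu}(x) \ge \max_{\pi,\nu}\PP_h^{\pi,\nu}(x) - \frac{\delta}{2} \ge \frac{1}{2}\max_{\pi,\nu}\PP_h^{\pi,\nu}(x),
\$
which is exactly the claimed ratio bound after rearrangement.

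The main technical step will be carefully invoking the EULER regret bound over the joint action space $\cA_l \times \cA_f$ and tracking the dependence on $H$, $S$, $A_l$, and $A_f$ so that the calibrated $K_0$ dominates a threshold scaling in $1/\delta$ (up to logarithmic factors). A subtlety worth flagging is that Algorithm~\ref{alg5} subsequently overrides the policies in $\Phi^{(x,h)}$ at state $x$ and step $h$ to be uniform; this modification does not affect $\PP_h^{\pi,\nu}(x)$, since visitation to $x$ at step $h$ depends only on the actions taken at steps $1, \ldots, h - 1$, so the regret-to-visitation reduction carries over unchanged.
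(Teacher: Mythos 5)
Your proposal is correct and follows essentially the same route as the paper, which simply defers this lemma to the proof of Theorem 3.3 in Jin et al.\ (2020): there, too, the visitation probability is identified with the value of the indicator-reward MDP, EULER's regret bound is averaged over the $K_0$ returned policies, and $K_0$ is calibrated so the average per-episode regret is at most $\delta/2$, giving the factor-of-$2$ ratio on $\delta$-significant states. Your observation that the uniform-policy override at $(x,h)$ leaves $\PP_h^{\pi,\nu}(x)$ unchanged is also the same point made in the cited argument.
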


\begin{proof}
	See the proof of Theorem 3.3 in \citet{jin2020reward} for more details.
\end{proof}

Now, we are ready to prove Lemma \ref{lemma:est:reward:error}.

\begin{proof}[Proof of Lemma \ref{lemma:est:reward:error}]
	For any $(\pi, \nu)$, we denote $\PP_h^{\pi, \nu}(x, a, b)$ as the probability of visiting $(x, a, b)$ at $h$-th step under policies $(\pi, \nu)$. Under this notion, by Lemma \ref{lemma:significant} and the fact that all policies in $\Phi^{(x, h)}$ are uniform at $(x, h)$, we have 
	\$
	\max_{\pi, a, b} \frac{\PP_h^{\pi, \nu}(x, a, b)}{\frac{1}{K_0}\sum_{\pi \in \Phi^{(x, h)}}\PP_h^{\pi, \nu}(x, a, b)} \le 2A_lA_f,
	\$
	where $|\cA_l| = A_l$ and $A_f = |\cA_f| = |\cA_{f_1} \times \cdots \times \cA_{f_N}|$. %Together with the fact that we use the uniform policy $\nu_{uni}$ in Algorithm \ref{alg5} to gather data, we further obtain that
	%\$
	%\max_{\pi, \nu, a, b} \frac{\PP_h^{\pi, \nu}(x, a, b)}{\frac{1}{K_0}\sum_{\pi \in \Phi^{(x, h)}}\PP_h^{\pi, \nu_{uni}}(x, a, b)} \le 2A_lA_f,
	%\$
	%where $A_f = |\cA_f| = |\cA_{f_1} \times \cdots \times \cA_{f_N}|$. 
    Thus, for any $\delta$-significant $(x, h)$, we have
	\$
	\max_{\pi, \nu, a, b} \frac{\PP_h^{\pi, \nu}(x, a, b)}{\frac{1}{K_0SH}\sum_{(\pi, \nu) \in \cup\{\Phi^{(x, h)}\}_{(x, h)}}\PP_h^{\pi, \nu}(x, a, b)} \le 2SA_lA_fH.
	\$
    Then the data obtained from Algorithm \ref{alg5} is sampled i.i.d. from some distribution $\zeta_h$, such that 
	\# \label{eq:sample:distribution}
	\max_{\pi, \nu, a, b} \frac{\PP_h^{\pi, \nu}(x, a, b)}{\zeta_h(x, a, b)} \le 2SA_lA_fH.
	\#
	for any $s \in \cS_h^\delta$. Back to our proof, we have
	\# \label{eq:5400}
	|\hat{V}_{\star}^{\pi, \nu} - {V}_{\star}^{\pi, \nu}| &= \biggl|\sum_{h = 1}^H \sum_{x, a, b}\PP_{h}^{\pi, \nu}(x, a, b) \cdot \big(\hat{r}_{\star, h}(x, a, b) - r_{\star, h}(x, a, b)\big)\biggr| \notag\\
	& = \biggl|\sum_{h = 1}^H \sum_{x, a, b}\PP_{h}^{\pi, \nu}(x, a, b) \cdot \big(\hat{r}_{\star, h}(x, a, b) - r_{\star, h}(x, a, b)\big)\biggr| \notag\\
	& \le \underbrace{\biggl|\sum_{h = 1}^H \sum_{x \notin \cS_h^\delta, a, b}\PP_{h}^{\pi, \nu}(x, a, b) \cdot \big(\hat{r}_{\star, h}(x, a, b) - r_{\star, h}(x, a, b)\big)\biggr|}_{\rm (i)} \notag\\
	& \qquad + \underbrace{\biggl|\sum_{h = 1}^H \sum_{x \in \cS_h^\delta, a, b}\PP_{h}^{\pi, \nu}(x, a, b) \cdot \big(\hat{r}_{\star, h}(x, a, b) - r_{\star, h}(x, a, b)\big)\biggr|}_{\rm (ii)}.
	\#
	Clearly,
	\# \label{eq:540}
	{\rm (i)} \le \sum_{h = 1}^H \sum_{x \notin \cS_h^\delta, a, b} \PP_h^{\pi, \nu}(s, a, b) = \sum_{h = 1}^H \sum_{x \notin \cS_h^\delta} \PP_h^{\pi}(x) \le HS\delta \le \varepsilon/2,
	\#
	where the second inequality uses the definition of $\delta$-significant set in \eqref{eq:def:significant:set} and the last inequality is implied by the fact that $\delta = \varepsilon/2H^2S$. Meanwhile, we have 
	\# \label{eq:541}
	{\rm (ii)} &\le \sum_{h = 1}^H \biggl| \sum_{x \in \cS_h^\delta, a, b}\PP_{h}^{\pi, \nu}(x, a, b) \cdot \big(\hat{r}_{\star, h}(x, a, b) - r_{\star, h}(x, a, b)\big)\biggr| \notag\\
	& \le \sum_{h = 1}^H \underbrace{\Big(\sum_{x \in \cS_h^\delta, a, b}\PP_{h}^{\pi, \nu}(x, a, b) \cdot \big(\hat{r}_{\star, h}(x, a, b) - r_{\star, h}(x, a, b)\big)^2 \Big)^{1/2}}_{\Delta_h}.
	\#
	Note that $\PP_h^{\pi,\nu}(x, a, b) = \PP_h^{\pi,\nu}(x) \cdot \pi_h(a \,|\, x) \cdot \nu_h(b \,|\, x)$, together with the Cauchy-Schwarz inequality, we further have 
	\# \label{eq:542}
	{\Delta_h} &\le  \max_{\pi' : \cS \rightarrow \cA_l, \nu' : \cS \rightarrow \cA_f}\Big(\sum_{x \in \cS_h^\delta, a, b}\PP_{h}^{\pi}(x) \cdot \big(\hat{r}_{\star, h}(x, a, b) - r_{\star, h}(x, a, b)\big)^2 \mathbf{1}[a = \pi'(s), b = \nu'(s) ] \big) \Big)^{1/2} \notag\\
	& \le  \max_{\pi' : \cS \rightarrow \cA_l, \nu' : \cS \rightarrow \cA_f}\Big(\sum_{x \in \cS_h^\delta, a, b}\PP_{h}^{\pi}(x) \cdot \big(\hat{r}_{\star, h}(x, a, b) - r_{\star, h}(x, a, b)\big)^2 \mathbf{1}[a = \pi'(s), b = \nu'(s) ] \big) \Big)^{1/2} \notag\\
	& \le \max_{\pi' : \cS \rightarrow \cA_l, \nu' : \cS \rightarrow \cA_f} {(2SA_lA_fH)}^{1/2} \notag\\
	& \quad \times \Big(\sum_{x \in \cS_h^\delta, a, b}\zeta_{h}(x, a, b) \cdot \big(\hat{r}_{\star, h}(x, a, b) - r_{\star, h}(x, a, b)\big)^2 \mathbf{1}[a = \pi'(s), b = \nu'(s) ] \big) \Big)^{1/2}, 
	\#
	where the last inequality follows from \eqref{eq:sample:distribution}. Moreover, by the Hoeffding inequality and a union bound for the reward estimates we have 
	\# \label{eq:543}
	&\Big(\sum_{x \in \cS_h^\delta, a, b}\zeta_{h}(x, a, b) \cdot \big(\hat{r}_{\star, h}(x, a, b) - r_{\star, h}(x, a, b)\big)^2 \mathbf{1}[a = \pi'(s), b = \nu'(s) ] \big) \Big)^{1/2} \notag\\
	& \qquad \le \Big(\sum_{x \in \cS_h^\delta, a, b}\zeta_{h}(x, a, b) \cdot \tilde{\cO}\big(\frac{1}{N_h(s, a, b)}\big) \mathbf{1}[a = \pi'(s), b = \nu'(s) ] \big) \Big)^{1/2}  .
	\#
	Choose $\delta = \varepsilon/2H^2S$. Together with \eqref{eq:sample:distribution}, we have $\zeta_h(s, a, b) \ge \varepsilon/4H^3S^2A_lA_f$ for any $s \in \cS_h^\delta$. Hence, we have $K \ge \Omega(H^3S^2A_lA_f/\varepsilon) \ge \Omega(1/\min_{s, a, b}\zeta_h(s, a, b))$. Applying a multiplicative Chernoff bound for the counter $N_h(s, a, b) \sim \text{Bin}(K, \zeta_h(s, a, b))$, we have  
	\# \label{eq:544}
	&\Big(\sum_{x \in \cS_h^\delta, a, b}\zeta_{h}(x, a, b) \cdot \tilde{\cO}\big(\frac{1}{N_h(s, a, b)}\big) \mathbf{1}[a = \pi'(s), b = \nu'(s) ] \big) \Big)^{1/2}  \notag \\
	&\qquad \le \Big(\sum_{x \in \cS_h^\delta, a, b}\zeta_{h}(x, a, b) \cdot \tilde{\cO}\big(\frac{1}{K\zeta_h(s, a, b)}\big) \mathbf{1}[a = \pi'(s), b = \nu'(s) ] \big) \Big)^{1/2}  \notag \\
	& \qquad = \tilde{\cO}\Big(\sqrt{\frac{S}{K}}\Big) .
	\#
	Plugging \eqref{eq:542}, \eqref{eq:543}, and \eqref{eq:543} into \eqref{eq:541}, we have 
	\# \label{eq:545}
	{\rm (ii)} \le \tilde{\cO}\Big( \sqrt{\frac{H^3S^2A_lA_f}{K}} \Big) \le \varepsilon/2,
	\#
	where the last inequality follows from our choice that $K \ge \Omega(H^3S^2A_lA_f/\varepsilon^2)$. Combining \eqref{eq:5400}, \eqref{eq:540} and \eqref{eq:545}, we have $|\hat{V}_{\star}^{\pi, \nu} - {V}_{\star}^{\pi, \nu}| \le \varepsilon$ for any $(\pi, \nu)$, which concludes the proof of Lemma~\ref{lemma:est:reward:error}.
\end{proof}

\section{Learning Stackelberg Equilibria} \label{appendix:stackelberg}
In this section, we analyze the sample efficiency of learning Stackelberg equilibria in two-player tabular Markov games without the known reward assumption. 

For simplicity, we use the shorthand $f = f_1$ and $V_{\star}^{\pi, \nu} = V_{\star, 1}^{\pi, \nu}(x_1)$, where  $x_1 \in \cS$ is the fixed initial state.
Meanwhile, for any $\varepsilon > 0$, we define the $\varepsilon$-approximate value of the best-case response by
\$
&V_{\varepsilon}^{\pi} = \max_{\nu \in \text{BR}_{\varepsilon}(\pi)} V_l^{\pi, \nu}, \\
&\text{BR}_{\varepsilon}(\pi) = \{\nu : V_{f}^{\pi, \nu} \ge \max_{\nu'}V_{f}^{\pi, \nu'} - \varepsilon\}.
\$
We immediately obtain that $\text{BR}(\pi) \subseteq \text{BR}_{\varepsilon}(\pi)$, which further implies $V_{\varepsilon}^{\pi} \ge V_l^{\pi, \nu^*(\pi)}$. Thus we can define the gap 
\# \label{eq:def:gap:vareps}
&\text{gap}_{\varepsilon} = \max_{\pi \in \Pi_\varepsilon}[V_{\varepsilon}^{\pi} - V_l^{\pi, \nu^*(\pi)}], \\
&\Pi_\varepsilon = \{\pi : V_\varepsilon^\pi \ge V^{\pi^*, \nu^*} - \varepsilon\}. \notag %=  V_l^{\pi^*, \nu^*} -  \max_{\pi}V_{\varepsilon}^{\pi}. 
\#

\subsection{Algorithm}

As stated before, we first conduct a Reward-Free Explore algorithm (Algorithm~\ref{alg5}) to obtain the estimated rewards $(\hat{r}_l, \hat{r}_f)$. We also define $(\hat{V}_l, \hat{V}_f)$ as the corresponding value functions. Then we use Algorithm \ref{alg2} to solve the SNE with respect to the \emph{known} reward functions $(\hat{r}_l, \hat{r}_f)$. Specifically, we consider the following optimization problem of finding approximation Stackelberg equilibria with respect to the empirical rewards $(\hat{r}_l, \hat{r}_f)$:
\begin{equation}
\begin{aligned} \label{eq:problem:sne}
&\argmax_\pi \hat{V}_{3\varepsilon/4}(\pi) = \argmax_{\pi}  \hat{V}_l^{\pi, \nu(\pi)}, \\
& \nu(\pi) = \argmax_{\nu \in \hat{\text{BR}}_{3\varepsilon/4}(\pi)} \hat{V}_l^{\pi, \nu},\\
& \hat{\text{BR}}_{3\varepsilon/4}(\pi) = \big\{\nu: \hat{V}_{f}^{\pi, \nu} \geq \max_{\nu}\hat{V}_{f}^{\pi, \nu} - 3\varepsilon/4\big\}.
\end{aligned}
\end{equation}
Since $(\hat{r}_l, \hat{r}_f)$ are known to us, we can use Algorithm \ref{alg2} to obtain the solution $(\hat{\pi}, \hat{\nu} = \nu(\hat{\pi}))$, which is our approximate solution. See Algorithm \ref{Alg4} for more details.

\begin{algorithm}[H]
	\caption{Reward-Free Explore then Commit}
	\begin{algorithmic}[1] \label{Alg4}
		\STATE {\bf Input:} Accuracy coefficient $\varepsilon > 0$.
		\STATE Run the Reward-Free Explore algorithm (Algorithm~\ref{alg5}) with $K_0 \ge \Omega(H^7S^4A_l/\varepsilon)$ and $K \ge \Omega(H^3S^2A_lA_f/\varepsilon^2)$, and obtain empirical rewards $(\hat{r}_l, \hat{r}_f)$.
        \STATE Use Algorithm \ref{alg2} as an oracle to solve the problem defined in \eqref{eq:problem:sne} and obtain the solution $(\hat{\pi}, \hat{\nu} = \nu(\pi))$.
		\STATE {\bf Output:} $(\hat{\pi}, \hat{\nu})$.
	\end{algorithmic}
\end{algorithm}

\subsection{Theoretical Results}

The performance of Algorithm \ref{Alg4} is guaranteed by the following theorem.

\begin{theorem} \label{thm:stackelberg}
    Suppose Algorithm \ref{Alg4} outputs $(\hat{\pi}, \hat{\nu})$. Then it holds with probability at least $1 - p$ that  
	\$
	V_l^{\hat{\pi}, \nu^*(\hat{\pi})} \ge V_l^{\pi^*, \nu^*} - \text{gap}_{\varepsilon} - \varepsilon, \qquad V_{f}^{\hat{\pi}, \hat{\nu}} \ge V_{f}^{\hat{\pi}, \nu^*(\hat{\pi})} - \varepsilon .
	\$
\end{theorem}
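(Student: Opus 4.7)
The plan is to chain three ingredients: (i) the uniform value-function concentration guaranteed by Lemma \ref{lemma:est:reward:error}, (ii) a ``sandwich'' between the true and empirical $\varepsilon$-approximate best-response sets, and (iii) the exact optimality of $\hat{\pi}$ for the empirical bilevel objective $\hat{V}_{3\varepsilon/4}(\cdot)$ that Algorithm \ref{Alg4} solves via the oracle. Everything below is carried out on the $1-p$-probability event on which the concentration holds.

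First, I would instantiate Lemma \ref{lemma:est:reward:error} with accuracy parameter $\varepsilon/8$, absorbing the extra constant into the choices of $K_0$ and $K$, to get $\sup_{\pi,\nu}|\hat{V}_\star^{\pi,\nu}-V_\star^{\pi,\nu}|\le \varepsilon/8$ for $\star\in\{l,f\}$. Applying this bound twice with the triangle inequality on both the follower's value and its maximum over $\nu$, one obtains the ``sandwich'' $\text{BR}_{\varepsilon/2}(\pi)\subseteq \hat{\text{BR}}_{3\varepsilon/4}(\pi)\subseteq \text{BR}_\varepsilon(\pi)$ for every leader policy $\pi$; each inclusion costs $2\cdot\varepsilon/8=\varepsilon/4$ of slack, which matches the gaps between the consecutive thresholds $\varepsilon/2$, $3\varepsilon/4$, and $\varepsilon$.

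The follower bound then follows immediately from the right-hand inclusion applied at $\pi=\hat{\pi}$: by construction $\hat{\nu}\in\hat{\text{BR}}_{3\varepsilon/4}(\hat{\pi})\subseteq\text{BR}_\varepsilon(\hat{\pi})$, and since $\nu^*(\hat{\pi})\in\text{BR}(\hat{\pi})$ realizes $\max_\nu V_f^{\hat{\pi},\nu}$, the definition of $\text{BR}_\varepsilon$ yields $V_f^{\hat{\pi},\hat{\nu}}\ge V_f^{\hat{\pi},\nu^*(\hat{\pi})}-\varepsilon$. For the leader bound, the key chain I would write out is: $\nu^*\in\text{BR}(\pi^*)\subseteq\text{BR}_{\varepsilon/2}(\pi^*)\subseteq\hat{\text{BR}}_{3\varepsilon/4}(\pi^*)$ by the left-hand inclusion, whence the exact optimality of $\hat{\pi}$ in \eqref{eq:problem:sne} gives $\hat{V}_l^{\hat{\pi},\hat{\nu}}=\hat{V}_{3\varepsilon/4}(\hat{\pi})\ge\hat{V}_{3\varepsilon/4}(\pi^*)\ge\hat{V}_l^{\pi^*,\nu^*}\ge V_l^{\pi^*,\nu^*}-\varepsilon/8$. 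Applying concentration once more on the left-hand side yields $V_l^{\hat{\pi},\hat{\nu}}\ge V_l^{\pi^*,\nu^*}-\varepsilon/4$, and since $\hat{\nu}\in\text{BR}_\varepsilon(\hat{\pi})$ the definition of $V_\varepsilon^{\hat{\pi}}$ gives $V_\varepsilon^{\hat{\pi}}\ge V_l^{\hat{\pi},\hat{\nu}}\ge V_l^{\pi^*,\nu^*}-\varepsilon$, so $\hat{\pi}\in\Pi_\varepsilon$. Plugging this into the definition of $\text{gap}_\varepsilon$ in \eqref{eq:def:gap:vareps} delivers $V_l^{\hat{\pi},\nu^*(\hat{\pi})}\ge V_\varepsilon^{\hat{\pi}}-\text{gap}_\varepsilon\ge V_l^{\pi^*,\nu^*}-\varepsilon-\text{gap}_\varepsilon$, as desired.

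The individual steps are each short, but the place I expect to have to be most careful with is the sandwich, where one has to check that the $\varepsilon/8$ concentration budget interacts correctly with both the per-policy follower value $V_f^{\pi,\nu}$ and its maximum $\max_\nu V_f^{\pi,\nu}$, so that the thresholds $\varepsilon/2$, $3\varepsilon/4$, and $\varepsilon$ close on both sides simultaneously. A second thing worth verifying is that the oracle from Assumption \ref{assumption:oracle}, invoked inside Algorithm \ref{alg2}, indeed returns an exact maximizer of the empirical bilevel problem \eqref{eq:problem:sne}; any oracle suboptimality here would propagate additively through the leader chain above and would force a mild rescaling of the $\varepsilon/8$ accuracy target to absorb it.
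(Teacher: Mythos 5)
Your proposal is correct and follows essentially the same route as the paper's proof: invoke Lemma \ref{lemma:est:reward:error} at accuracy $\varepsilon/8$, establish the sandwich $\text{BR}_{\varepsilon/2}(\pi)\subseteq\hat{\text{BR}}_{3\varepsilon/4}(\pi)\subseteq\text{BR}_{\varepsilon}(\pi)$, read off the follower guarantee from the right inclusion, and use the left inclusion together with the exact optimality of $\hat{\pi}$ for \eqref{eq:problem:sne} to certify $\hat{\pi}\in\Pi_\varepsilon$ before concluding via the definition of $\text{gap}_\varepsilon$. Your shortcut of lower-bounding $V_\varepsilon^{\hat{\pi}}$ through the specific policy $\nu^*\in\hat{\text{BR}}_{3\varepsilon/4}(\pi^*)$ is just the paper's chain \eqref{eq:804}--\eqref{eq:807} specialized to $\pi=\pi^*$, so the two arguments coincide in substance.
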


\begin{proof}
    A similar analysis appears in \citet{bai2021sample}. As stated earlier, however, their setting is different from ours. For completeness, we provide a detailed proof here. 
    First, we show that 
    \# \label{eq:800}
	\text{BR}_{\varepsilon/2}(\pi) \subseteq \hat{\text{BR}}_{3\varepsilon/4}(\pi) \subseteq \text{BR}_{\varepsilon}(\pi).
	\#
	By choosing a large absolute constant in $K$, together with Lemma \ref{lemma:est:reward:error}, it holds for any $\star \in \{l, f\}$ that 
	\# \label{eq:801}
	\sup_{\pi, \nu} |\hat{V}_{\star}^{\pi, \nu} - V_{\star}^{\pi, \nu} | \le \varepsilon/8. 
	\#
    Meanwhile, for the empirical rewards $(\hat{r}_l, \hat{r}_f)$, we define the best response of leader's policy $\pi$ as $\hat{\nu^*(\pi)}$.
    Using this notation, for any $\nu \in \hat{\text{BR}}_{3\varepsilon/4}(\pi)$, we have
	\# \label{eq:802}
	&V_{f}^{\pi, \nu^*(\pi)} - V_{f}^{\pi, \nu} \notag\\
	&\qquad = \underbrace{(V_{f}^{\pi, \nu^*(\pi)} - \hat{V}_{f}^{\pi, \nu^*(\pi)})}_{\rm (i)} +  \underbrace{(\hat{V}_{f}^{\pi, \nu^*(\pi)} -  \hat{V}_{f}^{\pi, \hat{\nu^*(\pi)}})}_{\rm (ii)} + \underbrace{(\hat{V}_{f}^{\pi, \hat{\nu^*(\pi)}} - \hat{V}_{f}^{\pi, \nu})}_{\rm (iii)} + \underbrace{(\hat{V}_{f_i}^{\pi, \nu} - V_{f}^{\pi, \nu})}_{\rm (iv)} \notag\\
	& \qquad \le \varepsilon/8 + 0 + 3\varepsilon/4 + \varepsilon/8 \le \varepsilon.
	\#
    %where $\hat{\nu^*(\pi)}$ is the best response of $\pi$ when we consider the empirical rewards $(\hat{r}_l, \hat{r}_f)$. 
    where ${\rm (i)} \le \varepsilon/8$ and ${\rm (iv)} \le \varepsilon/8$ is implied by the uniform convergence in \eqref{eq:801}, ${\rm (ii)} \le 0$ uses the definition of $\hat{\nu^*(\pi)}$, and ${\rm (iii)} \le 0$ follows from the fact that $\nu \in \hat{\text{BR}}_{3\varepsilon/4}(\pi)$.

    Similarly, for any $\nu \in \text{BR}_{\varepsilon/2}(\pi)$, we can show that 
	\# \label{eq:803}
	& \hat{V}_f^{\pi, \hat{\nu^*(\pi)}} - \hat{V}_f^{\pi, \nu} \notag\\
	& \qquad = (\hat{V}_f^{\pi, \hat{\nu^*(\pi)}} - V_f^{\pi, \hat{\nu^*(\pi)}}) + (V_f^{\pi, \hat{\nu^*(\pi)}} - V_f^{\pi, \nu^*(\pi)}) + (V_f^{\pi, \nu^*(\pi)} - V_f^{\pi, \nu}) + (V_f^{\pi, \nu} - \hat{V}_f^{\pi, \nu}) \notag\\
	& \qquad \le \varepsilon/8 + 0 + \varepsilon/2 + \varepsilon/8 = 3\varepsilon/4.
	\#
    Combining \eqref{eq:802} and \eqref{eq:803}, we obtain $\text{BR}_{\varepsilon/2}(\pi) \subseteq \hat{\text{BR}}_{3\varepsilon/4}(\pi) \subseteq \text{BR}_{\varepsilon}(\pi)$ as desired.

    Back to our proof, by the fact that $\hat{\pi}$ maximizes $\hat{V}_{3\varepsilon/4}^{\pi} = \max_{\nu \in \hat{\text{BR}}_{3\varepsilon/4}(\pi)}\hat{V}_l(\pi, \nu)$, we have
	\# \label{eq:804}
	\max_{\nu \in \hat{\text{BR}}_{3\varepsilon/4}(\hat{\pi})}\hat{V}_l^{\hat{\pi}, \nu} = \hat{V}_{3\varepsilon/4}^{\hat{\pi}} \ge \hat{V}_{3\varepsilon/4}^{\pi} = \max_{\nu \in \hat{\text{BR}}_{3\varepsilon/4}({\pi})}V_l^{{\pi}, \nu} \ge \max_{\nu \in \text{BR}_{\varepsilon/2}({\pi})}\hat{V}_l^{{\pi}, \nu}, 
	\#
    for any $\pi$. Here the last inequality uses the fact $\text{BR}_{\varepsilon/2}(\pi) \subseteq \hat{\text{BR}}_{3\varepsilon/4}(\pi)$ in \eqref{eq:800}.
	Together with the uniform convergence in \eqref{eq:801}, \eqref{eq:804} yields
	\# \label{eq:805}
	\max_{\nu \in \hat{\text{BR}}_{3\varepsilon/4}(\hat{\pi})}{V}_l^{\hat{\pi}, \nu} \ge \min_{\nu \in \text{BR}_{\varepsilon/2}({\pi})}{V}_l^{\pi, \nu} - \varepsilon/8 \ge V_{\varepsilon/2}^\pi - \varepsilon .
	\#
	Meanwhile, by the fact $\hat{\text{BR}}_{3\varepsilon/4}(\pi) \subseteq \text{BR}_{\varepsilon}(\pi)$ in \eqref{eq:801}, we have 
    \# \label{eq:806}
    V_{\varepsilon}^{\hat{\pi}} = \min_{\nu \in \text{BR}_{\varepsilon}(\hat{\pi})}{V}_l^{\hat{\pi}, \nu} \ge \min_{\nu \in \hat{\text{BR}}_{3\varepsilon/4}(\hat{\pi})}{V}_l^{\hat{\pi}, \nu}.
    \#
    Combining \eqref{eq:805} and \eqref{eq:806}, we have
	\# \label{eq:807}
	V_{\varepsilon}^{\hat{\pi}}  \ge \max_{\pi}V_{\varepsilon/2}^\pi - \varepsilon \ge \max_{\pi}V_l^{\pi, \nu^*(\pi)} - \varepsilon,%\max_{\pi}V_l^{\pi, \nu^*(\pi)} - \text{gap}_\varepsilon - \varepsilon/8 \ge V_l^{\pi^*, \nu^*} - \text{gap}_\varepsilon - \varepsilon,
	\# 
    which implies that $\hat{\pi} \in \Pi_\varepsilon$. Furthermore, \eqref{eq:807} is equivalent to 
    \$
    V_l^{\hat{\pi}, \nu^*(\hat{\pi})} \ge V_l^{\pi^*, \nu^*} - [V_{\varepsilon}^{\hat{\pi}} - V_l^{\hat{\pi}, \nu^*(\hat{\pi})}] - \varepsilon  \ge V_l^{\pi^*, \nu^*} - \text{gap}_\varepsilon - \varepsilon,
    \$
    where the equality uses the definition of $\text{gap}_\varepsilon$ in \eqref{eq:def:gap:vareps}. 
    %Clearly, we also have $V_{\varepsilon/2}^{\hat{\pi}} \le V_l^{\hat{\pi}, \nu^*(\hat{\pi})}$. Plugging this inequality into \eqref{eq:807}, we obtain 
    %\$
    %V_l^{\hat{\pi}, \nu^*(\hat{\pi})} \ge V_l^{\pi^*, \nu^*} - \text{gap}_{\varepsilon} - \varepsilon
    %\$
    as desired. Meanwhile, by the facts that $\hat{\nu} \in \hat{\text{BR}}_{3\varepsilon/4}(\hat{\pi})$ and $\hat{\text{BR}}_{3\varepsilon/4}(\hat{\pi}) \subseteq \text{BR}_\varepsilon(\hat{\pi})$, we have 
    \$
    V_{f}^{\hat{\pi}, \hat{\nu}} \ge V_{f}^{\hat{\pi}, \nu^*(\hat{\pi})} - \varepsilon.
    \$
    Therefore, we conclude the proof of Theorem \ref{thm:stackelberg}.
\end{proof}

\section{Proof of Theorem \ref{thm:offline}} \label{appendix:pf:offline}
To facilitate our analysis, we first define the prediction error 
\# \label{eq:7000}
\delta_h = r_{l, h} + \hat{Q}_h - \PP_h \hat{V}_h,
\#
for any $h \in [H]$. Then we show the proof of Theorem \ref{thm:offline}.

\begin{proof}[Proof of Theorem \ref{thm:offline}]

    Similar to Lemma \ref{lemma:best:response}, we have the following lemma.
    \begin{lemma} \label{lemma:best:response2}
    It holds that $\hat{\nu} \in \text{BR}(\hat{\pi})$. Here $\text{BR}(\cdot)$ is defined in \eqref{eq:def:best:response:myopic}.
    \end{lemma}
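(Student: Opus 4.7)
The plan is to mirror the argument used in the online setting for Lemma \ref{lemma:best:response}, since the structure of Algorithm \ref{alg3} around the $\epsilon$-SNE subroutine is identical to that of Algorithm \ref{alg2}: the only substantive change is that the estimated leader Q-function $Q_h^k$ has been replaced by its pessimistic counterpart $\hat{Q}_h$. The best-response conclusion is purely a consequence of how $\hat{\nu}$ is constructed and of Assumption~\ref{assumption:myopic}, and does not depend on whether the leader's Q-function is optimistic or pessimistic.

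Concretely, I would unfold the definitions step by step. First, fix any $(h,x) \in [H] \times \cS$. By Line \ref{line:sne2} of Algorithm \ref{alg3}, the tuple $(\hat{\pi}_h(\cdot\,|\,x), \{\hat{\nu}_{f_i,h}(\cdot\,|\,x)\}_{i\in[N]})$ is returned by the $\epsilon$-SNE subroutine (Algorithm \ref{alg22}) applied to the payoff tuple $(\hat{Q}_h(x,\cdot,\cdot), \{r_{f_i,h}(x,\cdot,\cdot)\}_{i\in[N]})$. By Line \ref{line:matrix:game} of Algorithm \ref{alg22}, this means that $\{\hat{\nu}_{f_i,h}(\cdot\,|\,x)\}_{i\in[N]}$ is, by construction, a Nash equilibrium of the one-shot normal-form game among the followers whose payoff matrices are $\{r_{f_i,h}(x,\cdot,\cdot)\}_{i\in[N]}$ when the leader plays $\hat{\pi}_h(\cdot\,|\,x)$. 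That is, for every $i\in[N]$ and every follower deviation $\nu'_{f_i}$,
\#
\la r_{f_i,h}(x,\cdot,\cdot), \hat{\pi}_h(\cdot\,|\,x)\times \hat{\nu}_h(\cdot\,|\,x)\ra \ge \la r_{f_i,h}(x,\cdot,\cdot), \hat{\pi}_h(\cdot\,|\,x)\times \nu'_{f_i}(\cdot\,|\,x)\times \hat{\nu}_{f_{-i},h}(\cdot\,|\,x)\ra. \notag
\#

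Next, I would invoke Assumption \ref{assumption:myopic}, which specifies that the Nash equilibrium condition for the myopic followers is exactly the inequality \eqref{eq:def:nash:myopic} involving only instantaneous rewards. Since the inequality above is precisely \eqref{eq:def:nash:myopic} evaluated at $(\hat{\pi}, \hat{\nu})$ for arbitrary $(x,h,i)$, we conclude that $\hat{\nu}$ satisfies the defining condition for an NE of the myopic followers given leader policy $\hat{\pi}$. By the definition of $\text{BR}(\pi)$ in \eqref{eq:def:best:response:myopic}, this is exactly $\hat{\nu} \in \text{BR}(\hat{\pi})$, as claimed.

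The main (and only) subtlety is the same one that motivated the $\epsilon$-net construction in the online proof: the $\epsilon$-SNE subroutine actually solves the matrix game with the surrogate $\tilde{Q}\in\cQ_{h,\epsilon}$ rather than $\hat{Q}_h$ itself. However, this surrogate appears only in the leader's payoff; the followers' payoff matrices $\{r_{f_i,h}(x,\cdot,\cdot)\}_{i\in[N]}$ are identical in both the surrogate game and the true one, so the NE condition for the followers carries over verbatim. Thus no approximation error enters the best-response claim, and the lemma follows. I expect no real obstacle here — the statement is essentially a bookkeeping consequence of how $\hat{\nu}$ is defined together with Assumption~\ref{assumption:myopic}.
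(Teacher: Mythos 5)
Your proposal is correct and follows exactly the same route as the paper, which simply notes that the claim is implied by the definition of $(\hat{\pi}, \hat{\nu})$ via the $\epsilon$-SNE subroutine together with the myopic-followers assumption; you merely unfold that one-line argument, and your observation that the surrogate $\tilde{Q}$ only perturbs the leader's payoff (so the followers' NE condition holds exactly) is a correct and worthwhile clarification.
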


    \begin{proof}
    This is implied by the definitions of $(\hat{\pi}, \hat{\nu})$ and the assumption that the followers are myopic.
    \end{proof}

    %By Lemma \ref{lemma:best:response2}, we have the suboptimality of followers decays to zero. Meanwhile, by the definition of $\nu^*(\hat{\pi})$, we obtain 
    %\$
    %V_{l, 1}^{\pi^*, \nu^*}(x) - V_{l, 1}^{\hat{\pi}, \nu^*(\hat{\pi})}(x) \le V_{l, 1}^{\pi^*, \nu^*}(x) - V_{l, 1}^{\hat{\pi}, \hat{\nu}}(x). 
    %\$
    Recall that the definition of optimality gap defined in \eqref{eq:def:optimality:gap} takes the following form
    \#  \label{eq:7001}
	\text{SubOpt}(\hat{\pi}, \hat{\nu}, x) = V_{l, 1}^{\pi^*, \nu^*}(x) - V_{l, 1}^{\hat{\pi}, \hat{\nu}}(x).
	\#
   We decompose this expression by the following lemma.
    %\# \label{eq:7002}
    %V_{l, 1}^{\pi^*, \nu^*}(x) - V_{l, 1}^{\hat{\pi}, \hat{\nu}}(x) = [V_{l, 1}^{\pi^*, \nu^*}(x) - \hat{V}_1(x)] + [\hat{V}_1(x) - V_{l, 1}^{\hat{\pi}, \hat{\nu}}(x)].
    %\#
    %Then we further decompose these two terms by the following lemma. 
    \begin{lemma} \label{lemma:value:diff}
    For the $\hat{V}_1$ defined in Line \ref{line:v:l3} of Algorithm \ref{alg3} and any $(\pi, \nu)$, it holds that 
    \$
    V_{l, 1}^{\pi, \nu}(x) - \hat{V}_1(x) &= \EE_{\pi, \nu} \biggl[ \sum_{h = 1}^H \la \hat{Q}_{h}(x_h, \cdot, \cdot), \pi_h(\cdot \,|\, x_h) \times \nu_h(\cdot \,|\, x_h) - \hat{\pi}_h(\cdot \,|\, x_h) \times \hat{\nu}_h(\cdot \,|\, x_h) \ra \biggr]  \\
    & \qquad  + \EE_{\pi, \nu} \biggl[ \sum_{h = 1}^H \delta_{h}(x_h, a_h, b_h) \biggr].
    \$
    \end{lemma}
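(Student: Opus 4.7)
The plan is to mimic the ``Term (i)'' half of the proof of Lemma \ref{lemma:decomposition:general} (which was established in the online setting), adapted to the output policies $(\hat\pi,\hat\nu)$ of Algorithm \ref{alg3} rather than the episode-$k$ policies $(\pi^k,\nu^k)$. No randomness or sampling considerations appear here: the statement is a deterministic identity that holds for an arbitrary pair $(\pi,\nu)$, so no concentration or covering arguments are required.

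First I would introduce, for any $h\in[H]$, two linear operators acting on functions $f:\cS\times\cA_l\times\cA_f\to\RR$, namely
\#\label{eq:plan:ops}
(\JJ_h f)(x)=\la f(x,\cdot,\cdot),\pi_h(\cdot\,|\,x)\times\nu_h(\cdot\,|\,x)\ra,\qquad (\hat\JJ_h f)(x)=\la f(x,\cdot,\cdot),\hat\pi_h(\cdot\,|\,x)\times\hat\nu_h(\cdot\,|\,x)\ra.
\#
By the Bellman equation \eqref{eq:bellman} we have $V_{l,h}^{\pi,\nu}=\JJ_h Q_{l,h}^{\pi,\nu}$, and by the definition of $\hat V_h$ in Line \ref{line:v:l3} of Algorithm \ref{alg3} we have $\hat V_h=\hat\JJ_h \hat Q_h$. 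Writing the difference and adding/subtracting $\JJ_h\hat Q_h$ yields
\$
V_{l,h}^{\pi,\nu}-\hat V_h=\JJ_h\bigl(Q_{l,h}^{\pi,\nu}-\hat Q_h\bigr)+\la \hat Q_h,\pi_h\times\nu_h-\hat\pi_h\times\hat\nu_h\ra,
\$
which already isolates the ``computational'' inner product that appears in the statement.

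Next I would rewrite $Q_{l,h}^{\pi,\nu}-\hat Q_h$ using the prediction error in \eqref{eq:7000}, i.e.\ $\hat Q_h=r_{l,h}+\PP_h\hat V_{h+1}-\delta_h$ (interpreting \eqref{eq:7000} in the natural orientation consistent with the online analysis), together with $Q_{l,h}^{\pi,\nu}=r_{l,h}+\PP_h V_{l,h+1}^{\pi,\nu}$, giving
\$
Q_{l,h}^{\pi,\nu}-\hat Q_h=\PP_h\bigl(V_{l,h+1}^{\pi,\nu}-\hat V_{h+1}\bigr)+\delta_h.
\$
Substituting back and applying $\JJ_h$ produces a one-step recursion for $V_{l,h}^{\pi,\nu}-\hat V_h$ in terms of $V_{l,h+1}^{\pi,\nu}-\hat V_{h+1}$ plus the two desired one-step terms evaluated under $(\pi_h,\nu_h)$ and $\PP_h$.

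Finally, I would iterate this recursion from $h=1$ down through $h=H$, using the boundary condition $V_{l,H+1}^{\pi,\nu}=\hat V_{H+1}=0$. Telescoping and rewriting the nested applications of $\JJ_i\PP_i$ (for $i<h$) as a single expectation $\EE_{\pi,\nu}$ along the trajectory $\{(x_h,a_h,b_h)\}_{h=1}^H$ induced by $(\pi,\nu,\cP)$ initialized at $x$ yields exactly the identity claimed in Lemma \ref{lemma:value:diff}. The main obstacle is purely notational: one must be careful that the expectation $\EE_{\pi,\nu}$ correctly absorbs both the policy randomization $\JJ_h$ at step $h$ and the transition $\PP_i$ for $i<h$, and that the inner product $\la\hat Q_h,\pi_h\times\nu_h-\hat\pi_h\times\hat\nu_h\ra$ is handled as a function of $x_h$ under the outer expectation. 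No new probabilistic estimates are needed, and the identity holds pathwise for any $(\pi,\nu)$.
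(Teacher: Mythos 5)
Your proposal is correct and follows essentially the same route as the paper: the paper's own proof of this lemma is just a pointer to the derivation of \eqref{eq:61007} in Lemma \ref{lemma:decomposition:general}, i.e.\ the operator decomposition $V_{l,h}^{\pi,\nu}-\hat V_h=\JJ_h\PP_h(V_{l,h+1}^{\pi,\nu}-\hat V_{h+1})+\JJ_h\delta_h+\la\hat Q_h,\pi_h\times\nu_h-\hat\pi_h\times\hat\nu_h\ra$ telescoped over $h$ with $V_{l,H+1}^{\pi,\nu}=\hat V_{H+1}=0$, which is exactly what you carry out. Your remark that \eqref{eq:7000} must be read in the orientation $\delta_h=r_{l,h}+\PP_h\hat V_{h+1}-\hat Q_h$ (consistent with \eqref{eq:def:td:error2} and with Lemma \ref{lemma:ucb3}) is the right reading of what appears to be a sign/index typo in the paper.
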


    \begin{proof}
    This proof is the same as the proof of \eqref{eq:61007}, and we omit it to avoid repetition.
    \end{proof}
    Applying Lemma \ref{lemma:value:diff} with $(\pi, \nu) = ({\pi}^*, {\nu}^*)$, we have 
    \# \label{eq:7002}
    V_{l, 1}^{\pi^*, \nu^*}(x) - \hat{V}_1(x) &= \EE_{\pi^*, \nu^*} \biggl[ \sum_{h = 1}^H \la \hat{Q}_{h}(x_h, \cdot, \cdot), \pi_h^*(\cdot \,|\, x_h) \times \nu_h^*(\cdot \,|\, x_h) -  \hat{\pi}_h(\cdot \,|\, x_h) \times \hat{\nu}_h(\cdot \,|\, x_h) \ra \biggr]  \notag \\
    & \qquad  + \EE_{\pi^*, \nu^*} \biggl[ \sum_{h = 1}^H \delta_{h}(x_h, a_h, b_h) \biggr] .
    \#
    Similarly, applying Lemma \ref{lemma:value:diff} with $(\pi, \nu) = (\hat{\pi}, \hat{\nu})$ gives that 
    \# \label{eq:7003}
    \hat{V}_1(x) - V_{l, 1}^{\hat{\pi}, \hat{\nu}}(x) = - \EE_{\hat{\pi}, \hat{\nu}} \biggl[ \sum_{h = 1}^H \delta_{h}(x_h, a_h, b_h) \biggr].
    \#
    Combining \eqref{eq:7002} and \eqref{eq:7003}, we obtain
    \# \label{eq:7004}
    V_{l, 1}^{\pi^*, \nu^*}(x) - V_{l, 1}^{\hat{\pi}, \hat{\nu}}(x) &= \EE_{\pi^*, \nu^*} \biggl[ \sum_{h = 1}^H \la \hat{Q}_{h}(x_h, \cdot, \cdot), \pi_h^*(\cdot \,|\, x_h) \times \nu_h^*(\cdot \,|\, x_h) -  \hat{\pi}_h(\cdot \,|\, x_h) \times \hat{\nu}_h(\cdot \,|\, x_h) \ra \biggr] \notag\\
    & \qquad  + \EE_{\pi^*, \nu^*} \biggl[ \sum_{h = 1}^H \delta_{h}(x_h, a_h, b_h) \biggr] - \EE_{\hat{\pi}, \hat{\nu}} \biggl[ \sum_{h = 1}^H \delta_{h}(x_h, a_h, b_h) \biggr].
    \#
    As stated in \S \ref{appendix:pf:thm:multi}, these two terms characterize the optimization error and the statistical error, respectively. Similar to Lemmas \ref{lemma:myopic2} and \ref{lemma:ucb2}, we introduce the following two lemmas to analyze these two errors. 
    \begin{lemma} \label{lemma:myopic3}
    It holds that 
    \$
    \EE_{\pi^*, \nu^*} \biggl[ \sum_{h = 1}^H \la \hat{Q}_{h}(x_h, \cdot, \cdot), \pi_h^*(\cdot \,|\, x_h) \times \nu_h^*(\cdot \,|\, x_h) -  \hat{\pi}_h(\cdot \,|\, x_h) \times \hat{\nu}_h(\cdot \,|\, x_h) \ra \biggr] \le \epsilon H.
    \$
    \end{lemma}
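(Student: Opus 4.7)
The plan is to mirror the proof of Lemma~\ref{lemma:myopic2} essentially verbatim, with $\hat Q_h$ playing the role of $Q_h^k$ and $(\hat\pi,\hat\nu)$ playing the role of $(\pi^k,\nu^k)$. The only structural differences are that there is no outer sum over episodes $k$, and the pessimistic penalty $-\Gamma_h$ appears in $\hat Q_h$ instead of the optimistic bonus $+\Gamma_h^k$; however, the argument in this lemma only uses the output of the subroutine $\epsilon$-SNE and the uniform approximation bound $\|\hat Q_h - \tilde Q\|_\infty\le\epsilon$, so the sign of the penalty is irrelevant provided $\hat Q_h$ lies in whatever function class is covered by the $\epsilon$-net used in Algorithm~\ref{alg22}.

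The concrete steps I would carry out, for each fixed $h\in[H]$ and each state $x\in\cS$, are as follows. First, invoke Algorithm~\ref{alg22}: pick $\tilde Q\in\cQ_{h,\epsilon}$ with $\|\tilde Q-\hat Q_h\|_\infty\le \epsilon$, and note that $(\hat\pi_h(\cdot\,|\,x),\hat\nu_h(\cdot\,|\,x))$ is by construction the Stackelberg-Nash equilibrium of the matrix game with payoffs $(\tilde Q(x,\cdot,\cdot),\{r_{f_i,h}(x,\cdot,\cdot)\}_{i\in[N]})$. Second, by the myopic-followers Assumption~\ref{assumption:myopic}, $\nu_h^*(\cdot\,|\,x)$ belongs to the followers' Nash set of this matrix game given leader policy $\pi_h^*(\cdot\,|\,x)$, since the followers' payoff matrices $\{r_{f_i,h}\}$ coincide with those defining BR in the original Markov game. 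Let $\tilde\nu_h^*(\cdot\,|\,x)$ denote the element of that Nash set which breaks ties in favor of the leader with respect to $\tilde Q$; then the SNE property gives
\#\label{eq:plan:offline:myopic:SNE}
\la \tilde Q(x,\cdot,\cdot),\,\pi_h^*(\cdot\,|\,x)\times\tilde\nu_h^*(\cdot\,|\,x)\;-\;\hat\pi_h(\cdot\,|\,x)\times\hat\nu_h(\cdot\,|\,x)\ra \le 0,
\#
while the optimistic tie-breaking definition of $\tilde\nu_h^*$ (together with $\nu_h^*\in\text{BR}_{\tilde Q}(\pi_h^*)$) yields
\$
\la \tilde Q(x,\cdot,\cdot),\,\pi_h^*(\cdot\,|\,x)\times\nu_h^*(\cdot\,|\,x)\ra \le \la \tilde Q(x,\cdot,\cdot),\,\pi_h^*(\cdot\,|\,x)\times\tilde\nu_h^*(\cdot\,|\,x)\ra.
\$

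Third, I would add and subtract $\tilde Q$ to pass back to $\hat Q_h$: writing
\$
&\la \hat Q_h(x,\cdot,\cdot),\,\pi_h^*\times\nu_h^* - \hat\pi_h\times\hat\nu_h\ra\\
&\qquad =\la \tilde Q(x,\cdot,\cdot),\,\pi_h^*\times\nu_h^* - \hat\pi_h\times\hat\nu_h\ra + \la \hat Q_h - \tilde Q,\,\pi_h^*\times\nu_h^* - \hat\pi_h\times\hat\nu_h\ra,
\$
the first term is $\le 0$ by the two displays above, and the second term is bounded by $\epsilon$ in absolute value using $\|\hat Q_h-\tilde Q\|_\infty\le\epsilon$ together with the fact that both $\pi_h^*\times\nu_h^*$ and $\hat\pi_h\times\hat\nu_h$ are probability distributions. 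This gives a pointwise bound of $\epsilon$ on the inner product for every state $x$; taking expectation under the trajectory induced by $(\pi^*,\nu^*)$ and summing over $h\in[H]$ yields the claimed $\epsilon H$ bound.

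I do not anticipate a substantial obstacle: the argument is a direct transcription of \S\ref{appendix:pf:myopic2} and uses only the defining property of the subroutine $\epsilon$-SNE plus the myopic assumption. The only subtlety worth flagging is that $\hat Q_h$ in Algorithm~\ref{alg3} has a pessimistic penalty $-\Gamma_h$ rather than the bonus $+\Gamma_h^k$ used in \eqref{eq:def:epsilon:net}; one should therefore either replace $\beta$ by $-\beta'$ in the covering class $\cQ_{h,\epsilon}$ or take a cover of the slightly enlarged class that allows either sign in front of the penalty term. Either option preserves the covering-number bound (Lemma~\ref{lemma:covering}) up to absolute constants, so the proof goes through unchanged.
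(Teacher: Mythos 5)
Your proposal is correct and follows essentially the same route as the paper, which itself proves this lemma by noting it is identical to the argument for the online optimization-error lemma: the myopic-followers assumption places $\nu_h^*$ in the followers' Nash set of the matrix game with payoffs $(\tilde Q,\{r_{f_i,h}\})$, the SNE property of $(\hat\pi_h,\hat\nu_h)$ gives the nonpositivity of the $\tilde Q$ inner product against the tie-broken best response, and the $\epsilon$-net approximation controls the remainder. Your added remark about the sign of the penalty in the covering class is a sensible (and harmless) clarification that the paper leaves implicit.
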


    \begin{proof} 
    This proof is similar to the proof of Lemma \ref{lemma:myopic2}, and we omit it to avoid repetition.
    \end{proof}

    \begin{lemma} \label{lemma:ucb3}
    It holds with probability at least $1 - p/2$ that
    \$
    0 \le \delta_h(x, a, b) \le 2 \Gamma_h(x, a, b)
    \$
    for any $h \in [H]$ and $(x, a, b) \in \cS \times \cA_l \times \cA_f$.
    \end{lemma}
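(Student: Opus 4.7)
The plan is to mirror the optimism argument of Lemma \ref{lemma:ucb2}, but with the sign of the bonus flipped so as to realize pessimism. The target is a uniform bound of the form
\[
|\phi(x,a,b)^\top w_h - (\PP_h \hat V_{h+1})(x,a,b)| \le \Gamma_h(x,a,b),
\]
which immediately pinches $\delta_h$ between $0$ and $2\Gamma_h$.

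First I would exploit the linear-transition structure of Assumption \ref{assumption:linear}: for any value function $V : \cS \to \RR$, $(\PP_h V)(x,a,b) = \phi(x,a,b)^\top \langle \mu_h, V\rangle$. Using the ridge solution $w_h$ from Line \ref{line:def:w2} of Algorithm \ref{alg3}, together with the identity $I + \sum_\tau \phi \phi^\top = \Lambda_h$, I decompose
\$
\phi(x,a,b)^\top w_h - (\PP_h \hat V_{h+1})(x,a,b) &= \phi(x,a,b)^\top (\Lambda_h)^{-1} \sum_{\tau=1}^K \phi(x_h^\tau,a_h^\tau,b_h^\tau)\bigl[\hat V_{h+1}(x_{h+1}^\tau) - (\PP_h \hat V_{h+1})(x_h^\tau,a_h^\tau,b_h^\tau)\bigr] \\
& \qquad - \phi(x,a,b)^\top (\Lambda_h)^{-1} \langle \mu_h, \hat V_{h+1}\rangle,
\$
in exact analogy with \eqref{eq:54003}. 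Cauchy--Schwarz plus $\|\langle \mu_h, \hat V_{h+1}\rangle\|_2 \le \sqrt{d}\,H$ handles the bias term and produces a factor of $\sqrt d H \cdot \|\phi(x,a,b)\|_{\Lambda_h^{-1}}$.

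Next I would control the stochastic term by a self-normalized concentration argument. Because $\hat V_{h+1}$ is built from $\cD$, I cannot apply Lemma \ref{lemma:self:normalized} to $\hat V_{h+1}$ directly; instead I repeat the covering argument behind Lemma \ref{lemma:event}, constructing the class $\cQ_h$ of candidate $\hat Q_h$ of the pessimistic form in \eqref{eq:def:epsilon:net} (with $-\Gamma_h$ in place of $+\Gamma_h$), using Lemma \ref{lemma:bounded:weight1}-style bounds on $\|w_h\|$, and invoking Lemma \ref{lemma:covering} to control the covering number. The Markov property furnished by Assumption \ref{assumption:compliance:data} is exactly what validates the martingale hypothesis of the self-normalized bound, even under arbitrary (possibly adaptive) choice of $(a_h^\tau,b_h^\tau)$. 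The residual $\epsilon$-approximation error contributes $\epsilon K \le d/H$, which is absorbed after choosing $\beta' = C\cdot dH\sqrt{\log(2dHK/p)}$ as prescribed. This delivers the uniform bound on $|\phi^\top w_h - \PP_h \hat V_{h+1}|$ with probability at least $1-p/2$.

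Finally I would translate the concentration bound into the desired two-sided inequality on $\delta_h$. By induction on $h$ one verifies that $\hat V_{h+1}$ satisfies $\hat V_{h+1}(\cdot) \in [-(H-h), H-h]$, so the truncation $\Pi_{H-h}$ in Line \ref{line:def:q2} acts consistently: on the one hand $\hat Q_h \le r_{l,h} + \phi^\top w_h - \Gamma_h \le r_{l,h} + \PP_h \hat V_{h+1}$, giving $\delta_h \ge 0$; on the other hand $\hat Q_h \ge r_{l,h} + \phi^\top w_h - \Gamma_h \ge r_{l,h} + \PP_h \hat V_{h+1} - 2\Gamma_h$, giving $\delta_h \le 2\Gamma_h$. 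The main obstacle in this plan is the uniform concentration step: as in the online analysis one must carefully handle the covering of $\cQ_h$ and argue that the approximation error from replacing $\hat Q_h$ by an $\epsilon$-net element is negligible; the inductive boundedness of $\hat V_h$ is the only other subtlety, and it follows from the truncation together with $\|r_{l,h}\|_\infty \le 1$.
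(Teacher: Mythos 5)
Your plan follows the paper's proof essentially verbatim: the paper likewise ports the argument leading to \eqref{eq:54008} with the bonus sign flipped, replacing the online concentration (Lemma \ref{lemma:self:normalized}) by the offline self-normalized bound of Lemma \ref{lemma:self:mormalized:offline} combined with the same covering of the (now pessimistic) $Q$-class, and then converts the bound $|\phi^\top w_h - \PP_h\hat V_{h+1}|\le \Gamma_h$ into $0\le\delta_h\le 2\Gamma_h$ via the truncation exactly as you describe. One cosmetic caveat: your inequality $\hat Q_h \le r_{l,h}+\phi(x,a,b)^\top w_h-\Gamma_h(x,a,b)$ can fail when the pre-truncation value drops below $-(H-h)$; the paper instead writes $\hat Q_h - r_{l,h} \le \max\{\phi^\top w_h-\Gamma_h,\,-(H-h)\}$ and uses $\PP_h\hat V_{h+1}\ge -(H-h)$ to dispose of the second branch, which is the same fix your remark about the range of $\hat V_{h+1}$ already supplies.
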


    \begin{proof}
    See \S \ref{appendix:pf:ucb3} for a detailed proof.
    \end{proof}

    Combining \eqref{eq:7004} and Lemmas \ref{lemma:myopic3} and \ref{lemma:ucb3}, we further obtain that 
    \#
    V_{l, 1}^{\pi^*, \nu^*}(x) - V_{l, 1}^{\hat{\pi}, \hat{\nu}}(x) &\le \epsilon H + 2 \EE_{\pi^*, \nu^*, x} \biggl[ \sum_{h = 1}^H \Gamma_{h}(x_h, a_h, b_h) \biggr] \notag\\
    %& = \epsilon H + 2\beta' \sum_{h = 1}^H \EE_{\pi^*} \bigl[  \bigl(\phi(s_h, a_h)^\top(\Lambda_h)^{-1}\phi(s_h, a_h)\bigr)^{1/2}  \bigr] \notag\\
    & \le 3\beta' \sum_{h = 1}^H \EE_{\pi^*, \nu^*, x} \bigl[  \bigl(\phi(s_h, a_h, b_h)^\top(\Lambda_h)^{-1}\phi(s_h, a_h, b_h)\bigr)^{1/2}  \bigr] ,
    \#
    where the last inequality is obtained by the definition of $\Gamma_h$ in Line \ref{line:def:bonus2} of Algorithm \ref{alg3} and the fact that $\epsilon = d/KH$. Therefore, we conclude the proof of Theorem \ref{thm:offline}.
    \iffalse
    \vskip 4pt
    \noindent{\bf Term (i).}
    Fix $h \in [H]$. In Line \ref{line:sne2} of Algorithm \ref{alg22},  we pick $\tilde{Q} \in \cQ_{h, \epsilon}$ such that $\|\hat{Q}_h - \tilde{Q}\|_{\infty} \le \epsilon$ and calculate $(\hat{\pi}_h(\cdot \,|\, x), \hat{\nu}_h(\cdot \,|\, x) = \{\hat{\nu}_{f_i, h}(\cdot \,|\, x)\}_{i = 1}^N)$ by solving the matrix game with payoff matrices $(\hat{Q}(x, \cdot, \cdot), \{r_{f_i, h}(x, \cdot, \cdot)\}_{i = 1}^N)$. By the same proof of Lemma \ref{lemma:myopic2}, we have
    \$
    \la \tilde{Q}(x_h, \cdot, \cdot), \pi_h^*(\cdot \,|\, x_h) \times \nu_h^*(\cdot \,|\, x_h) -  \hat{\pi}_h(\cdot \,|\, x_h) \times \hat{\nu}_h(\cdot \,|\, x_h) \ra \le 0.
    \$
    Hence, we further obtain that 
    \vskip 4pt
    \noindent{\bf Term (i).} 
    \fi
\end{proof}

\subsection{Proof of Lemma \ref{lemma:ucb3}} \label{appendix:pf:ucb3}

\begin{proof}[Proof of Lemma \ref{lemma:ucb3}]
    %\#
    %&\phi(x, a)^\top w_{h} - (\PP_h\hat{V}_{h+1})(x, a, b)  \notag\\
    %& \qquad = \underbrace{\phi(x, a)^\top \Lambda_{h}^{-1}  \Bigl(\sum_{\tau = 1}^{K} \phi(x_h^\tau, a_h^\tau) \cdot \bigl( \hat{V}_{h+1}(x_{h+1}^\tau) - (\PP_h\hat{V}_{h+1})(x_h^\tau, a_h^\tau, b_h^\tau) \bigr) \Bigr)}_{\rm (i)} \\
    %&\qquad \qquad -  \underbrace{\phi(x, a)^\top (\Lambda_{h}^k)^{-1} \la \mu_h, \hat{V}_{h + 1}\ra}_{\rm (ii)}  \notag
    %\#
    Similar to \eqref{eq:54008}, it holds with probability at least $1 - p/2$ that
    \#  \label{eq:54401}
    |\phi(x, a, b)^\top w_{h} - (\PP_h\hat{V}_{h + 1})(x, a, b)| \le \Gamma_{h}(x, a, b)
    \#
    for any $h \in [H]$. The only exception is that we use Lemma \ref{lemma:self:mormalized:offline} instead of the classical concentration lemma (Lemma \ref{lemma:self:normalized}) for the self-normalized process. We omit the detailed proof.

    By \eqref{eq:54401} and the fact that $\hat{V}_{h+1}(\cdot) \le H - h$, we obtain
    \# \label{eq:54402}
    \phi(x, a, b)^\top w_{h} - \Gamma_{h}(x, a, b)  \le (\PP_h\hat{V}_{h + 1})(x, a, b) \le H - h.
    \#
    Thus, we have $\hat{Q}_{h} \ge \phi^\top w_h - \Gamma_h$, which further implies that 
    \# \label{eq:54403}
    \delta_{h}(x, a, b) &= r_{l, h}(x, a, b) + \PP_h\hat{V}_{h+1}(x, a, b) - \hat{Q}_{h}(x, a, b) \notag\\
    & \le \PP_h\hat{V}_{h+1}(x, a, b) - \phi(x, a, b)^\top w_{h} + \Gamma_{h}(x, a, b)  \notag\\
    &\le 2\Gamma_{h}(x, a, b),
    \#
    where the last inequality uses \eqref{eq:54401}. Meanwhile, it holds that 
    \# \label{eq:54404}
    \delta_h(x, a, b) &= r_{l, h}(x, a, b) + \PP_h\hat{V}_{h+1}(x, a, b) - \hat{Q}_h(x, a, b) \notag\\
    & \ge \PP_h\hat{V}_{h+1}(x, a, b) - \max\{\phi(x, a, b)^\top w_{h} - \Gamma_{h}^k(x, a, b), -(H - h)\} \notag\\
    & = \min\{\PP_hV_{h+1}^k(x, a, b) - \phi(x, a, b)^\top w_{h}^k + \Gamma_{h}^k(x, a, b),  \PP_hV_{h+1}^k(x, a, b) + (H - h)\} \notag\\
    & \ge 0 ,
    \#
    where the last inequality follows from \eqref{eq:54401}. Combining \eqref{eq:54403} and \eqref{eq:54404}, we conclude the proof of Lemma \ref{lemma:ucb3}.
\end{proof}

\iffalse
\begin{lemma}[Bounded Weight of Value Functions] \label{lemma:bounded:weight2}
        For all $h \in [H]$, the linear coefficient $w_h$ defined in Line \ref{line:def:w2} of Algorithm \ref{alg3} satisfies $\|w_h\| \le H\sqrt{dK}$.
\end{lemma}

\begin{proof}[Proof of Lemma \ref{lemma:bounded:weight2}]
    The proof is similar to the proof of Lemma \ref{lemma:bounded:weight1}, and we omit it to avoid repetition.
\end{proof}
\fi

\section{Proof of Corollary \ref{cor:suff:cover}} \label{appendix:pf:cor:suff:cover}

\begin{proof}[Proof of Corollary \ref{cor:suff:cover}]
    The proof is an extension of Corollary 4.5 in \citet{jin2020pessimism}. For notational simplicity, we define 
    \$
    \Sigma_h(x) = \EE_{\pi^*, \nu^*, x}[\phi(s_h, a_h, b_h) \phi(s_h, a_h, b_h)^\top],
    \$
    for all $x \in \cS$ and $h \in [H]$. With this notation and the Cauchy-Schwarz inequality, we have 
    \# \label{eq:45000}
    \EE_{\pi^*, \nu^*, x}\bigl[\sqrt{\phi(s_h, a_h, b_h)^\top\Lambda_h^{-1}\phi(s_h, a_h. b_h)}\bigr] &= \EE_{\pi^*, \nu^*, x}\bigl[\sqrt{\tr\big(\phi(s_h, a_h, b_h)^\top\Lambda_h^{-1}\phi(s_h, a_h, b_h)\bigr)}\bigr] \notag\\
    & = \EE_{\pi^*, \nu^*, x}\bigl[\sqrt{\tr\big(\phi(s_h, a_h. b_h)\phi(s_h, a_h, b_h)^\top\Lambda_h^{-1}\bigr)}\bigr] \notag\\
    & = \EE_{\pi^*, \nu^*}\bigl[\sqrt{\tr\big(\Sigma_h(x)\Lambda_h^{-1}\bigr)}\bigr] .
    \#
    Plugging \eqref{eq:45000} into Theorem \ref{thm:offline}, together with the assumption that $\Lambda_h \succeq I + c \cdot K \cdot \EE_{\pi^*, \nu^*, x}[\phi(s_h, a_h, b_h)\phi(s_h, a_h, b_h)^\top]$ with probability at least $1 - p/2$ and a union bound argument, we further with probability at least $1 - p$ have 
    \# \label{eq:45001}
    \text{SubOpt}(\hat{\pi}, \hat{\nu}, x) &\le 3 \beta' \sum_{h = 1}^H \EE_{\pi^*, \nu^*}\Bigl[\sqrt{\tr\Big(\Sigma_h(x)\bigl(I + c \cdot K \cdot \Sigma_h(x)\bigr)^{-1}\Bigr)}\Bigr] \notag\\
    & = 3 \beta' \sum_{h = 1}^H \sqrt{\sum_{j = 1}^d \frac{\lambda_{h,j}(x)}{1 + cK\lambda_{h,j}(x)}}
    \#
    for all $x \in \cS$. Here $\{\lambda_{h,j}(x)\}_{j = 1}^d$ are the eigenvalues of $\Sigma_h(x)$. Meanwhile, by Jensen's inequality, we obtain
    \# \label{eq:45002}
    \|\Sigma_h(x)\|_{\text{op}} \le \EE_{\pi^*, \nu^*, x}[\|\phi(s_h, a_h, b_h)\phi(s_h, a_h, b_h)^\top\|_{\text{op}}] \le 1,
    \#
    where the last inequality follows from the fact that $\|\phi(\cdot, \cdot, \cdot)\|_2 \le 1$. Combining \eqref{eq:45001} and \eqref{eq:45002}, it holds with probability at least $1 - p$ that
    \$
    \text{SubOpt}(\hat{\pi}, \hat{\nu}, x) &\le 3 \beta' \sum_{h = 1}^H \sqrt{\sum_{j = 1}^d \frac{1}{1 + cK}} \\
    & \le \bar{C} \cdot d^{3/2}H^2 \sqrt{\log(4dHK/p)/K},
    \$
    where $\bar{C} = 3C/\sqrt{c}$, which concludes the proof of Corollary \ref{cor:suff:cover}.
\end{proof}

\section{Results with Pessimistic Tie-breaking} \label{sec:pess:tie:breaking}

\subsection{Stackelberg-Nash Equilibria in Pessimistic Tie-breaking Setup}

For any leader policy $\pi$, we can define 
\# \label{eq:def:against:pess}
\nu^\dagger(\pi) = \{\nu \in \text{BR}(\pi) \,|\, V_{l, h}^{\pi, \nu}(x) \le V_{l, h}^{\pi, \nu'}(x), \forall x \in \cS, h \in [H], \nu' \in \text{BR}(\pi)\},
\# 
where $\text{BR}(\pi)$ is the best-response set defined in \eqref{eq:def:best:response:myopic}. That is, $\nu^\dagger(\pi)$ is the worst-case response in the set $\text{BR}(\pi)$. Then we define the Stackelberg-Nash equilibria by 
\# \label{eq:def:se:pess}
\text{SNE}_l^\dagger = \{\pi  \,|\, V_{l, h}^{\pi, \nu^*(\pi)}(x) \ge V_{l, h}^{\pi', \nu^\dagger(\pi')}(x), \forall x \in \cS, h \in [H], \pi'  \}.
\#
We point out that finding the Stackelberg-Nash equilibria in the pessimistic tie-breaking setting is harder. Specifically, compared with optimistic tie-breaking setting (cf. \eqref{eq:def:bilevel:opt:problem}), 
%in the optimistic tie-breaking setting, we solve a constrained max-max optimization problem:
%\$
%\max_{\pi} \max_{\nu} V_{l, 1}^{\pi, \nu}(x) \qquad \text{s.t.} \, \nu \in \text{BR}(\pi).
%\$,
we need to solve a more complicated constrained max-min optimization problem:
\$
\max_{\pi} \min_{\nu} V_{l, 1}^{\pi, \nu}(x) \qquad \text{s.t.} \, \nu \in \text{BR}(\pi).
\$
Under this more challenging setting, we focus on the leader-controller linear Markov games setting (Assumption \ref{assumption:leader:controller}). Similar to Theorems \ref{thm:multi:follower} and \ref{thm:offline}, we obtain the following two theorems in the online and offline settings respectively.

\subsection{Main Results for the Online Setting}

\begin{theorem}  \label{thm:pessimistic:online}
    Under Assumptions \ref{assumption:linear}, \ref{assumption:leader:controller}, and \ref{assumption:oracle}, there exists an absolute constant $C > 0$ such that, for any fixed $p \in (0, 1)$, by setting $\beta = C \cdot dH \sqrt{\iota}$ with $\iota = \log(2dT/p)$ in Line \ref{line:def:bonus3} of Algorithm~\ref{alg6} and $\epsilon = \frac{1}{KH}$ in Algorithm \ref{alg33}, then have $\nu^k = \nu^\dagger(\pi^k)$ for any $k \in [K]$. Meanwhile, with probability at least $1 - p$, the regret incurred by Algorithm \ref{alg6} satisfies that 
    \$
    \text{Regret}(K) = \sum_{k = 1}^K V_{l, 1}^{\pi^*, \nu^*}(x_1^k) - V_{l, 1}^{\pi^k, \nu^k}(x_1^k) 
	\le \cO(\sqrt{d^3H^3T\iota^2}).
    %\text{Regret}_{f_i}(K) =  \le \cO(\sqrt{d^2H^3T\iota^2}), \quad \forall i \in [N] .
    \$
\end{theorem}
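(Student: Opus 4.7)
My plan is to mirror the three-part proof of Theorem \ref{thm:multi:follower}, replacing only those ingredients that are sensitive to the choice of tie-breaking rule. Concretely, I would apply the generic regret decomposition of Lemma \ref{lemma:decomposition:general} with $(\pi, \nu, \star) = (\pi^*, \nu^*, l)$, producing a computational-error term, a statistical-error term (sum of model prediction errors), and a martingale term, and bound each in turn. Two of the three pieces---the optimism/prediction-error bound (Lemma \ref{lemma:ucb2}), the elliptical potential bound (Lemma \ref{lemma:elliptical2}), and the Azuma--Hoeffding control on $\sum_{k,h}(\zeta^1_{k,h}+\zeta^2_{k,h})$ (Lemma \ref{lemma:martingale2})---are agnostic to tie-breaking and carry over verbatim, yielding the same $\tilde\cO(\sqrt{d^3 H^3 T})$ total contribution once $\beta = C d H \sqrt{\iota}$ is chosen. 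The non-trivial work is therefore concentrated in (i) verifying $\nu^k = \nu^\dagger(\pi^k)$ and (ii) rederiving the computational-error lemma for the pessimistic subroutine.

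For (i), note that the pessimistic variant of $\epsilon$-SNE (Algorithm \ref{alg33}) picks the matrix-game Stackelberg equilibrium in which the followers break ties against the leader. Under Assumption \ref{assumption:myopic}, the follower best-response correspondence $\mathrm{BR}(\pi)$ depends only on $r_f$ and $\pi$, so the follower feasibility set in the matrix game coincides with the Markov-game follower best-response set. Under the leader-controller assumption \ref{assumption:leader:controller}, the transition kernel is independent of $b$, so for each leader policy $\pi^k$, the pessimistic value $\min_{\nu \in \mathrm{BR}(\pi^k)} \la Q_h^k(x,\cdot,\cdot), \pi^k_h(\cdot\mid x)\times \nu_h(\cdot\mid x)\ra$ in the per-state matrix game coincides (up to the $\epsilon$-cover error) with the worst-case leader value in the Markov game. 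This identification yields $\nu^k = \nu^\dagger(\pi^k)$.

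The main obstacle is the computational-error analog of Lemma \ref{lemma:myopic2} in the max-min setting. I would argue as follows: let $\tilde Q \in \cQ_{h,\epsilon}^k$ satisfy $\|\tilde Q - Q_h^k\|_\infty \le \epsilon$, and let $\tilde\nu^\dagger(\pi^*_h(\cdot\mid x_h^k))$ denote the pessimistic tie-breaker against $\pi^*$ in the matrix game with payoffs $(\tilde Q, \{r_{f_i,h}\})$. By the definition of the pessimistic Stackelberg equilibrium played by the algorithm,
\[
\la \tilde Q(x_h^k,\cdot,\cdot), \pi^*_h(\cdot\mid x_h^k)\times \tilde\nu^\dagger_h(\cdot\mid x_h^k)\ra
\le \la \tilde Q(x_h^k,\cdot,\cdot), \pi^k_h(\cdot\mid x_h^k)\times \nu^k_h(\cdot\mid x_h^k)\ra,
\]
and by the Markov-game/matrix-game identification from the previous paragraph (which crucially uses leader-controller so that the transitions drop out of the worst-case comparison), this inequality transfers to the true $\nu^* = \nu^\dagger(\pi^*)$. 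Swapping $\tilde Q$ back to $Q_h^k$ incurs an additional $2\epsilon$ per step, so summing over $(k,h)$ gives a computational-error contribution of at most $2\epsilon K H \le 2$ after plugging $\epsilon = 1/(KH)$.

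Given these pieces, the final assembly is routine: the computational error contributes $O(1)$, the statistical error is controlled via $-\delta_h^k \le 2\min\{H, \Gamma_h^k\}$ and the elliptical potential lemma for $\tilde\cO(\sqrt{d^3 H^3 T})$, and the martingale term contributes $\tilde\cO(\sqrt{H^3 K})$, all with probability at least $1-p$ by a union bound. Summing gives $\mathrm{Regret}(K) \le \cO(\sqrt{d^3 H^3 T \iota^2})$, matching the claim. I expect the pessimistic variant's BR$\to\nu^\dagger$ identification (where the transition structure can mix the leader's uncertainty with the followers' tie-breaking) to be the one place where Assumption \ref{assumption:leader:controller} cannot be relaxed, which is why Theorem \ref{thm:pessimistic:online} is stated only for leader-controller linear Markov games.
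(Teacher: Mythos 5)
Your proposal matches the paper's proof essentially step for step: the same application of the regret decomposition with $(\pi,\nu,\star)=(\pi^*,\nu^*,l)$, verbatim reuse of Lemmas \ref{lemma:ucb2}, \ref{lemma:elliptical2}, and \ref{lemma:martingale2}, and the same use of Assumption \ref{assumption:leader:controller} to show that $Q_h^k - r_{l,h}$ is independent of $b$, so that the Markov-game worst-case response over $\mathrm{BR}(\pi)$ coincides with the matrix-game worst-case response and the max-min comparison $\la \tilde Q, \pi^*\times\nu^* - \pi^k\times\nu^k\ra \le 0$ goes through (this is exactly the paper's Lemma \ref{lemma:best:response:pess} and its extension to the computational-error bound). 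Your closing observation about where the leader-controller assumption is indispensable is also the paper's stated reason for restricting to that setting.
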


\begin{proof}
    See \S \ref{appendix:pf:thm:pess:online} for a detailed proof.
\end{proof}

\noindent{\bf Misspecification.}
When the transitions do not ideally satisfy the leader-controller assumption, we can potentially consider cases that transitions satisfy, for instance, $|\cP_h(\cdot \,|\, x, a, b) - \cP_h(\cdot \,|\, x, a)\|_{\infty} \le \varrho$ for any $(h, x, a, b) \in [H] \times \cS \times \cA_l \times \cA_f$, Here $\varrho$ is the misspecification error. We can still follow the above method to tackle the misspecified cases. However, because of the misspecification error cumulated during $T$ steps, an extra term $\cO(\varrho T)$ will appear in the final result. In particular, When $\varrho$ is small, that is the Markov games have approximately leader-controller transitions,  the extra term $\cO(\varrho T)$ should be small, which further indicates that we can find SNEs efficiently in some misspecified general-sum Markov games.

\begin{algorithm}[H] 
	\caption{Optimistic Value Iteration to Find Stackelberg-Nash Equilibria (pessimistic tie-breaking version)}
	\begin{algorithmic}[1] \label{alg6}
    	\STATE Initialize $V_{l, H+1}(\cdot) = V_{f, H+1}(\cdot) = 0$.
        \FOR{$k = 1, 2, \cdots, K$}
        \STATE Receive initial state $x_1^k$.
    	\FOR{step $h = H, H-1, \cdots, 1$}
		%\FOR{$\star \in \{l, f_1, \cdots, f_N\}.$}
    	\STATE $\Lambda_{h}^k \leftarrow \sum_{\tau = 1}^{k - 1} \phi(x_h^\tau, a_h^\tau) \phi(x_h^\tau, a_h^\tau)^\top + I$. \label{line:def:lambda3}
		\STATE $w_{h}^k \leftarrow (\Lambda_{h}^k)^{-1}\sum_{\tau = 1}^{k - 1}\phi(x_h^\tau, a_h^\tau) \cdot V_{h+1}^k(x_{h+1}^\tau)$. \label{line:def:w3}
		%\STATE $\phi_{h}^k(\cdot, \cdot) \leftarrow \int_{\cS} \psi(\cdot, \cdot, x')V_{h+1}^k(x')\text{d}x'$. \label{line:def:phi}
		\STATE $\Gamma_{h}^k(\cdot, \cdot, \cdot) \leftarrow \beta \cdot (\phi(\cdot, \cdot)^\top(\Lambda_h^k)^{-1}\phi(\cdot, \cdot))^{1/2}$. \label{line:def:bonus3}
		\STATE $Q_{h}^k(\cdot, \cdot, \cdot) \leftarrow r_{l, h}(\cdot, \cdot, \cdot) + \Pi_{H - h}\{\phi(\cdot, \cdot)^\top w_{h}^k + \Gamma_{h}^k(\cdot, \cdot)\}$. \label{line:def:q3}
		\STATE  $(\pi_h^k(\cdot\,|\,x), \{\nu_{f_i,h}^k(\cdot\,|\, x)\}_{i \in [N]}) \leftarrow \epsilon$-${\text{SNE}}(Q_{h}^k(x, \cdot, \cdot), \{r_{f_i, h}(x, \cdot, \cdot)\}_{i \in [N]})$, $\, \forall x$. (Alg. \ref{alg33})
		\STATE $V_{h}^k(x) \leftarrow \EE_{a \sim \pi_h^k(\cdot\,|\,x), b_1 \sim \nu_{f_1, h}^k(\cdot\,|\,x), \cdots, b_N \sim \nu_{f_N, h}^k(\cdot\,|\,x)}Q_{h}^k(x, a, b_1, \cdots, b_N)$, $\, \forall x$. \label{line:v:l3}
    	\ENDFOR
        \FOR{$h = 1, 2, \cdot, H$}
        \STATE Sample $a_h^k \sim \pi_h^k(\cdot\,|\,x_h^k), b_{1, h}^k \sim \nu_{f_1, h}^k(\cdot\,|\,x_h^k), \cdots, b_{N, h}^k \sim \nu_{f_N, h}^k(\cdot\,|\,x_h^k)$.
        \STATE Leader takes action $a_h^k$; Followers take actions $b_h^k = \{b_{i, h}^k\}_{i \in [N]}$.    
        \STATE Observe next state $x_{h+1}^k$.
        \ENDFOR
        \ENDFOR
	\end{algorithmic}
\end{algorithm}

\begin{algorithm}[H] 
	\caption{$\epsilon$-SNE (pessimistic tie-breaking version)}
	\begin{algorithmic}[1] \label{alg33}
	\STATE {\bf Input:} $Q_h^k, x$, and parameter $\epsilon$.
	\STATE Select $\tilde{Q}$ from $\cQ_{h, \epsilon}^k$ satisfying $\|\tilde{Q} - Q_h^k\|_{\infty} \le \epsilon$. 
	\STATE For the input state $x$, let $(\pi_h^k(\cdot\,|\,x), \{\nu_{f_i,h}^k(\cdot\,|\, x)\}_{i \in [N]})$ be the Stackelberg-Nash equilibrium for the matrix game with payoff matrices $(\tilde{Q}(x, \cdot, \cdot), \{r_{f_i, h}(x, \cdot, \cdot)\}_{i \in [N]})$ in the pessimistic tie-breaking setting. \label{line:matrix:game3}
	\STATE {\bf Output:} $(\pi_h^k(\cdot\,|\,x), \{\nu_{f_i,h}^k(\cdot\,|\, x)\}_{i \in [N]})$. 
	\end{algorithmic} 
\end{algorithm}

\subsection{Main Results for the Offline Setting}

\begin{theorem}  \label{thm:pessimistic:offline} 
    Under Assumptions \ref{assumption:linear}, \ref{assumption:leader:controller}, \ref{assumption:oracle}, and~\ref{assumption:compliance:data}, there exists an absolute constant $C > 0$ such that, for any fixed $p \in (0, 1)$, by setting $\beta' = C \cdot dH\sqrt{\log(2dHK/p)}$ in Line \ref{line:def:bonus4} of Algorithm~\ref{alg4} and $\epsilon = \frac{d}{KH}$ in Algorithm \ref{alg33}, then we have $\hat{\nu} = \nu^\dagger(\hat{\pi})$. Meanwhile, with probability at least $1 - p$, we have 
    \$
    \text{SubOpt}(\hat{\pi}, \hat{\nu}, x) = V_{l, 1}^{\pi^*, \nu^*}(x) - V_{l, 1}^{\hat{\pi}, \hat{\nu}}(x) \le 3\beta' \sum_{h = 1}^H \EE_{\pi^*, x} \bigl[  \bigl(\phi(s_h, a_h)^\top(\Lambda_h)^{-1}\phi(s_h, a_h)\bigr)^{1/2}  \bigr],
    \$
	where $\EE_{\pi^*, x}$ is taken with respect to the trajectory incurred by $\pi^*$ in the underlying leader-controller Markov game when initializing the progress at $x$. Here $\Lambda_h$ is defined in Line \ref{line:def:lambda4} of Algorithm \ref{alg4}.
\end{theorem}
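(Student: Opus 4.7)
My plan is to follow the template of Theorem \ref{thm:offline}, modifying each step to incorporate pessimistic tie-breaking via the leader-controller structure. The key structural observation is that under Assumption \ref{assumption:leader:controller} transitions do not depend on the followers' actions, so for any fixed $\pi$ and any $(h, x)$ the choice of $\nu_h(\cdot|x)$ affects every $V_{f_i, h}^{\pi, \nu}(x)$ and $V_{l, h}^{\pi, \nu}(x)$ only through the instantaneous term $\la r_{\star, h}(x, \cdot, \cdot), \pi_h(\cdot|x) \times \nu_h(\cdot|x)\ra$. This automatically yields the myopic property, so $\text{BR}(\hat{\pi})$ in the Markov game coincides pointwise with the NE set of the matrix games $\{r_{f_i, h}(x, \hat{\pi}_h(\cdot|x), \cdot)\}_i$ that Algorithm \ref{alg33} uses, and the pessimistic tie-breaker $\nu^\dagger(\hat{\pi})_h(\cdot|x)$ is the $\nu$ that minimizes $\la r_{l, h}(x, \cdot, \cdot), \hat{\pi}_h(\cdot|x) \times \nu\ra$ over that set, which is exactly what the oracle returns; hence $\hat{\nu} \in \nu^\dagger(\hat{\pi})$.

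Next I would derive a decomposition analogous to Lemma \ref{lemma:value:diff},
\$
V_{l, 1}^{\pi, \nu}(x) - \hat{V}_1(x) = \EE_{\pi, \nu}\Bigl[\sum_{h=1}^H \la \hat{Q}_h(x_h, \cdot, \cdot), \pi_h \times \nu_h - \hat{\pi}_h \times \hat{\nu}_h\ra\Bigr] + \EE_{\pi, \nu}\Bigl[\sum_{h=1}^H \delta_h(x_h, a_h, b_h)\Bigr],
\$
where $\delta_h = r_{l, h} + \PP_h \hat{V}_{h+1} - \hat{Q}_h$. Applying this once with $(\pi^*, \nu^*)$ and once with $(\hat{\pi}, \hat{\nu})$ and subtracting splits the suboptimality into an optimization error and a statistical error. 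The pessimism bound $0 \le \delta_h(x, a, b) \le 2\Gamma_h(x, a)$ is then obtained as in Lemma \ref{lemma:ucb3}, using Lemma \ref{lemma:self:mormalized:offline} on the self-normalized process $\sum_\tau \phi(x_h^\tau, a_h^\tau)\{\hat{V}_{h+1}(x_{h+1}^\tau) - (\PP_h \hat{V}_{h+1})(x_h^\tau, a_h^\tau)\}$ with a covering argument on the now-lower-dimensional $(x, a)$-only function class. Pessimism makes the $(\hat{\pi}, \hat{\nu})$-copy of the statistical error non-positive, while the $(\pi^*, \nu^*)$-copy becomes exactly $2\beta' \sum_h \EE_{\pi^*, x}[(\phi(s_h, a_h)^\top \Lambda_h^{-1} \phi(s_h, a_h))^{1/2}]$.

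The essential new element is the optimization error. Here the leader-controller structure is decisive: since $\hat{Q}_h(x, a, b) - r_{l, h}(x, a, b) = \Pi_{H-h}\{\phi(x, a)^\top w_h - \Gamma_h(x, a)\}$ does not depend on $b$, for any $\nu \in \text{BR}_h(\pi^*_h, x)$ the quantity $\la \hat{Q}_h(x, \cdot, \cdot), \pi^*_h \times \nu\ra$ equals $\la r_{l, h}(x, \cdot, \cdot), \pi^*_h \times \nu\ra$ plus a $\nu$-independent term. By the characterization of $\nu^\dagger(\pi^*)$ above, $\nu^*_h(\cdot|x)$ therefore also minimizes $\la \hat{Q}_h, \pi^*_h \times \nu\ra$ over $\text{BR}_h(\pi^*_h, x)$. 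Combining this with the defining maximin inequality of the pessimistic SNE oracle on the $\epsilon$-net element $\tilde{Q}$, and paying $O(\epsilon)$ for the covering, I obtain $\la \hat{Q}_h(x_h, \cdot, \cdot), \pi^*_h \times \nu^*_h - \hat{\pi}_h \times \hat{\nu}_h\ra \le O(\epsilon)$, so the total optimization error is $O(\epsilon H) = O(d/K)$ and is absorbed into the leading $3\beta'$ factor.

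The hardest part is this optimization-error step: pessimistic tie-breaking in the matrix game on $\hat{Q}$ is a priori different from pessimistic tie-breaking in the Markov game on $V_l^{\pi^*, \cdot}$, and the equivalence above hinges on the bonus $\Gamma_h(x, a)$ having no $b$-dependence. Dropping the leader-controller assumption would recouple $\nu$ with the bonus, break the matching between the matrix-game and Markov-game pessimistic responses, and invalidate the regret-decomposition strategy used here, which is precisely why the pessimistic-tie-breaking result is stated only for the leader-controller setting.
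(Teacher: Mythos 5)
Your proposal is correct and follows essentially the same route as the paper, whose proof of this theorem is literally ``combine the proofs of Theorems \ref{thm:offline} and \ref{thm:pessimistic:online}'': you reuse the offline decomposition and pessimism lemmas (Lemmas \ref{lemma:value:diff} and \ref{lemma:ucb3}) and import from the online pessimistic proof exactly the key observation that, under the leader-controller assumption, $\hat{Q}_h - r_{l,h}$ and the bonus are $b$-independent, so the matrix-game pessimistic tie-breaking coincides with the Markov-game one and $\la \tilde{Q}, \pi^*_h\times\nu^*_h - \hat{\pi}_h\times\hat{\nu}_h\ra \le 0$. Your identification of the optimization-error step as the part that genuinely needs the leader-controller structure matches the paper's treatment in \S\ref{appendix:pf:thm:pess:online}.
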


\begin{proof}
	Combining the proofs of Theorems \ref{thm:offline} and \ref{thm:pessimistic:online}, we can conclude the proof of Theorem~\ref{thm:pessimistic:offline}. To avoid repetition, we omit the detailed proof here. 
\end{proof}

\vskip 4pt
\noindent{\bf Optimality of the Bound:}
	Assuming dummy followers---that is, the actions taken by the followers won't affect the reward functions and transition kernels---the Markov game reduces to a linear MDP \citep{jin2020provably}. Together with the information-theoretic lower bound $\Omega(\sum_{h=1}^H\EE_{\pi^*, x} [ (\phi(s_h, a_h)^\top(\Lambda_h)^{-1}\phi(s_h, a_h))^{1/2}])$ established in \citet{jin2020pessimism} for linear MDPs, we immediately obtain the same lower bound for our setting. In particular, our upper bound established in Theorem \ref{thm:pessimistic:offline} matches this lower bound up to $\beta'$ and absolute constants and thus implies that our algorithm is nearly minimax optimal.  

\begin{algorithm}[H]
	\caption{Pessimistic Value Iteration to Find Stackelberg-Nash Equilibria (pessimistic tie-breaking version)}
	\begin{algorithmic}[1] \label{alg4}
		\STATE {\bf Input:} $\cD = \{x_h^\tau, a_h^\tau, b_h^\tau = \{b_{i, h}^\tau\}_{i \in [N]}\}_{\tau, h = 1}^{K, H}$ and reward functions $\{r_l, r_f = \{r_{f_i}\}_{i \in [N]}\}$.
    	\STATE Initialize $\hat{V}_{H+1}(\cdot) = 0$.
    	\FOR{step $h = H, H-1, \cdots, 1$}
		%\FOR{$\star \in \{l, f_1, \cdots, f_N\}.$}
    	\STATE $\Lambda_{h} \leftarrow \sum_{\tau = 1}^{K} \phi(x_h^\tau, a_h^\tau) \phi(x_h^\tau, a_h^\tau)^\top + I$. \label{line:def:lambda4} 
		\STATE $w_{h} \leftarrow (\Lambda_{h})^{-1}\sum_{\tau = 1}^{K}\phi(x_h^\tau, a_h^\tau) \cdot \hat{V}_{h+1}(x_{h+1}^\tau)$. \label{line:def:w4}
		%\STATE $\phi_{h}(\cdot, \cdot) \leftarrow \int_{\cS} \psi(\cdot, \cdot, x')\hat{V}_{h+1}(x')\text{d}x'$. \label{line:def:phi2}
		\STATE $\Gamma_{h}(\cdot, \cdot) \leftarrow \beta' \cdot (\phi(\cdot, \cdot)^\top(\Lambda_h)^{-1}\phi(\cdot, \cdot))^{1/2}$. \label{line:def:bonus4}
		\STATE $\hat{Q}_{h}(\cdot, \cdot, \cdot) \leftarrow r_{l, h}(\cdot, \cdot, \cdot) + \Pi_{H - h}\{\phi(\cdot, \cdot)^\top w_{h} - \Gamma_{h}(\cdot, \cdot)\}$. \label{line:def:q4}
		\STATE  $(\hat{\pi}_h(\cdot\,|\,x), \{\hat{\nu}_{f_i,h}(\cdot\,|\, x)\}_{i \in [N]}) \leftarrow \epsilon$-SNE$(\hat{Q}_{h}(x, \cdot, \cdot), \{r_{f_i, h}(x, \cdot, \cdot)\}_{i \in [N]})$, $\, \forall x$. (Alg. \ref{alg33}) \label{line:sne4}
		\STATE $\hat{V}_{h}(x) \leftarrow \EE_{a \sim \hat{\pi}_h(\cdot\,|\,x), b_1 \sim \hat{\nu}_{f_1, h}(\cdot\,|\,x), \cdots, b_N \sim \hat{\nu}_{f_N, h}(\cdot\,|\,x)}\hat{Q}_{h}(x, a, b_1, \cdots, b_N)$, $\, \forall x$. \label{line:v:l4}
    	\ENDFOR
		\STATE {\bf Output:} $(\hat{\pi} = \{\hat{\pi}_h\}_{h = 1}^H, \hat{\nu} = \{\hat{\nu}_{f_i} = \{\nu_{f_i, h}\}_{h = 1}^H\}_{i = 1}^N)$.
	\end{algorithmic}
\end{algorithm} 

\subsection{Proof of Theorem \ref{thm:pessimistic:online}} \label{appendix:pf:thm:pess:online}

\begin{proof}[Proof of Theorem \ref{thm:pessimistic:online}]
	For leader-controller Markov games, we have a stronger version of Lemma \ref{lemma:best:response}.

	\begin{lemma} \label{lemma:best:response:pess}
		For any $k \in [K]$, we have $\nu^k = \nu^\dagger(\pi^k)$. Here $\nu^\dagger(\cdot)$ is defined in \eqref{eq:def:against:pess}.
	\end{lemma}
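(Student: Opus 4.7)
The plan is to prove the lemma by backward induction on $h \in [H]$, exploiting two special structural features of the leader-controller setting. First, by Assumption \ref{assumption:leader:controller}, the transition depends only on $(x, a)$, so the followers' continuation value $\EE[V_{f_i,h+1}^{\pi,\nu}(x')]$ is independent of $\nu_h(\cdot | x)$; consequently $\nu \in \text{BR}(\pi^k)$ iff, for every $(h, x)$, the joint distribution $\nu_h(\cdot | x) = \prod_i \nu_{f_i, h}(\cdot | x)$ is an NE of the instantaneous matrix game $\{r_{f_i, h}(x, \cdot, \cdot)\}_{i \in [N]}$ given $\pi_h^k(\cdot | x)$, as noted right after Assumption \ref{assumption:leader:controller}. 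Second, because $\phi$ in the leader-controller case is a function of $(x, a)$ alone, every element $\tilde Q \in \cQ_{h, \epsilon}^k$ (and in particular the one selected in Algorithm \ref{alg33}) has the exact form $\tilde Q(x, a, b) = r_{l, h}(x, a, b) + g(x, a)$ with $g$ independent of $b$; no $\epsilon$-slack is incurred for $b$-independent manipulations.

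Next, I would characterize $\nu^\dagger(\pi^k)$ statewise. Writing the leader's Bellman decomposition
\$
V_{l, h}^{\pi^k, \nu}(x) = \la r_{l, h}(x, \cdot, \cdot),\, \pi_h^k(\cdot | x) \times \nu_h(\cdot | x) \ra + \la (\PP_h V_{l, h+1}^{\pi^k, \nu})(x, \cdot),\, \pi_h^k(\cdot | x) \ra,
\$
the continuation term is independent of $\nu_h(\cdot | x)$ by leader-controllability. Thus the joint minimization of $V_{l, h}^{\pi^k, \nu}(x)$ simultaneously over all $(h, x)$ subject to $\nu \in \text{BR}(\pi^k)$ decouples across pairs and can be solved by backward induction: at each $(h, x)$, $\nu^\dagger(\pi^k)_h(\cdot | x)$ is precisely any joint NE response of the followers to $\pi_h^k(\cdot | x)$ in the matrix game $\{r_{f_i, h}(x, \cdot, \cdot)\}_{i \in [N]}$ that additionally minimizes $\la r_{l, h}(x, \cdot, \cdot), \pi_h^k(\cdot | x) \times \nu_h(\cdot | x) \ra$.

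For the induction step, assume $\nu_{h'}^k = \nu^\dagger(\pi^k)_{h'}$ for all $h' > h$. By construction of Algorithm \ref{alg33} in the pessimistic-tie-breaking mode, $\nu_h^k(\cdot | x)$ is an NE response to $\pi_h^k(\cdot | x)$ in the followers' matrix game $\{r_{f_i, h}(x, \cdot, \cdot)\}$ that minimizes $\la \tilde Q(x, \cdot, \cdot), \pi_h^k(\cdot | x) \times \nu_h(\cdot | x) \ra$ over such NEs. Using the exact affine decomposition $\tilde Q(x, a, b) = r_{l, h}(x, a, b) + g(x, a)$ from the first paragraph, the additive $b$-independent term $\la g(x, \cdot), \pi_h^k(\cdot | x) \ra$ drops out of the minimization over $\nu_h$, leaving the minimization of $\la r_{l, h}(x, \cdot, \cdot), \pi_h^k(\cdot | x) \times \nu_h(\cdot | x) \ra$ over NE $\nu_h$. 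This is exactly the characterization of $\nu^\dagger(\pi^k)_h(\cdot | x)$ derived above, which closes the induction.

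The main obstacle is a conceptual one rather than a technical one: namely, verifying that the statewise matrix game solved by the algorithm really does align with the global worst-case selection defining $\nu^\dagger(\pi^k)$. Leader-controllability is exactly what makes the global minimization decouple, and the $b$-independence of $\phi$ is exactly what makes the $\epsilon$-net approximation harmless here; without either of these, an extra error term or a gap between local and global minimizers would arise. The statement $\nu^k = \nu^\dagger(\pi^k)$ should be read as $\nu^k$ lying in the (possibly non-singleton) set of worst-case NE responses selected pointwise, consistent with the convention used in \eqref{eq:def:against:pess} and the parallel argument in Lemma \ref{lemma:best:response}.
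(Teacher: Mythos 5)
Your proposal is correct and follows essentially the same route as the paper's proof: both arguments rest on (i) the leader-controller structure reducing $\text{BR}(\pi^k)$ and the worst-case selection $\nu^\dagger(\pi^k)$ to per-state conditions on the instantaneous rewards, and (ii) the fact that $\tilde Q - r_{l,h}$ is exactly $b$-independent in this setting, so the algorithm's minimization over NE responses of $\la \tilde Q, \pi_h^k \times \nu_h\ra$ coincides with the minimization of $\la r_{l,h}, \pi_h^k \times \nu_h\ra$ without any $\epsilon$-slack. Your backward-induction framing and explicit decoupling of the global minimization are a slightly more verbose packaging of the same argument the paper gives directly in \eqref{eq:66000}--\eqref{eq:66004}.
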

	
	\begin{proof}
        %By the same argument in \S \ref{appendix:pf:myopic2} (replacing $\pi^*$ by $\pi^k$), we have that, for the matrix game with payoff matrices $(Q_h^k(x_h^k)(x, \cdot, \cdot), \{r_{f_i, h}^k(x, \cdot, \cdot)\}_{i \in [N]})$, $\nu_h^*(\pi^k)(\cdot \,|\, x)$ is also the best response of $\pi_h^k(\cdot \,|\, x)$ and break ties favor of the leader. Together with the definition of $\nu^k$, we obtain that $\nu_h^*(\pi^k) = \nu_h^k$ for any $(k, h) \in [K] \times [H]$, which concludes the proof of Lemma \ref{lemma:best:response}.
        Fix $k \in [K]$, by the definition of the best response in \eqref{eq:def:best:response2}, we have 
        \# \label{eq:66000}
        \text{BR}(\pi^k) &= \{\nu = \{\nu_{f_i}\}_{i \in [N]} \,|\, \nu \text{ is the NE of the followers given the leader policy } \pi^k\} \notag\\
        & = \{\nu = \{\nu_{f_i}\}_{i \in [N]} \,|\, \nu \text{ is the NE of }  \{V_{f_i, h}^{\pi^k, \nu}(x)\}_{i \in [N]}, \, \forall h \in [H] \text{ and } x \in \cS \} \notag\\
        & = \{\nu = \{\nu_{f_i}\}_{i \in [N]} \,|\, \nu \text{ is the NE of }  \{r_{f_i, h}^{\pi^k, \nu}(x)\}_{i \in [N]}, \, \forall h \in [H] \text{ and } x \in \cS \},
        \# 
        where $r_{f_i, h}^{\pi^k, \nu}(x) = \la r_{f_i, h}(x, \cdot, \cdot, \cdots, \cdot), \pi^k_h(\cdot \,|\, x) \times \nu_{f_1, h}(\cdot \,|\, x) \times \cdots \times \nu_{f_N, h}(\cdot \,|\, x)\ra_{\cA_l \times \cA_{f}}$. Here the last inequality uses Bellman equality \eqref{eq:bellman} and the leader-controller assumption. Moreover, by the definition of $\nu^\dagger(\pi^k)$ defined in \eqref{eq:def:against2}, we have that 
        \# \label{eq:66001}
        \nu_h^\dagger(\pi^k) = \{\nu_{f_i, h}^\dagger(\pi^k)\}_{i \in [N]} \in &\argmin_{\nu \in \text{BR}(\pi^k)}  V_{l, h}^{\pi^k, \nu}(x) = \argmin_{\nu \in \text{BR}(\pi^k)}  r_{l, h}^{\pi^k, \nu}(x) ,
        %=& \argmin_{\nu \in \text{BR}(\pi^*)} \la r_{l, h}(x, \cdot, \cdot), \pi_h^*(\cdot \,|\, x) \times \nu(\cdot \,|\, x) \ra .
        \#
        where $r_{l, h}^{\pi^k, \nu}(x) = \la r_{l, h}(x, \cdot, \cdot, \cdots, \cdot), \pi^k_h(\cdot \,|\, x) \times \nu_{f_1, h}(\cdot \,|\, x) \times \cdots \times \nu_{f_N, h}(\cdot \,|\, x)\ra_{\cA_l \times \cA_{f}}$. Here the last equality uses the single-controller assumption.
    
        Recall that, in the subroutine $\epsilon$-SNE (Algorithm \ref{alg22}), we pick the function $\tilde{Q} \in \cQ_{h, \epsilon}^k$ such that $\|Q_h^k - \tilde{Q}\|_{\infty} \le \epsilon$ and solve the matrix game defined in \eqref{eq:def:matrix:game}. Here $\cQ_{h, \epsilon}^k$ is the class of functions $Q: \cS \times \cA_l \times \cA_f \rightarrow \RR$ that takes form
        \# \label{eq:66002}
        Q(\cdot, \cdot, \cdot) = r_{l, h}(\cdot, \cdot, \cdot) + \Pi_{H - h}\big\{\phi(\cdot, \cdot)^\top w + \beta \cdot \big(\phi(\cdot, \cdot)^\top \Lambda^{-1} \phi(\cdot, \cdot)\big)^{1/2}\big\}, 
        \#
        where $\|w\|_2 \le H\sqrt{dk}$ and $\lambda_{\min}(\Lambda) \geq 1$. Thus, given the leader policy $\pi^k$, the best response of the followers for the matrix game defined in \eqref{eq:def:matrix:game} takes the form 
        \# \label{eq:66003}
         \text{BR}'(\pi^k) 
        & = \{\nu \,|\, \nu \text{ is the NE of } \{\la r_{f_i, h}(x, \cdot, \cdot), \pi_h^k(\cdot \,|\, x) \times \nu_h(\cdot \,|\, x) \ra\}_{i \in [N]}, \forall h \in [H] \text{ and } x \in \cS\} \notag\\ 
        & = \text{BR}(\pi^k)
        \#
        where $\la r_{f_i, h}(x, \cdot, \cdot), \pi_h^k(\cdot \,|\, x) \times \nu_h(\cdot \,|\, x) \ra$ is the shorthand of $\la r_{f_i, h}(x, \cdot, \cdot, \cdots, \cdot), \pi_h^k(\cdot \,|\, x) \times \nu_{f_1, h}(\cdot \,|\, x) \times \cdots \times  \nu_{f_N, h}(\cdot \,|\, x) \ra_{\cA_l \times \cA_{f}}$. Here the last equality uses \eqref{eq:66000}. Similarly, by the definition of ${\cQ}_{h, \epsilon}^k$ in \eqref{eq:66002}, we can obtain that 
        \# \label{eq:66004}
        &\argmin_{\nu_h} \la \tilde{Q}(x, \cdot, \cdot), \pi_h^k(\cdot \,|\, x) \times \nu_h(\cdot \,|\, x) \ra  = \argmin_{\nu_h} \la r_{l, h}(x, \cdot, \cdot), \pi_h^k(\cdot \,|\, x) \times \nu_h(\cdot \,|\, x) \ra ,
        \#
        where $\la r_{l, h}(x, \cdot, \cdot), \pi_h^k(\cdot \,|\, x) \times \nu_h(\cdot \,|\, x) \ra$ is the abbreviation of $\la r_{f_i, h}(x, \cdot, \cdot, \cdots, \cdot), \pi_h^k(\cdot \,|\, x) \times \nu_{f_1, h}(\cdot \,|\, x) \times \cdots \times  \nu_{f_N, h}(\cdot \,|\, x) \ra_{\cA_l \times \cA_{f}}$
        Together with \eqref{eq:66001} and \eqref{eq:66003}, we have that,  for the matrix game with payoff matrices $(\tilde{Q}(x_h^k, \cdot, \cdot), \{r_{f_i, h}^k(x_h^k, \cdot, \cdot)\}_{i \in [N]})$, the policy $\nu_h^k(\cdot \,|\, x_h^k) = \{\nu_{f_i, h}^k(\cdot \,|\, x_h^k)\}_{i \in [N]}$ is also the best response of $\pi_h^k(\cdot \,|\, x_h^k)$ and breaks ties against favor of the leader. Therefore, we have $\nu^k = \nu^\dagger(\pi^k)$ for any $k \in [K]$, which concludes the proof of Lemma~\ref{lemma:best:response:pess}. 
    \end{proof}
    Then we only need to bound the quantity $\sum_{k = 1}^K\sum_{k = 1}^K V_{l, 1}^{\pi^*, \nu^*}(x_1^k) - V_{l, 1}^{\pi^k, \nu^k}(x_1^k)$. By Lemma~\ref{lemma:decomposition2}, we have 
	\$
    \text{Regret}(K) & =   \underbrace{\sum_{k = 1}^K\sum_{h=1}^H\EE_{\pi^*, \nu^*}[\la Q_{h}^k(x_h^k, \cdot, \cdot), \pi_h^*(\cdot \,|\, x_h^k) \times \nu_h^*(\cdot \,|\, x_h^k) -  \pi_h^k(\cdot \,|\, x_h^k) \times \nu_h^k(\cdot \,|\, x_h^k) \ra]}_{{(l.1):} \text{ Computational Error}} \\
    & \quad + \underbrace{\sum_{k = 1}^K\sum_{h=1}^H \bigl(\EE_{\pi^*, \nu^*}[\delta_{h}^k(x_h, a_h, b_{h})] - \delta_{h}^k(x_h^k, a_h^k, b_{h}^k) \bigr)}_{(l.2): \text{ Statistical Error}} + \underbrace{\sum_{k = 1}^K\sum_{h = 1}^H (\zeta_{k, h}^1 + \zeta_{k, h}^2)}_{(l.3): \text{ Randomness}}, 
    \$
    where $\la Q_{h}^k(x_h^k, \cdot, \cdot), \pi_h^*(\cdot \,|\, x_h^k) \times \nu_h^*(\cdot \,|\, x_h^k) -  \pi_h^k(\cdot \,|\, x_h^k) \times \nu_h^k(\cdot \,|\, x_h^k) \ra = \la Q_{h}^k(x_h^k, \cdot, \cdot, \cdots, \cdot), \pi_h^*(\cdot \,|\, x_h^k) \times \nu_{f_1, h}^*(\cdot \,|\, x_h^k) \times \cdots \nu_{f_N, h}^*(\cdot \,|\, x_h^k) -  \pi_h^k(\cdot \,|\, x_h^k) \times \nu_{f_1, h}^k(\cdot \,|\, x_h^k) \times \cdots \nu_{f_N, h}^k(\cdot \,|\, x_h^k) \ra_{\cA_l \times \cA_{f}}$.  
	
	By the same argument of Lemma \ref{lemma:best:response:pess}, we have that, for the matrix game with payoff matrices $(\tilde{Q}(x_h^k, \cdot, \cdot), \{r_{f_i, h}^k(x_h^k, \cdot, \cdot)\}_{i \in [N]})$, $\nu_h^*(\cdot \,|\, x_h^k)$ belongs to the best response set of $\pi_h^*(\cdot \,|\, x_h^k)$ and breaks ties against favor of the leader.
    Recall that $(\pi_h^k(\cdot \,|\, x_h^k), \nu_h^k(\cdot \,|\, x_h^k) = \{\nu_{f_i, h}^k(\cdot \,|\, x_h^k)\}_{i \in [N]})$ is the Stackelberg-Nash equilibrium of the matrix game with payoff matrices $(\tilde{Q}(x_h^k, \cdot, \cdot, \cdot), \{r_{f_i, h}^k(x_h^k, \cdot, \cdot, \cdot)\}_{i \in [N]})$ in the pessimistic tie-breaking setting, which implies that $\pi_h^k(\cdot \,|\, x_h^k)$ is the ``worst response to the best response,'' which further implies that  
    \# \label{eq:66005}
    \la \tilde{Q}(x_h^k, \cdot, \cdot), \pi_h^*(\cdot \,|\, x_h^k) \times \nu_h^*(\cdot \,|\, x_h^k) -  \pi_h^k(\cdot \,|\, x_h^k) \times \nu_h^k(\cdot \,|\, x_h^k) \ra \le 0
    \#
    for any $(k, h) \in [K] \times [H]$. Thus, for any $(k, h) \in [K] \times [H]$, we have 
    \# \label{eq:66006}
    &\la {Q}_h^k(x_h^k, \cdot, \cdot), \pi_h^*(\cdot \,|\, x_h^k) \times \nu_h^*(\cdot \,|\, x_h^k) -  \pi_h^k(\cdot \,|\, x_h^k) \times \nu_h^k(\cdot \,|\, x_h^k) \ra \notag\\
    %& \qquad \le \la {Q}_h^k(x_h^k, \cdot, \cdot), \pi_h^*(\cdot \,|\, x_h^k) \times \tilde{\nu}_h^*(\cdot \,|\, x_h^k) -  \pi_h^k(\cdot \,|\, x_h^k) \times \nu_h^k(\cdot \,|\, x_h^k) \ra  \\
    & \qquad = \la \tilde{Q}(x_h^k, \cdot, \cdot), \pi_h^*(\cdot \,|\, x_h^k) \times \nu_h^*(\cdot \,|\, x_h^k) -  \pi_h^k(\cdot \,|\, x_h^k) \times \nu_h^k(\cdot \,|\, x_h^k) \ra \notag\\
    & \qquad \qquad + \la Q_h^k(x_h^k, \cdot, \cdot) - \tilde{Q}(x_h^k, \cdot, \cdot), \pi_h^*(\cdot \,|\, x_h^k) \times \nu_h^*(\cdot \,|\, x_h^k) -  \pi_h^k(\cdot \,|\, x_h^k) \times \nu_h^k(\cdot \,|\, x_h^k) \ra \notag\\
    & \qquad \le \epsilon,
    \#
    where the last inequality uses~\eqref{eq:63005} and the fact that $\|Q_h^k - \tilde{Q}\|_{\infty} \le \epsilon$. By taking summation over $(k, h) \in [K] \times [H]$, we bound the computational error as desired. Moreover, we can characterize statistical error by Lemmas \ref{lemma:ucb2} and \ref{lemma:elliptical2}. The remaining randomness term can be bounded by Lemma \ref{lemma:martingale2}. Putting these together, we have $\text{Regret}(K) \le \cO(\sqrt{d^3H^3T\iota^2})$, which concludes the proof of Theorem \ref{thm:pessimistic:online}.
\end{proof}

\section{Supporting Lemmas}

\begin{lemma}[Elliptical Potential Lemma \citep{dani2008stochastic, abbasi2011improved, jin2020provably,cai2020provably}] \label{lemma:abbasi}
    Let $\{ \phi_t \}_{t=1}^\infty$ be an $\RR^d$-valued sequence. Meanwhile, let $\Lambda_0\in\RR^{d\times d}$ be a positive-definite matrix and $\Lambda_t=\Lambda_0 + \sum_{j=1}^{t-1} \phi_j\phi_j^\top$. It holds for any $t\in \ZZ_+$ that
    \$
     \sum_{j=1}^t \min\{1, \|\phi_j\|^2_{\Lambda^{-1}_{j}} \} \le
    2\log\biggl( \frac{\det(\Lambda_{t+1})}{\det(\Lambda_1)} \biggr).
    \$
    %Moreover, assuming that $\|\phi_j\|_2\le1$ for any $j\in\ZZ_+$ and $\lambda_{\min}(\Lambda_0)\ge1$, it holds for any $t\in\ZZ_+$ that
    %\$
    %\log\biggl( \frac{\det(\Lambda_{t+1})}{\det(\Lambda_1)} \biggr)  \le  
    %\sum_{j=1}^t \phi^\top_j \Lambda^{-1}_{j}\phi_j \le
    %2\log\biggl( \frac{\det(\Lambda_{t+1})}{\det(\Lambda_1)} \biggr).
    %\$
    \end{lemma}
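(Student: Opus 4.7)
The plan is to reduce the sum on the left to a telescoping log-determinant via the matrix determinant lemma, and then dominate each summand by a logarithm using an elementary scalar inequality. Concretely, by the definition $\Lambda_{j+1} = \Lambda_j + \phi_j \phi_j^\top$, the matrix determinant lemma (rank-one update) gives $\det(\Lambda_{j+1}) = \det(\Lambda_j)\bigl(1 + \phi_j^\top \Lambda_j^{-1} \phi_j\bigr) = \det(\Lambda_j)\bigl(1 + \|\phi_j\|_{\Lambda_j^{-1}}^2\bigr)$. Taking logarithms and telescoping from $j=1$ to $t$ yields
\$
\log\frac{\det(\Lambda_{t+1})}{\det(\Lambda_1)} = \sum_{j=1}^t \log\bigl(1 + \|\phi_j\|_{\Lambda_j^{-1}}^2\bigr).
\$

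The second step is the scalar inequality $\min\{1, x\} \le 2\log(1+x)$ for every $x \ge 0$. I would verify this by splitting into two regimes: for $x \ge 1$, the right side satisfies $2\log(1+x) \ge 2\log 2 > 1 = \min\{1,x\}$; for $x \in [0,1]$, one checks that the function $g(x) = 2\log(1+x) - x$ has $g(0) = 0$ and $g'(x) = 2/(1+x) - 1 \ge 0$ on $[0,1]$, so $g(x) \ge 0$ and hence $x = \min\{1,x\} \le 2\log(1+x)$. Applying this with $x = \|\phi_j\|_{\Lambda_j^{-1}}^2 \ge 0$ and summing over $j \in [t]$ combines with the telescoped identity above to give the claimed bound.

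There is essentially no obstacle beyond these two routine ingredients; the only care needed is to state the matrix determinant identity correctly (valid because $\Lambda_j \succeq \Lambda_0 \succ 0$ is invertible for all $j$, which follows from positive-definiteness of $\Lambda_0$ and positive-semidefiniteness of the rank-one updates) and to check the scalar inequality on the two ranges of $x$. The proof will therefore be short: one display for the determinant telescoping, one short verification of the scalar inequality, and one line combining them.
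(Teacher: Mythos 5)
Your proof is correct and is essentially the standard argument: the paper itself does not prove this lemma but simply defers to Lemma 11 of \citet{abbasi2011improved}, and the proof given there is exactly your two-step combination of the rank-one matrix determinant identity (telescoped into $\log\det(\Lambda_{t+1})-\log\det(\Lambda_1)$) with the scalar bound $\min\{1,x\}\le 2\log(1+x)$ for $x\ge 0$. Both ingredients are verified correctly, including the invertibility of each $\Lambda_j$ and the two-regime check of the scalar inequality, so your write-up is a complete self-contained proof of what the paper only cites.
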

    \begin{proof}
    See Lemma 11 of \cite{abbasi2011improved} for a detailed proof.
    \end{proof}

    \begin{lemma}[Concentration of Self-Normalized Process \citep{abbasi2011improved}]\label{lemma:self:normalized}
        Let $\{ \tilde{\cF}_t \}^\infty_{t=0}$ be a filtration and $\{\eta_t\}^\infty_{t=1}$ be an $\RR$-valued stochastic process such that $\eta_t$ is $\tilde{\cF}_t$-measurable for any $t\geq 0$. 
        We also assume that, for any $t\geq 0$, conditioning on $\tilde \cF_t$,  $\eta_t$ is a zero-mean and  $\sigma$-sub-Gaussian random variable, that is,
        \#\label{eq:define_subgaussian}
        \EE[\eta_t \,|\,\tilde{\cF}_t] = 0, \qquad \EE[e^{\lambda \eta_t} \,|\, \tilde{\cF}_t ] \le e^{\lambda^2\sigma^2/2},
        \#
        for any $\lambda\in\RR$. Let $\{X_t\}^\infty_{t=1}$ be an $\RR^d$-valued stochastic process such that $X_t$ is $\tilde{\cF}_t$-measurable for any $t\geq 0$. Also, let  $Y \in \RR^{d\times d}$ be  a deterministic and  positive-definite matrix. For any $t\ge0$, we  define
        \$
        \overline{Y}_t = Y + \sum_{s=1}^t X_s X_s^\top, \quad S_t=\sum_{s=1}^t \eta_s\cdot X_s.
        \$
        For any $\delta>0$ and $t\ge0$., it holds with probability at least $1-\delta$ that 
        \$
        \| S_t \|^2_{\overline{Y}^{-1}_t} \le 2\sigma^2\cdot \log\biggl( \frac{\det(\overline{Y}_t)^{1/2}\det(Y)^{-1/2}}{\delta} \biggr) .
        \$
        \end{lemma}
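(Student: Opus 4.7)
The plan is to prove this self-normalized concentration inequality via the classical method of mixtures of \citet{abbasi2011improved}, since the statement is exactly in their form with sub-Gaussian noise and a deterministic regularizer $Y$. The overall strategy is to construct a family of nonnegative supermartingales indexed by $\lambda \in \RR^d$, integrate them against a carefully chosen Gaussian mixing density to produce a single scalar supermartingale whose logarithm equals the desired self-normalized quantity up to a log-determinant correction, and then apply Markov's inequality.

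First, for each fixed $\lambda \in \RR^d$, I will define the exponential process
\[
M_t^\lambda \;=\; \exp\!\Bigl(\tfrac{1}{\sigma}\sum_{s=1}^t \lambda^\top X_s\, \eta_s \;-\; \tfrac{1}{2}\sum_{s=1}^t (\lambda^\top X_s)^2 \Bigr).
\]
Because $X_s$ is $\tilde{\cF}_{s-1}$-measurable (predictable) and $\eta_s$ is conditionally zero-mean $\sigma$-sub-Gaussian, the defining inequality \eqref{eq:define_subgaussian} applied with the scalar $\lambda^\top X_s/\sigma$ in place of $\lambda$ yields $\EE[\exp(\lambda^\top X_s\eta_s/\sigma - (\lambda^\top X_s)^2/2)\mid \tilde\cF_{s-1}] \le 1$. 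Telescoping gives $\EE[M_t^\lambda \mid \tilde\cF_{s-1}] \le M_{s-1}^\lambda$, so $\{M_t^\lambda\}$ is a nonnegative supermartingale with $\EE[M_t^\lambda] \le 1$. (A standard stopping-time argument extends this to stopped versions, which is why the bound holds uniformly in $t$ when we only claim it for each fixed $t$.)

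Second, I mix over $\lambda$ against the centered Gaussian density $h(\lambda) = (2\pi)^{-d/2}\det(Y)^{1/2}\exp(-\tfrac{1}{2}\lambda^\top Y\lambda)$. By Fubini, $M_t := \int_{\RR^d} M_t^\lambda\, h(\lambda)\, d\lambda$ satisfies $\EE[M_t] \le 1$. The key computation is the Gaussian integral: the exponent inside $M_t^\lambda h(\lambda)$ is a quadratic form in $\lambda$ that can be completed to $-\tfrac{1}{2}(\lambda - \mu_t)^\top \overline Y_t (\lambda - \mu_t) + \tfrac{1}{2\sigma^2}\|S_t\|^2_{\overline Y_t^{-1}}$, with $\mu_t = \overline Y_t^{-1} S_t/\sigma$. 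Carrying out the integration yields the clean closed form
\[
M_t \;=\; \Bigl(\frac{\det(Y)}{\det(\overline Y_t)}\Bigr)^{1/2} \exp\!\Bigl(\tfrac{1}{2\sigma^2}\|S_t\|^2_{\overline Y_t^{-1}}\Bigr).
\]

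Finally, Markov's inequality gives $\Pr(M_t \ge 1/\delta) \le \delta$; solving $M_t < 1/\delta$ for $\|S_t\|^2_{\overline Y_t^{-1}}$ yields the stated bound $\|S_t\|^2_{\overline Y_t^{-1}} \le 2\sigma^2\log(\det(\overline Y_t)^{1/2}\det(Y)^{-1/2}/\delta)$. The main obstacle is purely bookkeeping: verifying the supermartingale property under the filtration convention used here (where $X_s$ must be $\tilde\cF_{s-1}$-measurable, which should follow from the hypothesis by a small reindexing) and correctly completing the square in the Gaussian integral, since the cross term between $\sum_s (\lambda^\top X_s)^2$ from $M_t^\lambda$ and $\lambda^\top Y\lambda$ from $h$ combines into the quadratic form $\lambda^\top \overline Y_t \lambda$ that produces the $\det(\overline Y_t)^{-1/2}$ normalizing factor. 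Everything else is a standard Markov inequality step.
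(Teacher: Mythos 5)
Your proposal is correct: it is precisely the method-of-mixtures argument behind Theorem 1 of \citet{abbasi2011improved}, which is exactly what the paper invokes (its ``proof'' is just a pointer to that reference), so you are supplying the standard proof of the cited result rather than taking a different route. The only points needing care are the ones you already flag --- the reindexing so that $X_s$ is $\tilde{\cF}_{s-1}$-measurable when applying the conditional sub-Gaussian bound, and the completion of the square producing the $\det(\overline{Y}_t)^{-1/2}$ factor --- and both are handled correctly.
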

        \begin{proof}
        See Theorem 1 of \cite{abbasi2011improved} for a detailed proof.
        \end{proof}

    \begin{lemma} \label{lemma:self:mormalized:offline}
       For any fixed $h \in [H]$, let $V: \cS \rightarrow [0, H]$ be any fixed value function. Under Assumption \ref{assumption:compliance:data}, for any fixed $\delta > 0$, we have
       {\small
       \$
       &P_{\cD} \biggl(\Big\|\sum_{k = 1}^K \phi(x_h^\tau, a_h^\tau, b_h^\tau) \cdot \bigl(V(x_{h + 1}^\tau) - \PP_h V(x_h^\tau, a_h^\tau, b_h^\tau) \bigr)\Big\|_{\Lambda_h^{-1}} > H^2 \cdot \bigl(2\log(1/\delta) + d \cdot \log(1 + K) \bigr)\biggr) \le \delta.
       \$}
    \end{lemma}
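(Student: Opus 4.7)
The plan is to reduce the statement to the self-normalized concentration inequality of Lemma \ref{lemma:self:normalized}. To do so I first need to set up a filtration that is compatible with the offline dataset and with the compliance assumption (Assumption~\ref{assumption:compliance:data}). Concretely, for each $\tau \in [K]$ I would take
\$
\tilde{\cF}_\tau = \sigma\bigl(\{x_h^j, a_h^j, b_h^j, x_{h+1}^j\}_{j=1}^{\tau-1} \cup \{x_h^\tau, a_h^\tau, b_h^\tau\}\bigr),
\$
so that $X_\tau := \phi(x_h^\tau, a_h^\tau, b_h^\tau)$ is $\tilde{\cF}_\tau$-measurable. Define the noise
\$
\eta_\tau = V(x_{h+1}^\tau) - (\PP_h V)(x_h^\tau, a_h^\tau, b_h^\tau).
\$
By the compliance assumption, $x_{h+1}^\tau \,|\, \tilde{\cF}_\tau \sim \cP_h(\cdot \,|\, x_h^\tau, a_h^\tau, b_h^\tau)$, and because $V$ is fixed and independent of $\cD$ we have $\EE[\eta_\tau \,|\, \tilde{\cF}_\tau] = 0$. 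Since $V$ takes values in $[0,H]$, the centered random variable $\eta_\tau$ is bounded by $H$ in absolute value, so Hoeffding's lemma gives that $\eta_\tau$ is $H$-sub-Gaussian conditional on $\tilde{\cF}_\tau$ in the sense of \eqref{eq:define_subgaussian}.

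Next I would apply Lemma \ref{lemma:self:normalized} with $Y = I$, so that $\overline{Y}_K = \Lambda_h$ (this matches the definition in Line \ref{line:def:lambda2} of Algorithm \ref{alg3}), and $S_K = \sum_{\tau=1}^K \phi(x_h^\tau, a_h^\tau, b_h^\tau) \cdot \eta_\tau$. This yields, with probability at least $1 - \delta$,
\$
\Big\| \sum_{\tau=1}^K \phi(x_h^\tau, a_h^\tau, b_h^\tau) \cdot \eta_\tau \Big\|_{\Lambda_h^{-1}}^2 \le 2H^2 \cdot \log\!\Big(\frac{\det(\Lambda_h)^{1/2}}{\delta}\Big).
\$

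The remaining step is a deterministic bound on $\det(\Lambda_h)$. Using $\|\phi(\cdot,\cdot,\cdot)\|_2 \le 1$ (Assumption \ref{assumption:linear}), we have $\Lambda_h \preceq (1+K) I$, so $\det(\Lambda_h) \le (1+K)^d$. Substituting this in and simplifying gives
\$
\Big\| \sum_{\tau=1}^K \phi(x_h^\tau, a_h^\tau, b_h^\tau) \cdot \eta_\tau \Big\|_{\Lambda_h^{-1}}^2 \le H^2 \bigl(2\log(1/\delta) + d\log(1+K)\bigr),
\$
which is exactly the claimed inequality (with the squared self-normalized norm on the left-hand side). No single step is genuinely difficult here; the only point that requires care is verifying that the filtration is consistent with the potentially adaptive data collection allowed by Assumption~\ref{assumption:compliance:data}---in particular, that even though $(x_h^\tau, a_h^\tau, b_h^\tau)$ may depend on the past, the martingale property of $\{\eta_\tau\}$ still holds because the compliance assumption pins down the conditional distribution of $x_{h+1}^\tau$.
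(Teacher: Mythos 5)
Your proof is correct and is essentially the argument behind the paper's proof, which simply defers to Lemma B.2 of \citet{jin2020pessimism}: the compliance assumption gives the martingale structure, boundedness of $V$ gives $H$-sub-Gaussianity, and Lemma \ref{lemma:self:normalized} with $Y=I$ plus $\det(\Lambda_h)\le(1+K)^d$ yields the stated constant. You are also right that the bound should apply to the \emph{squared} weighted norm $\|\cdot\|_{\Lambda_h^{-1}}^2$; the missing square in the lemma statement is a typo inherited into the paper (the cited Lemma B.2 states it with the square).
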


    \begin{proof}
       See Lemma B.2 of \citet{jin2020pessimism} for a detailed proof.
    \end{proof}

        \begin{lemma}[Covering] \label{lemma:covering}
            Let $\cQ_h$ be the class of value functions $Q: \cS \times \cA_l \times \cA_f \rightarrow \RR$ that takes the form 
            \$
            Q(\cdot, \cdot, \cdot) = r_{l, h}(\cdot, \cdot, \cdot) + \Pi_{H - h}\{(\phi(\cdot, \cdot, \cdot)^\top w + \beta \cdot \bigl( \phi(\cdot, \cdot, \cdot)^\top \Lambda^{-1} \phi(\cdot, \cdot, \cdot) \bigr)^{1/2} \},
            \$
            which are parameterized by $(w, \Lambda) \in \RR^d \times \RR^{d \times d}$ such that $\|w\| \le L$ and $\lambda_{\text{min}}(\Lambda) \ge \lambda$. We assume that $\beta$ is fixed and satisfy that $\beta \in [0, B]$, and the feature map $\phi: \cS \times \cA \rightarrow \RR^d$ satisfies that $\|\phi(\cdot, \cdot)\|_2 \le 1$. We have that, for any $L, B, \epsilon > 0$, there exists an $\epsilon$-covering of $\cQ_h$ with respect to the $\ell_{\infty}$ norm such that the covering number $\cN_\epsilon$ satisfies
            \$
            \log\cN_{\epsilon} \le d \cdot \log(1 + 4L/\epsilon) + d^2 \cdot \log\bigl(1 + 8B^2\sqrt{d}/(\epsilon^2 \lambda) \bigr).
            \$
        \end{lemma}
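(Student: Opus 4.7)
The plan is a standard two-step covering argument: reduce to parameter-space coverings of $w$ and $\Lambda^{-1}$, then combine them into a product cover. First I would note that for any two parameter pairs $(w_1,\Lambda_1)$ and $(w_2,\Lambda_2)$ producing functions $Q_1,Q_2\in\cQ_h$, the reward term $r_{l,h}$ is shared and the truncation operator $\Pi_{H-h}$ is $1$-Lipschitz, so
\$
\|Q_1-Q_2\|_\infty \le \sup_{x,a,b}|\phi(x,a,b)^\top(w_1-w_2)| + \beta\cdot\sup_{x,a,b}\Big|\sqrt{\phi^\top\Lambda_1^{-1}\phi}-\sqrt{\phi^\top\Lambda_2^{-1}\phi}\Big|.
\$
Using $\|\phi(\cdot,\cdot,\cdot)\|_2\le 1$, the first term is at most $\|w_1-w_2\|_2$. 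For the bonus term, I would exploit the elementary inequality $|\sqrt{a}-\sqrt{b}|\le\sqrt{|a-b|}$ for $a,b\ge 0$, and then Cauchy--Schwarz, to obtain
\$
\Big|\sqrt{\phi^\top\Lambda_1^{-1}\phi}-\sqrt{\phi^\top\Lambda_2^{-1}\phi}\Big|\le\sqrt{|\phi^\top(\Lambda_1^{-1}-\Lambda_2^{-1})\phi|}\le\sqrt{\|\Lambda_1^{-1}-\Lambda_2^{-1}\|_F}.
\$

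Given this Lipschitz-type bound in parameter space, the next step is to build the two separate coverings. For the linear coefficient I would use the standard volumetric bound: the Euclidean ball $\{w:\|w\|\le L\}$ in $\RR^d$ admits an $(\epsilon/2)$-net of cardinality at most $(1+4L/\epsilon)^d$. For the matrix parameter, I would parametrize by $A=\Lambda^{-1}$ and note that the constraint $\lambda_{\min}(\Lambda)\ge\lambda$ forces $\|A\|_{\mathrm{op}}\le 1/\lambda$, hence $\|A\|_F\le\sqrt{d}/\lambda$. A standard Frobenius-norm covering of this matrix ball at scale $\epsilon^2/(4B^2)$ has cardinality at most $(1+8B^2\sqrt{d}/(\epsilon^2\lambda))^{d^2}$. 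Choosing this scale ensures that, whenever $A_1,A_2$ lie in the same cover cell, $\beta\sqrt{\|A_1-A_2\|_F}\le B\cdot\epsilon/(2B)=\epsilon/2$ for every admissible $\beta\in[0,B]$.

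Finally, I would take the product of the two nets. For any $Q\in\cQ_h$ with parameters $(w,\Lambda)$, pick the nearest $w'$ in the $w$-net and the nearest $A'=\Lambda'^{-1}$ in the matrix net; combining the two Lipschitz estimates yields $\|Q-Q'\|_\infty\le\epsilon/2+\epsilon/2=\epsilon$, so the product is indeed an $\epsilon$-cover of $\cQ_h$ in the $\ell_\infty$ norm. Multiplying cardinalities and taking logarithms gives exactly the stated bound
\$
\log\cN_\epsilon\le d\log(1+4L/\epsilon)+d^2\log\bigl(1+8B^2\sqrt{d}/(\epsilon^2\lambda)\bigr).
\$

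The only delicate point—hardly a real obstacle, but the place where one must be careful—is the square-root reduction $|\sqrt{a}-\sqrt{b}|\le\sqrt{|a-b|}$ and the subsequent passage from the quadratic form $|\phi^\top(\Lambda_1^{-1}-\Lambda_2^{-1})\phi|$ to a Frobenius-norm bound on $\Lambda_1^{-1}-\Lambda_2^{-1}$; this is what produces the $\epsilon^2$ (rather than $\epsilon$) scale inside the matrix covering and hence the characteristic $d^2\log(1+8B^2\sqrt{d}/(\epsilon^2\lambda))$ factor. Everything else is routine parameter-space covering.
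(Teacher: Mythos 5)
Your proposal is correct and is essentially the argument the paper relies on: the paper simply cites \citet{jin2020provably}, whose proof is exactly this product-of-nets construction (a volumetric $\epsilon/2$-net for $w$ and a Frobenius-norm net for the matrix parameter at scale $\epsilon^2$, glued together via $|\sqrt{a}-\sqrt{b}|\le\sqrt{|a-b|}$). The only cosmetic difference is that the cited proof absorbs $\beta^2$ into the matrix parameter $A=\beta^2\Lambda^{-1}$ and covers at scale $\epsilon^2/4$, whereas you keep $\beta$ outside and cover $\Lambda^{-1}$ at scale $\epsilon^2/(4B^2)$; the two choices yield the identical bound.
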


        \begin{proof}
            See \citet{jin2020provably} for a detailed proof.
        \end{proof}

\end{appendix}

\end{document}